\documentclass{article}

\PassOptionsToPackage{square,numbers,sort&compress}{natbib}
\usepackage[preprint]{neurips_2026}

\usepackage{titlesec}
\titlespacing*{\paragraph}{\parindent}{0.25ex}{1ex}
\titlespacing*{\section}{0pt}{3pt}{3pt}
\titlespacing*{\subsection}{0pt}{3pt}{3pt}

\usepackage{floatrow}
\usepackage[utf8]{inputenc} %
\usepackage[T1]{fontenc}    %
\usepackage{hyperref}       %
\hypersetup{
    colorlinks,
    linkcolor={red!50!black},
    citecolor={blue!50!black},
    urlcolor={blue!80!black}
}
\usepackage{url}            %
\usepackage{booktabs}       %
\usepackage{amsfonts}       %
\usepackage{amsmath}
\usepackage{nicefrac}       %
\usepackage{microtype}      %
\usepackage{graphicx}
\usepackage{cancel}
\usepackage{color,soul}
\usepackage{mathtools}

\usepackage{wrapfig}
\usepackage{amssymb}
\usepackage{pgf}
\usepackage{array}
\usepackage{placeins}
\usepackage{tabularx}
\usepackage{makecell}
\usepackage[ruled,vlined]{algorithm2e}
\usepackage{bbold}
\usepackage{amsthm}
\usepackage{caption}
\usepackage{subcaption}
\usepackage{paralist}
\usepackage{multirow}
\usepackage{booktabs}
\usepackage{minitoc}
\usepackage{enumitem}
\usepackage{xcolor}
\usepackage{bm}
\usepackage[most]{tcolorbox}
\usepackage[disable,color=red!20,textsize=small]{todonotes}
\usepackage{marginnote}
\usepackage{etoolbox}
\usepackage{tcolorbox}

\newtoggle{lmargin}
\newcommand{\alternatingtodo}[2][]{%
    \iftoggle{lmargin}%
    {%
        \todo[#1]{#2}%
        \togglefalse{lmargin}%
    }{%
        {%
            \let\marginpar\marginnote%
            \reversemarginpar%
            \todo[#1]{#2}%
        }%
        \toggletrue{lmargin}%
    }%
    \ignorespaces%
}

\usepackage{empheq}
\definecolor{light-gray}{gray}{0.95}
\definecolor{lightgreen}{HTML}{90EE90}
\definecolor{og}{HTML}{4B277F}

\renewcommand\footnotemark{}

\newtheorem{theorem}{Theorem}
\newtheorem*{theorem*}{Theorem}
\newtheorem{definition}[theorem]{Definition}

\newtheorem{corollary}[theorem]{Corollary}

\SetKwComment{Comment}{$\triangleright$}{}

\newtheorem{proposition}[theorem]{Proposition}

\newtheorem{example}[theorem]{Example}

\newcommand{\R}{\mathbb{R}}
\newcommand{\E}{\mathbb{E}}
\newcommand{\tr}{\operatorname{Tr}}

\newcommand{\diag}{\operatorname{diag}}
\newcommand{\Loss}{\mathcal{L}}

\newcommand{\Fre}{\mathsf{F}}

\newcommand{\eps}{\varepsilon}

\newcommand{\tcmp}{$T_{\text{CMP}}$}
\newcommand{\tadd}{$T_{\text{ADD}}$}

\definecolor{oran}{HTML}{DEAA55}
\definecolor{purp}{HTML}{7C6B99}

\hypersetup{colorlinks=true, citecolor=primary, linkcolor=primary, urlcolor=primary}

\providecommand{\aut}[1]{\textbf{#1}}
\providecommand{\af}[1]{{\small #1}}

\definecolor{primary}{HTML}{7B2D26} %
\definecolor{secondary}{HTML}{B8973A} %
\providecommand{\afn}[1]{\textcolor{primary}{$^{#1}$}}

\newcommand{\goodfiremark}{\raisebox{0.5pt}{\hspace{0.5mm}\includegraphics[height=6pt]{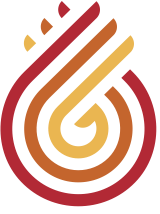}}}

\newcommand{\goodfireaff}{%
  \includegraphics[height=14pt]{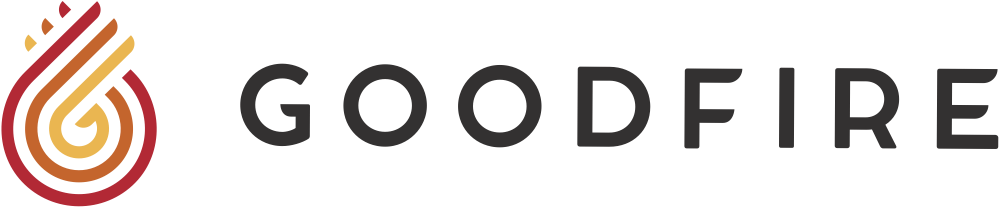}
}

\newcommand{\authorentry}[2]{\aut{#1}\afn{#2}}
\newcommand{\afflabel}[2]{\afn{#1}\af{#2}}
\newcommand{\authorsep}{\quad}

\newtoggle{goodfireonly}
\togglefalse{goodfireonly}  %
\newcommand{\extline}[1]{\iftoggle{goodfireonly}{}{#1}}
\newcommand{\extaff}[1]{\iftoggle{goodfireonly}{}{#1}}

\newcommand{\paperauthors}{%
\authorentry{Jing Huang}{\goodfiremark\extaff{,a}} \authorsep
\authorentry{Daniel Wurgaft}{\goodfiremark\extaff{,a}} \authorsep
\authorentry{Rachit Bansal}{\extaff{b}} \authorsep
\authorentry{Laura Ruis}{\extaff{c}} \\
\vspace{2pt}
\authorsep
\authorentry{Naomi Saphra}{\extaff{b}} \authorsep
\authorentry{David Alvarez-Melis}{\extaff{b}} \authorsep
\authorentry{Andrew Lampinen}{\extaff{d}} \\
\authorentry{Christopher Potts}{\extaff{a}} \authorsep
\authorentry{Ekdeep Singh Lubana}{\goodfiremark} \authorsep
\vspace{4pt}\\
\vspace{3.5mm}
\goodfireaff \vspace{-5pt}\\
\extline{%
  \afflabel{a}{Stanford University} \authorsep
  \afflabel{b}{Kempner Institute at Harvard University} \authorsep
  \afflabel{c}{MIT} \authorsep
  \afflabel{d}{Anthropic} \authorsep
}
}

\author{\paperauthors}

\title{Why Larger Models Learn More: Effects of Capacity, Interference, and Rare-Task Retention}

\begin{document}

\maketitle

\vspace{-10pt}
\begin{abstract}
Larger models learn tasks smaller models do not.
What drives this phenomenon?
We develop a simple phenomenological argument that power-law scaling already suggests that a larger model will be able to learn a part of the data distribution that a smaller model fails to learn, even with infinite training data.
To validate this claim and identify its causes, we study the effects of model scaling on a synthetic setup consisting of a mixture of tasks that show monotonic scaling curves.
The results point to a data-induced competition over resources (neurons). Specifically, smaller models allocate their neurons to high frequency or low complexity tasks, and so they learn solutions that perform poorly on rare and complex tasks. Moreover, this happens even when solutions capable of expressing the desired task exist.
We then assess how a larger model circumvents this data-centric bottleneck, finding that it traces to a reduced interference mechanism: larger models can allocate enough resources to common tasks that the gradient updates for those tasks become weak, which means that they do not overwrite rare-task features as they slowly accumulate.
Finally, to further validate these claims, we pretrain OLMo models (4M to 4B parameters) on novel tasks of varying frequency and complexity. The results mirror those from our synthetic data experiments: only the larger OLMo models learn the infrequent and complex tasks, and these larger models embed more task features in their representations and show less gradient interference between tasks.
Overall, we offer a data-centric account of why larger models learn tasks that smaller models fail to. 
This helps explain why larger models are better in practice, and it can inform practical questions concerning model sizing and training data mixtures.
\end{abstract}

\section{Introduction}
\label{sec:intro}

Modern machine learning is celebrated for its massive generalist models, which are capable of handling arbitrary inputs in diverse and complex environments~\cite{singh2025openai, mythos, gemini3, deepseekv3, deepseekv4, team2026kimi,kwameasuring, glazer2024frontiermath, arc3, merrill2026terminal}.
Based on the empirical finding that larger models often excel where smaller\footnote{We use the terms ``larger'' and ``smaller'' informally here but develop a precise relational definition of these terms in Sec.~\ref{sec:categorization}.} models show random-chance performance, prior work has claimed that the ability to solve certain critical tasks \textit{only} emerges in larger models~\cite{rsp, oairisk, simeoni2025dinov3, hu2023unlock, wei2022emergent, wei2022emergent_list_of_137, arora2023theory, du2024understanding, wei2023larger}.
Such arguments have fueled the drive towards %
increased scaling.
However, given the large training and inference costs that large models impose, it is worth identifying precisely what marginal benefits are unlocked by larger models and whether scaling parameters is the sole way of realizing those benefits.

Our argument begins from the observation that \textbf{power-law scaling \cite{kaplan2020scaling, hestness2017deep, rosenfeld2019constructive} already 
suggests that there is a regime in which a smaller model fails to learn parts of a data mixture that a larger model succeeds on, even under asymptotic training} (Fig.~\ref{fig:categorization}, Sec.~\ref{sec:categorization}).
This suggests that larger models enjoy a genuine advantage that may allow them to learn task distributions that smaller models will inevitably fail to learn within the same training setup.
Importantly, this is not an argument that larger models are simply more sample efficient \cite{henighan2020scaling, hernandez2021scaling, rae2021scaling, alabdulmohsin2022revisiting, du2024understanding, grattafiori2024llama, hoffmann2022training, pearce2024reconciling}, but rather that smaller models suffer from a more fundamental limitation even under infinite training regimes.

\begin{wrapfigure}[29]{r}{0.5\textwidth}
\vspace{-12pt}
\begin{center}
\includegraphics[width=0.94\linewidth]{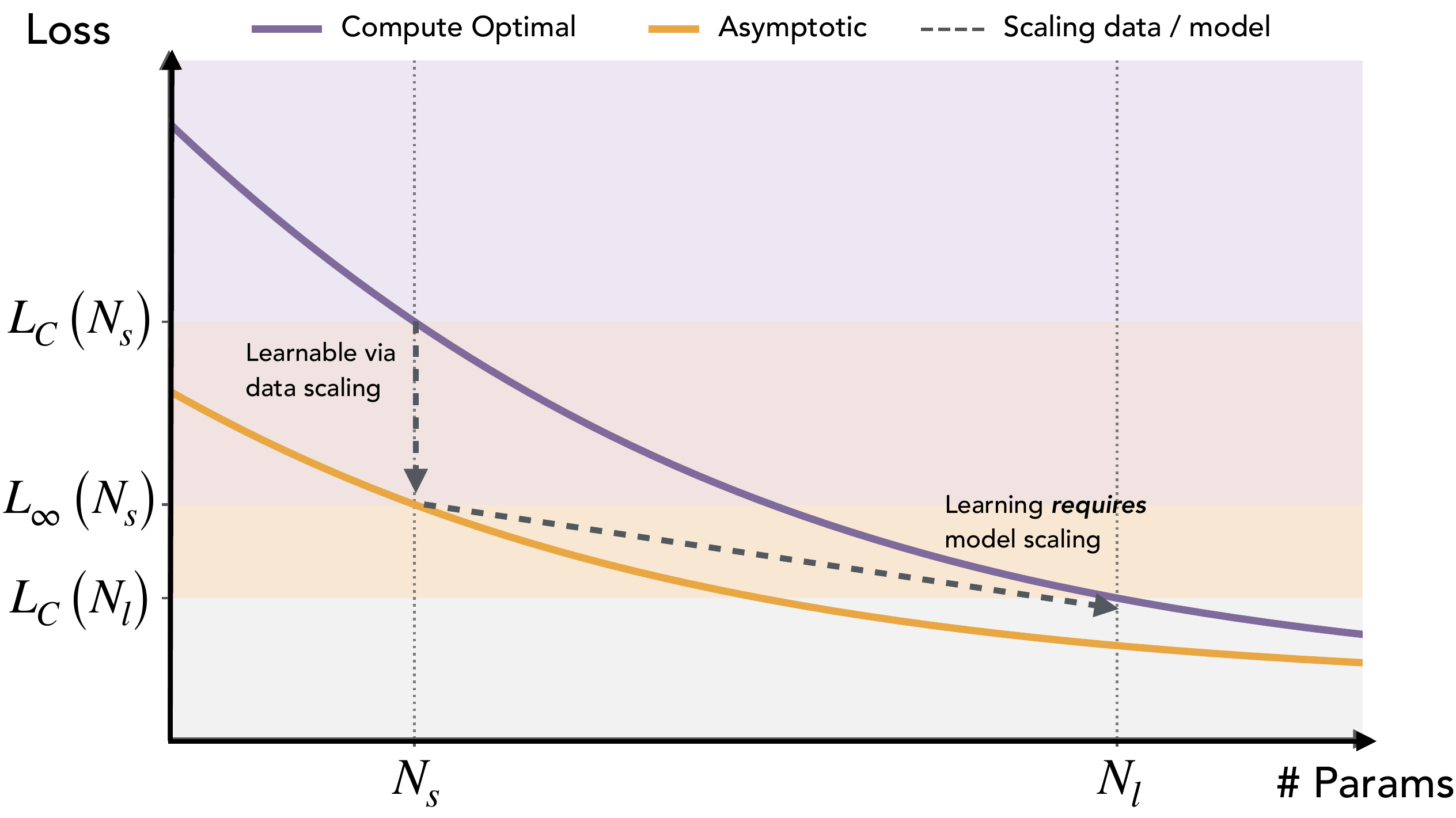}
\end{center}
\vspace{-15pt}
\caption{\label{fig:categorization}\textbf{Learning a part of the distribution requires model scaling.} 
    Compare the loss curves for compute-optimal scaling with the one following an infinite resource regime (labeled asymptotic). 
    The region labeled purple denotes the amount of loss both a smaller model with $N_s$ parameters and a larger model with $N_l$ parameters are able to achieve with respect to a random baseline under finite resources. 
    We call loss reduction accessible to the smaller model under infinite compute, but that a larger model would get at in a more resource efficient manner (i.e., under finite compute), \textbf{{\color{purp!150} learnable via data scaling}}. 
    If there remains a part of the loss that is achieved by the larger model under finite resources, but that a smaller model even under asymptotic data scaling is unable to reach, then we call this part \textbf{{\color{oran!100}learned via model scaling}}. 
    This part of the distribution is explained by a larger model by virtue of its larger size.
}
\end{wrapfigure}
To validate this prediction and identify its causes, we analyze a setting involving a mixture of regression tasks.
In this, we are inspired by much recent work using toy tasks to pinpoint the effects of scaling~\cite{bordelon2024dynamical, bordelon2025feature, bahri2024explaining, lin2024scaling, michaud2024quantization, maloney2022solvable, lubana2024percolation, cagnetta2025learning, cagnetta2025scaling, cagnetta2026deriving, edelman2023pareto}.
Furthermore, all of the individual tasks in our setting are learnable by the models under consideration, capturing the idea that tasks smaller models fail to learn can still be instilled into them via post-training~\cite{lambert2024tulu, hubotter2026reinforcement, deepdistill, shao2024deepseekmath, xin2024deepseek, agarwal2024policy, zhao2026self, tang2025beyond, team2025kimi, blakeney2022reduce}.
Correspondingly, mere expressivity notions are not the issue; instead, the question concerns the ability of these models to learn complex task distributions from data.
These experiments lead to two key findings, as described below.

First, \textbf{scaling enables learning rare and complex tasks (Sec.~\ref{sec:synthetic_larger_rarer}).} 
Our experimental setting defines controlled manipulations of task frequency and complexity.  We present an analytic argument that only larger models will (on average) learn the rare and complex tasks present in this setting, and we verify this analysis experimentally (Fig.~\ref{fig:phases}). 

Second, \textbf{reduced competition for resources enables learning rare and complex tasks (Sec.~\ref{sec:interfere})}.
Here, we extend our formal analysis to show that, upon observation of samples from a rare task, model parameters update, but only larger models, by virtue of having more parameters and hence less gradient interference, are able to retain memory of a previously observed batch of data from a rare task.
Thus, when the next batch of rare-task data comes in, the larger model builds on its prior knowledge, which ultimately leads to success despite the impoverished learning signal. In contrast, the smaller model is forced to start from scratch and consequently fails.
We again verify these findings experimentally in our regression setting (Figs.~\ref{fig:residual_controls_learning} and \ref{fig:competition}).

Finally, we \textbf{validate the above theoretical arguments in real LLMs (Sec.~\ref{sec:real}).} 
Specifically, we pretrain OLMo models (4M to 4B parameters) on the Dolma~v1.7 corpus with completely novel tasks injected at controlled frequency.
We find that only the larger OLMo models are able to learn the infrequent and complex tasks (Sec.~\ref{sec:olmo_loss}). Furthermore, these OLMo models mirror our toy-task models in deeper ways: larger OLMo models have more task features embedded in their representations (Sec.~\ref{sec:olmo_repr}) and show less gradient interference (Sec.~\ref{sec:olmo:gradients}).
Beyond supporting our theoretical claims, these results can provide practical guidance to large-scale model training efforts.

Overall, the data-centric nature of our analysis suggests that understanding why larger models learn more requires not only asking what they can represent, but also what is learnable under gradient-based optimization from a given data mixture.

\section{A Phenomenological Model Predicts Larger Models Learn More}
\label{sec:categorization}

Neural network scaling is known to predictably and monotonically improve loss~\cite{hoffmann2022training, kaplan2020scaling, qiu2025scaling}:
\begin{equation}
    L(N, D) = L_0 + \frac{A}{N^{\alpha}} + \frac{B}{D^{\beta}},
\end{equation}
where $L_0$ denotes the irreducible loss, $A, B$ are constants, and $\alpha, \beta$ are parameter / data exponents ($\alpha \approx 0.46$ and $\beta \approx 0.51$ for Chinchilla-scaling~\cite{hoffmann2022training}).
Training in a compute-optimal manner, i.e., finding the model size and data configuration that helps achieve the minimum loss at a given compute budget $C$, gives us 
\begin{equation*}
    L_{\text{C}}(N) \propto N^{-\gamma},
\end{equation*}
where $\gamma = 0.34$, and $L_{\text{C}}(N)$ denotes the optimum loss achieved when training a model with $N$ parameters under resource constraints. 
The relation shows larger models are expected to achieve a smaller loss.
However, resource-constrained training by itself does not inform what a model can actually express.
\textit{Specifically, even though a smaller model may have a worse compute-optimal loss, we do not know if it is fundamentally incapable of achieving the same loss as the larger model.}
To assess that statement, we must evaluate a model's loss under asymptotic resources (i.e., infinite data):\footnote{We note power-law scaling need not hold asymptotically~\cite{paquette2024, bordelon2025feature}, which is why we call this argument phenomenological. It motivates the subsequent, rigorous claims.} 
\begin{equation*}
    L_{\infty}(N) \propto N^{-\alpha}.
\end{equation*}
If $\alpha > \gamma$, as is the case in practice, we again see gains from merely scaling the model size.
That is, \textit{the asymptotic loss achieved by a larger model is better than the smaller one.} 
This indicates there is a part of the training distribution a smaller model, despite observing infinite data, fails to learn.
Based on this phenomenological argument, we define the following.

\begin{definition}[\textbf{{\color{purp!150} Learnable via data scaling}}]
\label{def:data_scaling}
Consider a target model with $N_l$ number of parameters that we call ``large''.
We say a ``smaller'' model, i.e., for which parameter count $N_s < N_l$, can recover the loss of a larger model via data scaling if $L_{C}(N_s) - L_C(N_l) > 0$, but $L_{\infty}(N_s) - L_C(N_l) < 0$.
\end{definition}
Def.~\ref{def:data_scaling} thus captures the scenario put forward in Sec.~\ref{sec:intro}.
That is, the smaller model may in fact be just \textbf{undertrained}: the larger model learns more sample efficiently and reduces loss faster, but a smaller model can eventually catch up~\cite{henighan2020scaling, hernandez2021scaling, rae2021scaling, alabdulmohsin2022revisiting, du2024understanding, grattafiori2024llama, hoffmann2022training, pearce2024reconciling}.
Correspondingly, the marginal ability of a larger model to explain the data distribution (i.e., the loss) can be recovered by a smaller model merely observing more data. 
Nevertheless, there exist regimes where data scaling will not suffice, as described next.

\begin{definition}[\textbf{{\color{oran!100} Learnable via model scaling}}]
\label{def:model_scaling}
Consider a target model with $N_l$ number of parameters that we call ``large''.
For a small scalar value $\epsilon$, we define $N_s^*(\epsilon)$ as the largest ``small'' model if $L_{\infty}(N_s^*(\epsilon)) - L_C(N_l) > \epsilon$.
That is, even asymptotically, the smallest model never reaches the same loss as the large model.
Correspondingly, for a given model size $N$, we call it  ``small'' if $N < N_s^*(\epsilon)$ and say recovering the loss of the larger model requires model scaling.
\end{definition}
This latter scenario thus captures the case where, when two models with parameter counts $N_s, N_l$, with $N_s < N_l$, are trained, there is truly a marginal improvement for explaining the data that can be attributed to the larger model having more parameters.
This is the most interesting case that warrants further study: what is it about the data that only a larger model can learn, such that the smaller model cannot, even after observing infinite data?
How precisely does having more parameters aid this learning?
We aim to answer these questions in the following sections.

\section{Scaling Allows Learning Rare Tasks by Reducing Gradient Interference}
\label{sec:synthetic}

Our phenomenological argument in Sec.~\ref{sec:categorization} motivates the claim that larger models are likely to learn a part of the data distribution smaller models will fail to learn.
We next aim to get more concrete about this claim.
Specifically, we exploit the fact that our argument is merely based on monotonic (power-law) scaling---a phenomenon even synthetic tasks can recapitulate~\cite{bordelon2024dynamical, bordelon2025feature, bahri2024explaining, lin2024scaling, michaud2024quantization, maloney2022solvable, cagnetta2025learning, cagnetta2025scaling, cagnetta2026deriving, edelman2023pareto}.
Such tasks have in fact been used in prior work to make accurate predictions about scaling behavior for large-scale models~\cite{qiu2025scaling, paquette2024}.
We thus follow this line of work and develop a multi-task learning setup that helps assess which tasks a larger model can learn but a smaller model cannot.
We generalize our claims to an off-the-shelf language model pretraining pipeline~\cite{olmo20242olmo2furious} in Sec.~\ref{sec:real}, finding the core hypotheses derived out of this toy setting hold true on even a large-scale training pipeline.

\paragraph{Data.}
We consider a multi-task learning setup where samples are drawn from a mixture of $K$ linear regression tasks. 
Specifically, the $k^{\text{th}}$ task is assumed to appear with \textit{frequency} $\pi_k >0$, such that $\sum_k \pi_k = 1$, and has \textit{covariance} $C_k = B_k \Lambda_k B_k^\top = \sum_{j \ge 1} \lambda_{k,j}\, b_{k,j} b_{k,j}^\top$.
Here, the ``feature matrix'' $B_k=[b_{k,1},b_{k,2},\ldots]$ is assumed to have orthonormal columns; $\Lambda_k = \diag(\lambda_{k,1},\lambda_{k,2},\ldots)$ with $\lambda_{k,1}\ge \lambda_{k,2}\ge \cdots \ge 0$; and different tasks occupy orthogonal blocks, i.e., $B_k^\top B_\ell = 0$ for $k\neq \ell$.
If the spectrum $\{\lambda_{k,j}\}$ decays slowly, the task requires more directions for producing the corresponding target---we can thus compare the \textit{relative complexity} of two tasks by comparing the rate at which their spectra decay.
Compared to prior work studying theory of scaling laws based on toy regression tasks, we emphasize that our setup involves the learning of multiple tasks simultaneously.

\paragraph{Teacher / Student Models.}
For a given input $x\sim\mathcal{N}(0,I)$, the teacher for task $k$ is defined as $y_k = \Lambda_k^{1/2} B_k^\top x$. 
The student uses a shared width-$N$ encoder $U\in\R^{d\times N}$, $U^\top U=I$, with projector $P_U = UU^\top$, together with task-specific linear decoders $D_k$ to discern between tasks.
Correspondingly, the student prediction is $\hat y_k = D_k U^\top x$.
The total mixture loss is the weighted sum $\Loss_N(U)=\sum_{k=1}^K \pi_k\ell_k(U)$, where $\ell_k(U)=\E\big[\|y_k-D_kU^\top x\|_2^2\big]$ is loss of the $k^{\text{th}}$ task.
Note that herein, since the optimal decoder admits a closed-form solution $D_k^* = \Lambda_k^{1/2} B_k^\top U$, we solely analyze the dynamics of the encoder, which produces features used by the student for making predictions.

\subsection{Larger Models Learn Rarer, More Complex Tasks}
\label{sec:synthetic_larger_rarer}
In order to narrow down a mechanism that explains \textit{how} larger models may be able to learn more, we must first identify precisely \textit{what} it is that a larger model learns but a smaller one fails to.
We begin with answering this question in our toy setup.

\begin{theorem}[Features are Learned in Order of Utility]
\label{thm:capacity}
For a given $U$, the mixture loss reduces to $L_N(U) = \tr(M) - \tr(U^\top M U)$, where $M := \sum_{k=1}^K \pi_k C_k$. 
Hence, a width-$N$ minimizer spans the top-$N$ eigenspace of $M$, whose eigenvalues are defined by the weighted per-task spectra: 
\begin{equation}
u_{k,j} := \pi_k \lambda_{k,j}.
\end{equation}
Thus, the optimal encoder keeps the $N$ features $(k,j)$ with largest $u_{k,j}$---we call these terms utilities. 
This implies if $n_k(N)$ denotes the number of retained features from task $k$, then $\ell_k^*(N) = \sum_{j>n_k(N)} \lambda_{k,j}$.
Conversely, the minimum width at which a model learns at least $m$ features for all tasks is $N^\ast(m) = \min\!\bigl\{\, N \,:\, n_k(N) \ge m \bigr\}$.
\end{theorem}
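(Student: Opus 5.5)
The plan is to substitute the optimal decoder, reduce the mixture loss to a trace expression, and then invoke Ky Fan's maximum principle.

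\textbf{Per-task loss.} Fix $U$ with $U^\top U=I$ and $x\sim\mathcal N(0,I)$, and write $T_k:=\Lambda_k^{1/2}B_k^\top$ so that $y_k=T_kx$. Then
\[
\ell_k(U)=\E\|(T_k-D_kU^\top)x\|^2=\|T_k-D_kU^\top\|_\Fre^2 .
\]
Minimizing over $D_k$ is a least-squares problem with orthonormal regressor $U^\top$, so $D_k^*=T_kU$. This matches the closed form stated in the setup. Substituting gives
\[
\ell_k(U)=\|T_k(I-P_U)\|_\Fre^2=\tr\bigl(T_k^\top T_k(I-P_U)\bigr).
\]
Since $T_k^\top T_k=B_k\Lambda_kB_k^\top=C_k$, and $P_U$ is an idempotent projector,
\[
\ell_k(U)=\tr(C_k)-\tr(U^\top C_kU).
\]

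\textbf{Mixture loss.} Weighting by $\pi_k$ and summing over tasks yields $L_N(U)=\tr(M)-\tr(U^\top MU)$ with $M=\sum_k\pi_kC_k$. This is the first claim.

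\textbf{Spectrum of $M$.} The blocks $B_k$ are mutually orthogonal with orthonormal columns, so $\{b_{k,j}\}$ is an orthonormal family. Hence $M=\sum_{k,j}\pi_k\lambda_{k,j}\,b_{k,j}b_{k,j}^\top$ is an eigendecomposition. Its eigenvectors are the $b_{k,j}$, with eigenvalues $u_{k,j}=\pi_k\lambda_{k,j}$, together with zeros on the orthogonal complement.

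\textbf{Optimal encoder.} Minimizing $L_N$ is equivalent to maximizing $\tr(U^\top MU)$ over $U^\top U=I_N$. By Ky Fan's maximum principle, the maximum equals the sum of the $N$ largest eigenvalues and is attained by any $U$ spanning a top-$N$ eigenspace. So an optimal $U$ spans the $b_{k,j}$ with the $N$ largest utilities.

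There is a subtlety here. Ky Fan only says the minimizer spans \emph{a} top-$N$ eigenspace. When utilities tie at the cutoff, a minimizer could mix eigenvectors within the tied eigenspace. I handle this by stating the result for the minimizer spanned by the basis vectors, or by assuming distinct utilities at the threshold, which is how I read the statement's "keeps the $N$ features".

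\textbf{Per-task loss at the optimum.} Take the basis-aligned minimizer and let $n_k(N)$ be the number of kept features from task $k$. Because $\lambda_{k,j}$ is nonincreasing in $j$, the kept features of task $k$ are its first $n_k(N)$ directions (ties again resolved this way). Then
\[
\tr(U^\top C_kU)=\sum_{j\le n_k(N)}\lambda_{k,j},
\]
since each $b_{k,j}$ is either in $\mathrm{span}(U)$ or orthogonal to it. Therefore $\ell_k^*(N)=\sum_{j>n_k(N)}\lambda_{k,j}$.

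\textbf{Minimum width.} $N^*(m)$ is a definition, read with "for all $k$" inside the minimum. It is well-posed because $n_k(N)$ is nondecreasing in $N$: growing $N$ only adds features further down the ordered utility list.

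The only real obstacle is the tie handling at the cutoff. The rest is the standard trace computation followed by Ky Fan.
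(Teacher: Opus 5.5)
Your proposal is correct and follows the paper's proof essentially step for step: the optimal decoder $D_k^*=\Lambda_k^{1/2}B_k^\top U$, the reduction to $\ell_k(U)=\tr((I-P_U)C_k)$, Ky Fan's maximum principle, and reading off the eigenpairs $(b_{k,j},u_{k,j})$ of $M$. Your explicit handling of ties at the cutoff, including restricting to the basis-aligned minimizer so that $n_k(N)$ is an integer count, is a small refinement that the paper's proof glosses over when it asserts that ``any minimizer spans the top-$N$ eigenspace.''
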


\begin{figure*}[!t]
    \centering
    \vspace{-15pt}
    \includegraphics[width=\linewidth]{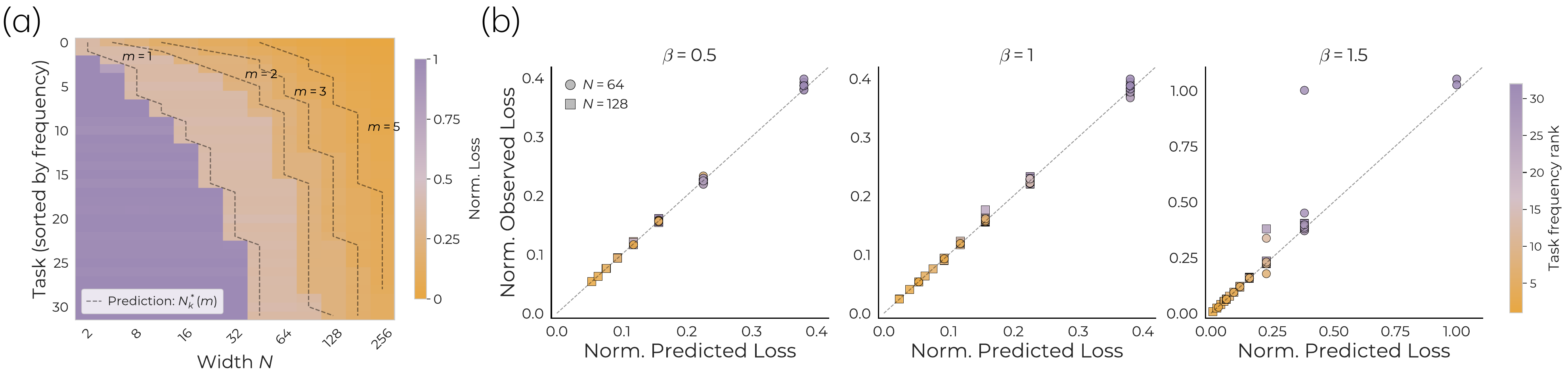}
    \vspace{-10pt}
    \caption{\label{fig:phases}\textbf{Feature Utility Predicts Learning Order.} We train students of varying width on a mixture of $K=32$ regression tasks with power-law task frequencies ($\beta$) and plot per-task loss (normalized by mean predictor).
    (a) Empirical phase diagram where task features ($\beta=1.0$) are retained as a function of width and task frequency, matching our prediction. 
    (b) Loss matches the analytic prediction from Theorem~\ref{thm:capacity} across task-frequency exponents. 
    Overall, we see that increasing width preferentially improves low-frequency tasks because it allows the model to retain lower-utility features.
    \vspace{-12pt}
    }
\end{figure*}

In the context of our toy task, the statement above helps answer the question ``what does width buy?'' by defining a concrete ranking rule for feature learning.\footnote{This claim can also be seen as a static ordering rule that local optima visited by a model during training will be expected to dynamically follow in its saddle-to-saddle dynamics~\cite{zhang2025saddle, abbe2023sgd, jacot2021saddle, kunin2025alternating}}
Specifically, it says a larger model, asymptotically, learns exactly those features whose utilities are lower than those of the features learned by a smaller model.
\textit{This implies if a task is observed infrequently or it involves several features, e.g., if its spectrum decays very slowly, then (on average) only a larger model will learn it.}

\paragraph{Verification.} We verify the claim above by training our student model on a mixture of $K=32$ tasks, using the Adam optimizer for $100$K steps (the loss does not improve beyond this budget even when trained up to 10$\times$ longer; see Fig.~\ref{fig:E-long-horizon-phase}).
We use a power-law prior $k^{-\beta}$ to define task frequencies, and a power-law per-task spectrum $\lambda_{k,j} \propto j^{-\alpha}$.
For simplicity of visualization, we let $\alpha=2$ be shared across tasks and only vary task frequencies by changing $\beta$ (see App.~\ref{app:complexity_sweep} for experiments modulating complexity by varying $\alpha$). 
Results are reported in Fig.~\ref{fig:phases} (also see App.~\ref{app:frequency_sweeps} for further results).
We find (a) the per-task loss and (b) the overall residual loss predictably reduce with model width.
Critically, we see larger models learn infrequent tasks better than smaller ones.

\subsection{Scaling Reduces Interference and Allows for Retention of Rare Task Observations}\label{sec:interfere}

While the argument above---i.e., a larger model learns low utility, infrequent features---is intuitively reasonable, it is critical to note that if the frequency at which a task or its features are seen is very low, then, regardless of size, there is a statistical bottleneck here that a model needs to circumvent.
For example, in the experiments shown in Fig.~\ref{fig:phases}b, a model must learn a task that constitutes merely $0.25$\% of observations.
We next analyze how width helps surmount this challenge. 
To this end, note that for the $k^{\text{th}}$ task, the Riemannian gradient is $G_k(U) = 2(I-P_U)C_kU$, and hence the mixture gradient is $\dot U = 2(I-P_U)MU$.
We then have the following claim.

\begin{theorem}[Residual Controls Learning]
\label{thm:interference}
Let $\Fre\subseteq [K]$ denote the common or frequent tasks. 
Define these tasks' weighted covariance $M_{\Fre} := \sum_{k\in \Fre} \pi_k C_k$ and residual signal $\delta_{\Fre}(U) := \tr\!\big((I-P_U)M_{\Fre}\big)$.
Then, the aggregate common-task gradient $G_{\Fre}(U) = 2(I-P_U)M_{\Fre}U$ obeys the bound
\begin{equation}
    \|G_{\Fre}(U)\|_F \le 2\sqrt{\lambda_1(M_{\Fre})\,\delta_{\Fre}(U)}.
\end{equation}
\end{theorem}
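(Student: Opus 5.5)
The plan is to write the Frobenius norm as a trace and factor the PSD matrix $M_{\Fre}$ through its square root; both inequalities needed are standard trace bounds for PSD matrices. Set $Q := I-P_U$. Since $U^\top U = I$, $P_U$ is an orthogonal projector, so $Q$ is symmetric with $Q^2=Q$ and $\|Q\|_{\mathrm{op}}\le 1$. Each $C_k = B_k\Lambda_kB_k^\top$ is PSD and $\pi_k>0$, so $M := M_{\Fre}$ is PSD and has a PSD square root $M^{1/2}$.

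\textbf{Step 1 (expand the norm).} We have $\|G_{\Fre}\|_F^2 = 4\,\tr\big(U^\top M Q^\top Q M U\big) = 4\,\tr\big(U^\top M Q M U\big)$. Rewrite $QM = QM^{1/2}\,M^{1/2}$, so the quantity equals $4\|A\|_F^2$ with $A := QM^{1/2}\cdot M^{1/2}U$.

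\textbf{Step 2 (peel off the top eigenvalue).} Use $\|XY\|_F \le \|X\|_F\|Y\|_{\mathrm{op}}$ with $X = QM^{1/2}$ and $Y = M^{1/2}U$. Since $U$ has orthonormal columns, $\|Y\|_{\mathrm{op}} \le \|M^{1/2}\|_{\mathrm{op}}\|U\|_{\mathrm{op}} \le \lambda_1(M)^{1/2}$.

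\textbf{Step 3 (identify the residual).} Compute $\|QM^{1/2}\|_F^2 = \tr(M^{1/2}Q^\top Q M^{1/2}) = \tr(M^{1/2} Q M^{1/2}) = \tr(QM) = \delta_{\Fre}(U)$, using cyclicity of the trace and $Q^\top Q = Q$. Combining Steps 1–3 gives $\|G_{\Fre}\|_F^2 \le 4\lambda_1(M)\,\delta_{\Fre}(U)$, and taking square roots yields the claim.

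\textbf{Where care is needed.} The only real subtlety is Step 2: one must choose the factorization that puts the residual projector $Q$ on the Frobenius factor and $U$ on the operator-norm factor. The other ordering would give a bound in terms of $\tr(U^\top M U)$, which measures the explained signal rather than the residual $\delta_{\Fre}(U)$. Everything else reduces to checking that $Q$ is an orthogonal projector and that $M_{\Fre}$ is PSD.
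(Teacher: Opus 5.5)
Your proof is correct and is essentially the paper's own argument: the paper factors $G_{\Fre}(U) = 2(I-P_U)M_{\Fre}^{1/2}\,M_{\Fre}^{1/2}U$ and applies $\|AB\|_F \le \|A\|_F\|B\|_{\mathrm{op}}$. It then identifies $\|(I-P_U)M_{\Fre}^{1/2}\|_F^2 = \delta_{\Fre}(U)$ and bounds $\|M_{\Fre}^{1/2}U\|_{\mathrm{op}} \le \sqrt{\lambda_1(M_{\Fre})}$, exactly as in your Steps 2 and 3.
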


The statement above says a set of tasks move the model only through the part of their covariance that is \emph{not already explained} by the current representation, i.e., the residual $\delta_{\Fre}(U)$.
Correspondingly, once the high-utility common-task features have been learned, their updates become weak (i.e., low norm).
\textit{This leaves any spare width available to rare tasks.}
More precisely, let $\mu_1^{\Fre}\ge \mu_2^{\Fre}\ge \cdots$ be the eigenvalues of $M_{\Fre}$. 
The best width-$N$ representation for the common tasks alone leaves residual $\delta_{\Fre}^*(N)=\sum_{i>N}\mu_i^{\Fre}$.
Then, via Theorem~\ref{thm:interference}, we get the following.

\begin{corollary}[Width-Scaling Reduces Competition]
\label{cor:finite-rank-main}
Define $N_{\Fre}(\eps):=\min\left\{N:\sum_{i>N}\mu_i^{\Fre}\le \eps\right\}$. 
For every $N\ge N_{\Fre}(\eps)$, there exists an encoder for which $\delta_\Fre^*(N) \le \epsilon$ and $\|G_{\Fre}(U)\|_F \le 2 \sqrt{\mu_1^\Fre \epsilon}$.
\end{corollary}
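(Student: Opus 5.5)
The plan is to construct the encoder explicitly and then plug it into Theorem~\ref{thm:interference}. Fix $N \ge N_{\Fre}(\eps)$ and let $v_1, v_2, \ldots$ be orthonormal eigenvectors of the symmetric positive semidefinite matrix $M_{\Fre}$, with eigenvalues $\mu_1^{\Fre} \ge \mu_2^{\Fre} \ge \cdots$. Take $U = [v_1, \ldots, v_N]$, so that $U^\top U = I$. This is exactly the construction behind Theorem~\ref{thm:capacity}, applied to $M_{\Fre}$ in place of $M$. It is also the Ky Fan maximizer of $\tr(U^\top M_{\Fre} U)$ over orthonormal $N$-frames.

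\textbf{Step 1: compute the residual.} Since $P_U = \sum_{i \le N} v_i v_i^\top$, we have
\[
\delta_{\Fre}(U) = \tr\big((I-P_U) M_{\Fre}\big) = \tr(M_{\Fre}) - \tr(U^\top M_{\Fre} U) = \sum_{i > N} \mu_i^{\Fre}.
\]
By Ky Fan, this value equals the optimal residual $\delta_{\Fre}^*(N)$. The tail $\sum_{i > N} \mu_i^{\Fre}$ is nonincreasing in $N$ because each $\mu_i^{\Fre} \ge 0$. By the definition of $N_{\Fre}(\eps)$ as a minimum, the tail is at most $\eps$ at $N_{\Fre}(\eps)$, and therefore also at every $N \ge N_{\Fre}(\eps)$. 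Hence $\delta_{\Fre}^*(N) = \delta_{\Fre}(U) \le \eps$.

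\textbf{Step 2: bound the gradient.} Apply Theorem~\ref{thm:interference} at this $U$, using $\lambda_1(M_{\Fre}) = \mu_1^{\Fre}$:
\[
\|G_{\Fre}(U)\|_F \le 2\sqrt{\mu_1^{\Fre}\, \delta_{\Fre}(U)} \le 2\sqrt{\mu_1^{\Fre}\, \eps}.
\]
For this encoder the bound is actually loose, since $M_{\Fre} U = U \diag(\mu_1^{\Fre}, \ldots, \mu_N^{\Fre})$ gives $(I-P_U) M_{\Fre} U = 0$ exactly. I will note this as a remark, but I will route the proof through Theorem~\ref{thm:interference}, as the statement intends.

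\textbf{Main obstacle.} The only real subtlety is dimensional. The construction needs $N \le d$ and enough eigenvectors. If $\operatorname{rank}(M_{\Fre}) < N$, I complete $U$ with arbitrary orthonormal vectors from the null space of $M_{\Fre}$ (or the eigenvectors for zero eigenvalues). These extra columns do not change the residual, because the omitted eigenvalues are zero. I also need a convention for ties in the spectrum of $M_{\Fre}$: any choice of orthonormal eigenbasis gives the same tail sum, so the argument is unaffected.
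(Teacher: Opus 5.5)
Your proof is correct and matches the paper's route: the paper gives no separate appendix proof and just invokes Theorem~\ref{thm:interference} at the top-$N$ eigenspace of $M_{\Fre}$, whose residual is $\delta_{\Fre}^*(N)=\sum_{i>N}\mu_i^{\Fre}\le\eps$ because the tail is nonincreasing in $N$. Your side remark is also right: for this encoder $(I-P_U)M_{\Fre}U=0$ exactly, so the gradient bound holds trivially.
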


That is, once $N\gtrsim N_{\Fre}(\eps)$, the model contains enough resources that can be allocated to the common tasks, rendering the gradient towards them weak.
This makes the remaining resources available to rare tasks.
However, even once interference is weak enough for a rare task to be learned, it is unclear whether gradient descent can actually consolidate that signal across its infrequent observations. 
To this end, we next characterize the local condition under which a specific rare feature can pull the model towards itself, without forcing the forgetting of well-learned tasks.
Specifically, assume we wanted to learn a rare rank-one task $C_r = \lambda_r b_r b_r^\top$ orthogonal to the common block. 
Let $U_F^{(N)}$ be top-$N$ eigenspace of $M_F$ with eigenvalues $\mu_1^F \geq \mu_2^F \geq \cdots$.
Then, we have the following claim.

\begin{proposition}[Interference Reduces via Scaling]
\label{prop:invasion}
The common-task solution $U_{\Fre}^{(N)}$ is stable against direction $b_r$, i.e., common tasks' loss does not grow by learning of $b_r$, iff $\pi_r\lambda_r < \mu_N^{\Fre}$.
Thus, the critical width at which $b_r$ gets learned is $N_r^{\text{crit}} := \min\{N : \mu_N^{\Fre} \le \pi_r\lambda_r\}$.
\end{proposition}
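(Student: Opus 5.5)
We analyze a second-order perturbation of the mixture loss from Theorem~\ref{thm:capacity} around the common-task solution. Let the total weighted covariance be $M = M_{\Fre} + \pi_r\lambda_r b_r b_r^\top$, with $b_r \perp \operatorname{range}(M_{\Fre})$. By Theorem~\ref{thm:capacity} the loss is $L_N(U) = \tr(M) - \tr(U^\top M U)$, so stability of $U_{\Fre}^{(N)}$ against $b_r$ amounts to deciding whether rotating a column of $U_{\Fre}^{(N)}$ toward $b_r$ increases or decreases $\tr(U^\top M U)$.

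\textbf{Step 1: the block structure.} The spectrum of $M$ is $\{\mu_i^{\Fre}\}_i \cup \{\pi_r\lambda_r\}$, and $b_r$ is an eigenvector of $M$. Take the curve $U(\theta)$ obtained from $U_{\Fre}^{(N)}$ by replacing its column $v_i$ (eigenvector with eigenvalue $\mu_i^{\Fre}$) by $\cos\theta\, v_i + \sin\theta\, b_r$. This curve stays orthonormal because $b_r \perp v_j$ for all $j$. Along it,
\[
\tr(U^\top M U) = \text{const} + \cos^2\theta\,\mu_i^{\Fre} + \sin^2\theta\,\pi_r\lambda_r ,
\]
the cross terms vanishing because $v_i^\top M b_r = 0$. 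The second derivative at $\theta=0$ is $2(\pi_r\lambda_r - \mu_i^{\Fre})$. The most vulnerable column is the one with the smallest eigenvalue, $i=N$. Hence rotating toward $b_r$ strictly decreases the loss, so $b_r$ can invade, iff $\pi_r\lambda_r > \mu_N^{\Fre}$. Conversely, if $\pi_r\lambda_r < \mu_N^{\Fre}$, then every such rotation raises the loss.

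\textbf{Step 2: general perturbations.} The plan is to compute the Riemannian Hessian of $-\tr(U^\top M U)$ on the Grassmannian at $U_{\Fre}^{(N)}$. A tangent direction is $\Delta = (I-P_U)Z$, and the Hessian quadratic form is $\tr(\Delta^\top M \Delta) - \tr(\Delta^\top \Delta\, U^\top M U)$, up to sign. Splitting $\Delta$ into components along $b_r$ and along $\operatorname{range}(I-P_U)\cap b_r^\perp$ decouples the form, since $M$ is block diagonal in these subspaces. The common-block part is nonnegative by optimality of the top-$N$ eigenspace of $M_{\Fre}$. The $b_r$ part equals $\sum_i |z_i|^2(\mu_i^{\Fre} - \pi_r\lambda_r)$, which is positive definite iff $\pi_r\lambda_r < \mu_N^{\Fre}$.

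For the phrase ``common tasks' loss does not grow,'' note that learning $b_r$ requires rotating some column away from the common block, and this raises $\sum_{k\in\Fre}\pi_k\ell_k$ by $\sin^2\theta\,\mu_i^{\Fre}$. That increase is exactly what the rare gain $\sin^2\theta\,\pi_r\lambda_r$ must outweigh. I would phrase stability as the equilibrium being a local minimum of the total loss in the $b_r$ direction, which is the same condition as above.

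\textbf{Step 3: critical width.} Since $\mu_N^{\Fre}$ is nonincreasing in $N$, the set $\{N : \mu_N^{\Fre} \le \pi_r\lambda_r\}$ is upward closed. Its minimum $N_r^{\text{crit}}$ is therefore the first width at which the common solution stops being strictly stable. From that width on, the top-$N$ eigenspace of $M$ contains $b_r$, which is consistent with Theorem~\ref{thm:capacity}.

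The main obstacle is the boundary case $\pi_r\lambda_r = \mu_N^{\Fre}$, where the Hessian is degenerate. The ``iff'' uses a strict inequality in one direction but $\le$ in the definition of $N_r^{\text{crit}}$. I would handle this by interpreting ties as neutral stability (a flat direction), and by stating the result for strict stability with the obvious convention at equality.
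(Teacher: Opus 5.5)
Your proposal is correct, and your Step~1 is exactly the paper's proof: rotate one occupied eigenvector $u_i$ of $U_{\Fre}^{(N)}$ toward $b_r$, obtain the exact change $\Delta L = (\mu_i^{\Fre} - \pi_r\lambda_r)\sin^2\theta$, and note that the weakest column, with eigenvalue $\mu_N^{\Fre}$, is the first to yield. Your additions go beyond what the paper writes and are sound: the Step~2 Hessian decoupling, the tie-case convention, and the observation that the rotation always raises the common-task loss by $\mu_i^{\Fre}\sin^2\theta$ (so ``stability'' must mean stability of the total loss). The stable direction also follows globally and more cheaply from Theorem~\ref{thm:capacity}: for $\pi_r\lambda_r < \mu_N^{\Fre}$ the top-$N$ eigenvalues of $M$ are exactly $\mu_1^{\Fre},\dots,\mu_N^{\Fre}$, which makes $U_{\Fre}^{(N)}$ a global minimizer of the mixture loss.
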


\begin{wrapfigure}[20]{r}{0.45\textwidth}
\vspace{-15pt}
\begin{center}
\includegraphics[width=0.94\linewidth]{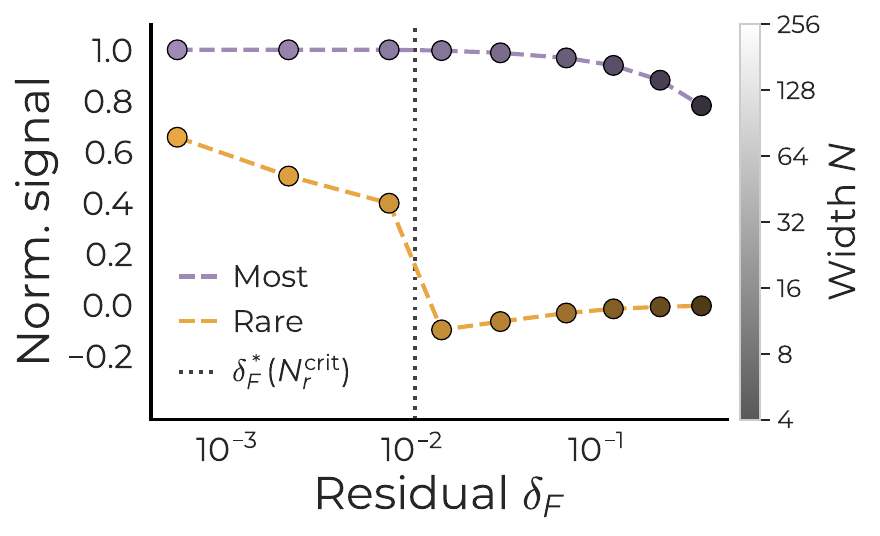}
\end{center}
\vspace{-18pt}
\caption{\textbf{Residual Controls Learning.} We plot signals encoded in model representations for most frequent and rarest tasks as a function of width $N$ and remaining residual $\delta_\Fre$. In line with our predictions, we see larger models perfectly capture tasks of all frequencies, while smaller models do not. Meanwhile, even for the largest models, when the residual signal  to be explained for frequent tasks is high, rarer tasks struggle to be learned.} 
\label{fig:residual_controls_learning}
\end{wrapfigure}

The claim above hence shows width scaling helps in two related but distinct ways. 
First, it reduces the total unresolved common-task signal $\delta_\Fre^*(N)$, which bounds the aggregate common-task gradient. 
Second, as Proposition~\ref{prop:invasion} shows, it lowers the weakest occupied common-task utility $\mu^\Fre_N$, which determines whether a particular rare feature can \textit{displace} a common feature and become locally stable; if the rare feature's utility is lower, even if the model updates to learn it, the common tasks' least utility feature will eventually replace it.
\textit{This will result in a swinging, update-and-forget learning dynamic where the rare task features and lowest utility features of common tasks will compete over model parameters.}
Overall, this suggests the learning bottleneck is defined by the interaction between data and scale: if the task we care to learn does not have sufficient utility for reducing the loss, then the model will prefer to learn and preserve lower-order modes of other tasks; however, \textit{by increasing width, one avails capacity to such low-utility tasks and reduces competition between tasks over model parameters, enabling learning of the rare task without forcing the forgetting of features relevant to common tasks}.

\begin{figure}
\vspace{-12pt}
\begin{center}
\includegraphics[width=\linewidth]{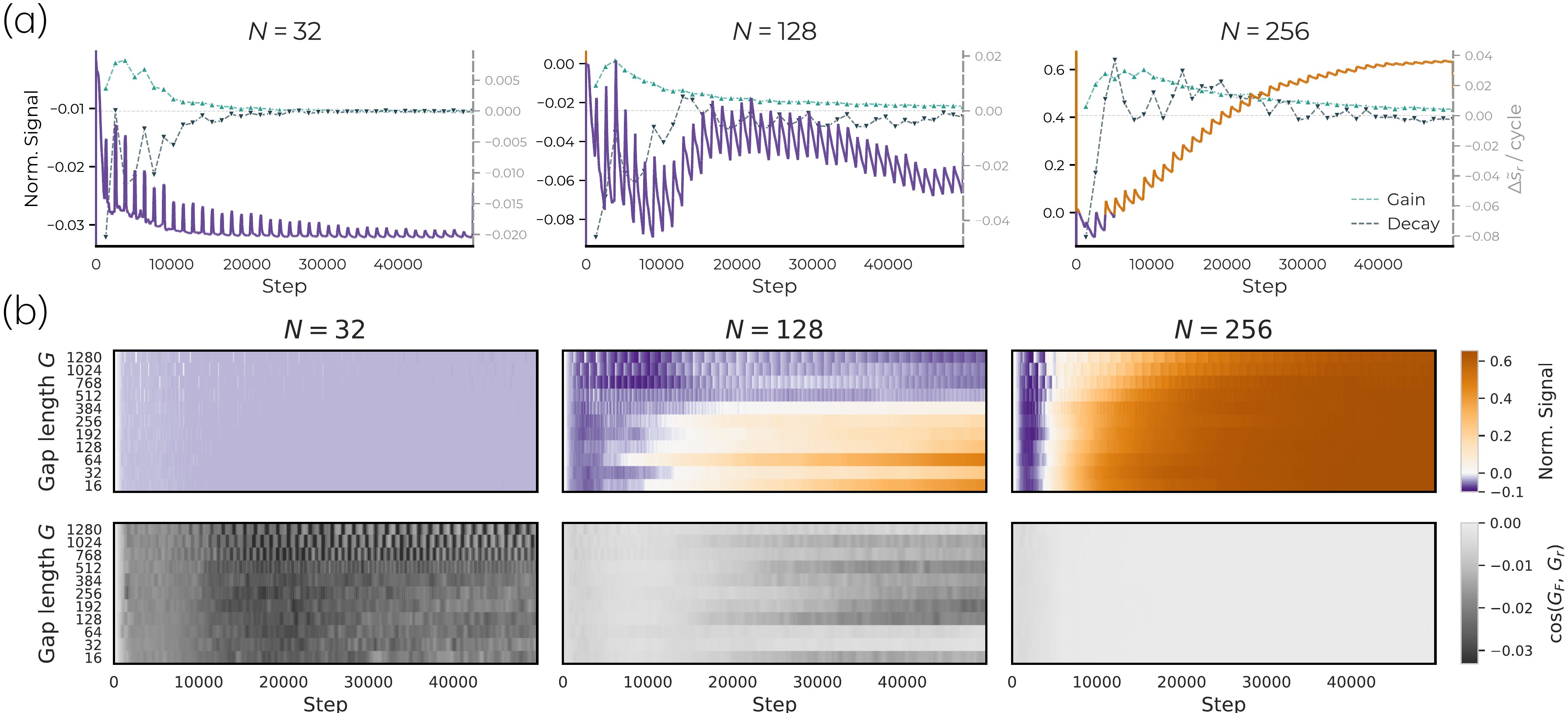}
\end{center}
\vspace{-5pt}
\caption{\textbf{Rare-Task Retention by Larger Models.} 
We isolate retention by training with a matched-frequency injection protocol: the rare task is withheld for $G$ steps and then reintroduced in a batch such that its overall frequency is consistent across settings. (a)~Training dynamics for $G=1280$. We see small models briefly encode the rare task (Norm.\ signal $\tilde{s_r}$: left-y axis) after each injection; specifically, $\Delta \tilde{s_r}$ increases at point of injection, as shown by green dotted line (`gain'). However, as frequent-task updates resume, this signal is lost between injection steps (`decay': gray dotted line). Meanwhile, larger models retain more of the rare-task signal between injections and accumulate it over training. (b)~Across injection gaps $G$ and widths $N$, rare-task signal decays rapidly in narrow models but remains stable in wider models, while frequent-task signal is largely unaffected. 
Furthermore, by computing the cosine similarity of gradients via a batch of rare-task samples $G_r$ and frequent task samples $G_\Fre$, we see scaling provides enough representational capacity such that updates from frequent tasks no longer overwrite rare-task features before the next rare observation arrives.
\vspace{-15pt}
}
\label{fig:competition}
\end{figure}

\paragraph{Validation.} We train models of varying width on the same setup as Fig.~\ref{fig:phases} and plot how much signal from directions describing a task is present in the model's intermediate representation.
Specifically, since $\ell_k(U) = \mathrm{Tr}((I - P_U)C_k)$, the signal captured for task $k$ is $\mathrm{Tr}(P_U C_k)$, where $\mathrm{Tr}(.)$ denotes trace of a matrix.
We thus measure $s_k(U) = \frac{\mathrm{Tr}(P_U C_k)}{\mathrm{Tr}(C_k)}$.
To contextualize this value, we normalize with respect to a random baseline, yielding $\tilde{s}_k(U) = \frac{s_k(U) - N/d}{1 - N/d}$; $N/d$ denotes the expected value of $s_k(U)$ if $U$ were a randomly drawn matrix from the Stiefel manifold.
We call this quantity ``Normalized Signal''.
Results are shown in Fig.~\ref{fig:residual_controls_learning}.
We see when the model width is small, frequent tasks have a high residual signal remaining to be explained; here the set of frequent tasks is defined as top-$K$ tasks whose prior sums to $0.8$, resulting in $K=3$.
Correspondingly, rare tasks' signal in model representation is no better than random.
Meanwhile, as we scale, once the width crosses our predicted threshold $\delta_F^*(N_r^{\mathrm{crit}})$, we find the bulk of the frequent tasks' signal is explained away and rare tasks start to get learned.

To isolate how the gap between observations interacts with width, we also design a matched-frequency injection experiment: the rare task is excluded from training for $G$ steps, then injected in a batch enlarged to $m = G \cdot B \cdot \rho_r$ rare samples so that its long-run frequency exactly matches the setup of Fig.~\ref{fig:phases}. 
This emphasizes the ability of a model to retain memories about observed data, while preserving the total frequency with which it is seen.
Results are shown in Fig.~\ref{fig:competition}. 
We see at the end of training, rare-task signal decays monotonically with $G$ at all widths, but far more steeply for smaller models.
Meanwhile, the learning dynamics in panel (b) show that after each injection, a larger model accumulates rare-task signal and retains enough of it to build on the next injection, while a smaller model decays back to near-zero in between (an intuitive model explaining this dynamic is shown in Fig.~\ref{fig:single_neuron} and analytically described in App.~\ref{app:C-neuron-specialization}).
Overall, our results showing how larger models learn tasks smaller models do not can be summarized as follows.

\begin{tcolorbox}[colback=gray!5, colframe=purp!130, title=\bfseries Hypothesis: Scaling Enables Learning via Reduced Interference and Better Retention]
Given the same training setup, larger models better learn tasks in the tail of the data distribution. When such a task is observed, a larger model can retain part of the update and build on it when the task appears again. In a small model, the same parameters face more competition: updates from frequent tasks undo the rare-task update before the next rare batch arrives. Rare-task learning then becomes an update-and-forget loop.
\end{tcolorbox}

\section{Corroborating Claims with the OLMo Pretraining Pipeline}
\label{sec:real}

We now verify the claims of Sec.~\ref{sec:synthetic} in a realistic LLM pre-training setting using the OLMo pipeline. We train models of size 4M to 4B  on up to 210B tokens ($\sim$50K steps). Following the structure of Sec.~\ref{sec:synthetic}, we offer analyses at three levels: loss, representation, and gradient.

\subsection{Setup}

A key variable in our claims is the frequency of a task.\footnote{Defining the complexity of a natural task is difficult, and hence we solely focus on frequency in this section.}
However, measuring the frequency of a naturally occurring task in pre-training data is challenging, as instances from the same task can occur in many surface forms. 
To tightly control task frequency, we adopt a data injection framework from the memorization literature~\cite{jagielski2023measuring,carlini2019secret,huang2024demystifying,wei2026hubble}. 
We inject different instances sampled from the distribution of a ``special'' task $T$ at a controlled frequency $f$ to measure whether a model has learned the task distribution.
The task $T$ is special in the sense that it is unlikely to be part of normal pre-training data. We then train models of various sizes on data mixtures generated from different values of $f$.

\begin{wrapfigure}[23]{r}{0.45\textwidth}
\vspace{-20pt}
\begin{center}
\includegraphics[width=0.9\linewidth]{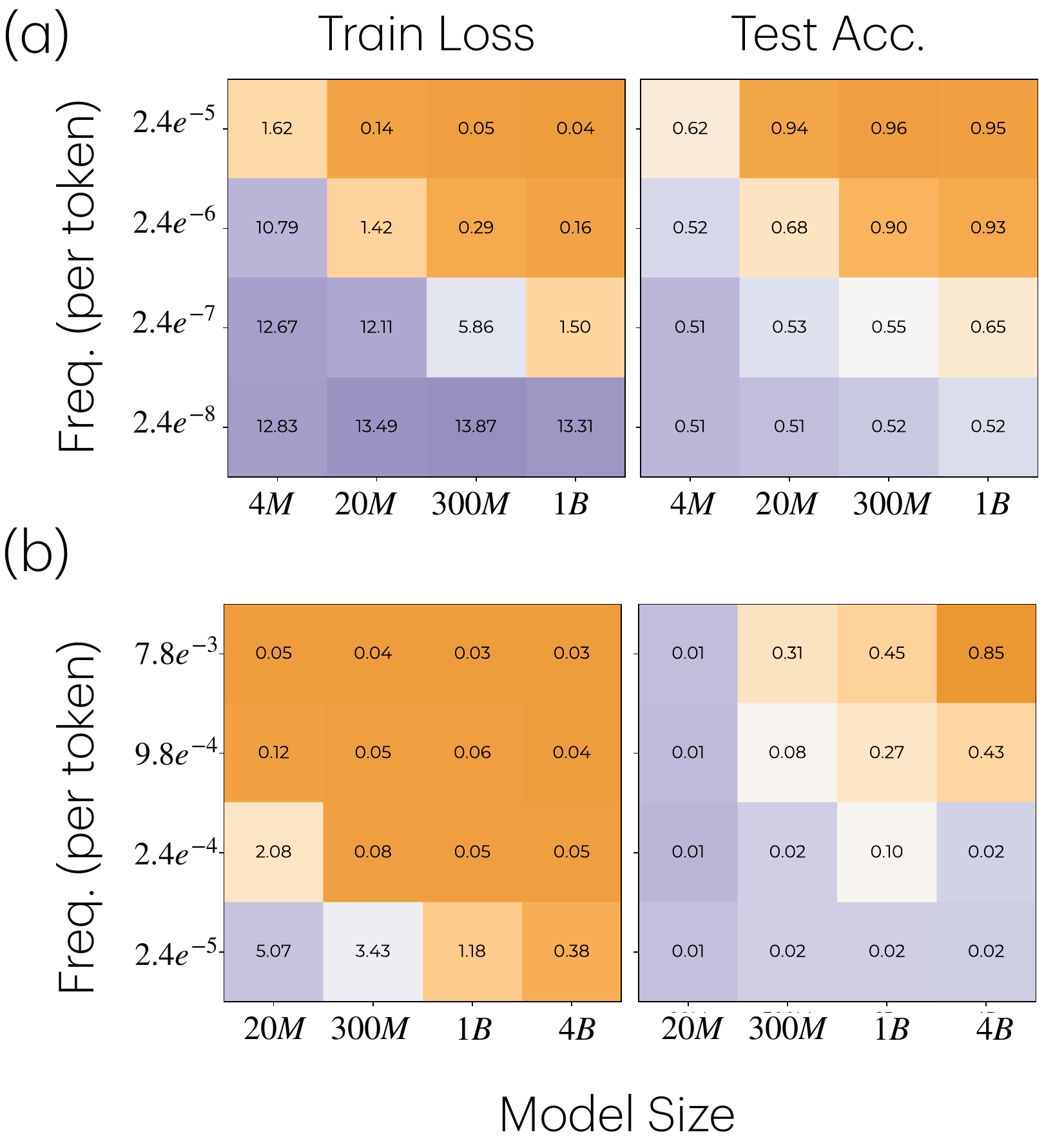}
\end{center}
\vspace{-10pt}
\caption{\label{fig:olmo_freq_vs_model_size}\textbf{Larger Models Learn Rare Tasks; Smaller Models Do Not.} We visualize training loss and test accuracy for the (a) Comparison task ($T_{\text{CMP}}$) and (b) Modular Addition task ($T_{\text{ADD}}$). Orange color indicates lower loss/higher accuracy. Overall, we see that increasing width enables learning of low-frequency tasks, in line with our prior claims.} 
\end{wrapfigure}

\begin{figure*}[!t]
    \centering
    \includegraphics[width=0.9\linewidth]{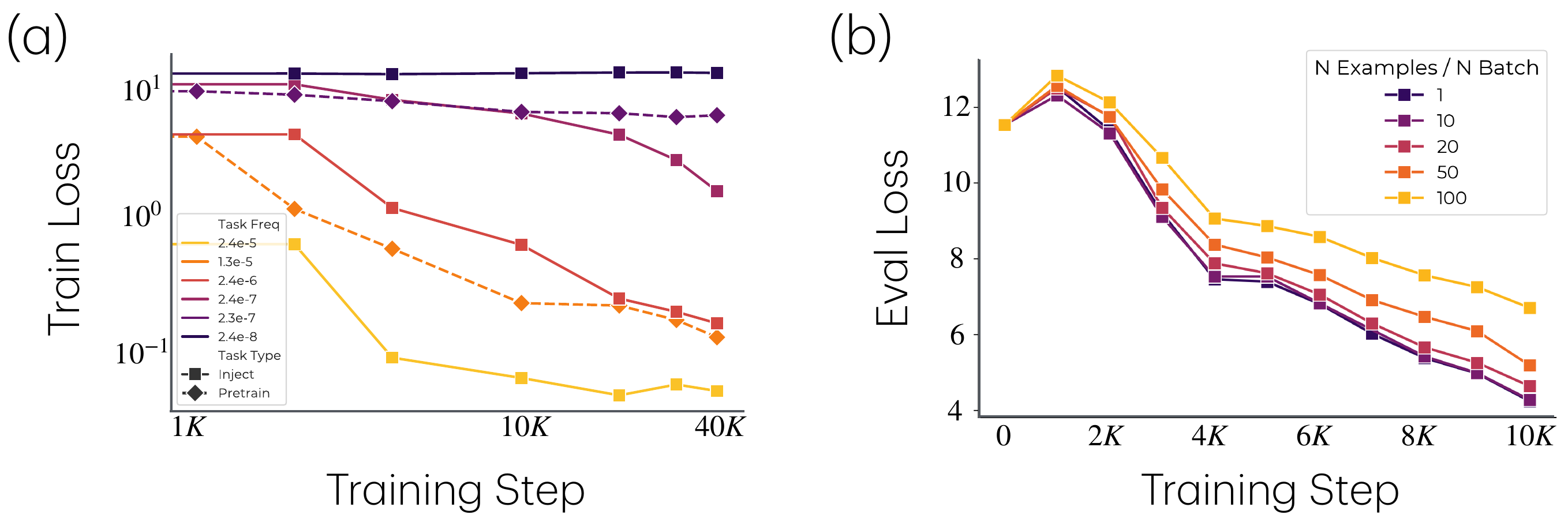}
    \vspace{-5pt}
    \caption{\label{fig:olmo_loss}\textbf{Behavioral Evidence.} (a) Tasks are learned in the order of frequency. Solid lines: We inject the same comparison task (\tcmp) at different frequencies and measure the task training loss. Dashed lines:  Reference arithmetic tasks observed from pre-training data.
    (b) With matched-frequency injection of the comparison task (\tcmp), i.e., injecting $N$ task instances every $N$ batches, a larger injection gap $N$ degrades task loss, while a smaller injection gap leads to almost identical loss.
    \vspace{-10pt}
    }
\end{figure*}

\paragraph{Tasks.} We consider two special tasks $T$: comparison (\tcmp) and modular addition (\tadd). 
Both tasks are encoded as a sequence of three tokens: \texttt{TOK1}, \texttt{TOK2}, \texttt{LABEL}, where \texttt{TOK1}, \texttt{TOK2} $\in \mathcal{S}$, a set of $100$ tokens randomly sampled from the vocab.
There are exactly $10$K instances per task, which are split $50/50$ for training and testing. 
Critically, both tasks require models to learn certain geometrical structures to generalize~\cite{hwang2026intrinsictasksymmetrydrives}. 
This provides a measure for learning a task (as opposed to memorizing training instances) and a set of features to verify the interference hypothesis of Sec.~\ref{sec:synthetic}.

\paragraph{Data.} We use Dolma v1.7 as the pre-training corpus~\cite{dolma}. 
Given a task $T$, we inject instances sampled from its train split at a frequency of $7.8\times10^{-3}$ to $2.4\times10^{-8}$, roughly from 1K instances per batch to 1 instance every 10 batches. 
To ensure the injected task frequency is comparable to the frequency of tasks learned in pre-training, we sample two reference tasks $R_{\text{cmp}}$ and $R_{\text{add}}$ from pre-training that involve similar high-level functions. 
The three-token sequence plus an end of document token replace the first four tokens of a training sequence. 
See App.~\ref{appx:olmo_data} for further details on the experimental setup.

\paragraph{Models.} We train OLMo models~\cite{groeneveld2023olmo} with 4M, 20M, 300M, 1B, and 4B parameters. 
We focus on scaling the models' hidden and MLP dimensions and the number of attention heads; the 4M parameter model has depth 8 and the rest have depth 16. 
See App.~\ref{appx:olmo_config} for further details.

\subsection{Behavioral Evidence}
\label{sec:olmo_loss}

\paragraph{Larger Models Learn Rarer Tasks.} We first replicate the behavioral findings in Sec.~\ref{sec:synthetic_larger_rarer}. 
We measure the effect of task frequency by comparing multiple training runs that only differ by the frequency of the injected task. 
As shown in Fig.~\ref{fig:olmo_freq_vs_model_size}, larger models learn lower-frequency tasks much better than smaller models do.
This matches the pattern in Fig.~\ref{fig:phases}. Moreover, tasks are learned in the order of frequency. 
For each model run, we compare the order in which the injected task \tcmp\ and the reference tasks are learned, as shown in Fig.~\ref{fig:olmo_loss}a. 
Most importantly, larger models do not just lead to better memorization of training instances, i.e., low training loss, but also learn generalizable task structures, i.e., high eval accuracy. 
On \tadd, only larger models trained on higher frequency exhibit the grokking phenomenon~\cite{power2022grokking}.

\paragraph{Rare-Task Retention Has an Effect on Learning.}
We conduct the matched-frequency injection experiment as described in Fig.~\ref{fig:competition}, i.e., injecting $N$ task instances every $N$ batches, for $N=1,10, 20, 50, 100$.  Fig.~\ref{fig:olmo_loss}b shows the effects of retention on learning, as models trained with larger gaps between task instances have higher task loss, even though the global task frequency of all runs is equivalent.

\subsection{Representational Evidence}
\label{sec:olmo_repr}

\paragraph{Task Features.} In our toy setting (Sec.~\ref{sec:synthetic}), we know analytically which features are necessary for learning the $k$th task, i.e., $B_k$, and to what extent the model can represent these features, i.e., $P_U$. 
For our OLMo models, we can empirically identify a set of causal features that a pre-trained LM would use to solve the task and localize them in the model representations. 
Specifically, for \tcmp, the task feature of core relevance is the global order of the tokens, which allows number comparisons; meanwhile for \tadd, task features are the Fourier modes~\cite{nanda_progress_2022,zhou2024pretrained,feucht2026arithmeticwildllamauses}, as shown in Fig.~\ref{fig:task_feature_learning}. We visualize the feature geometry using the largest model trained on the most frequent task data. 
These task features allow us to conduct versions of the gradient and representation-level analyses in Sec.~\ref{sec:synthetic}.

\paragraph{More Task Features are Present in Larger Model Representations.}

\begin{figure*}[!t]
    \centering
    \vspace{-12pt}
    \includegraphics[width=0.95\linewidth]{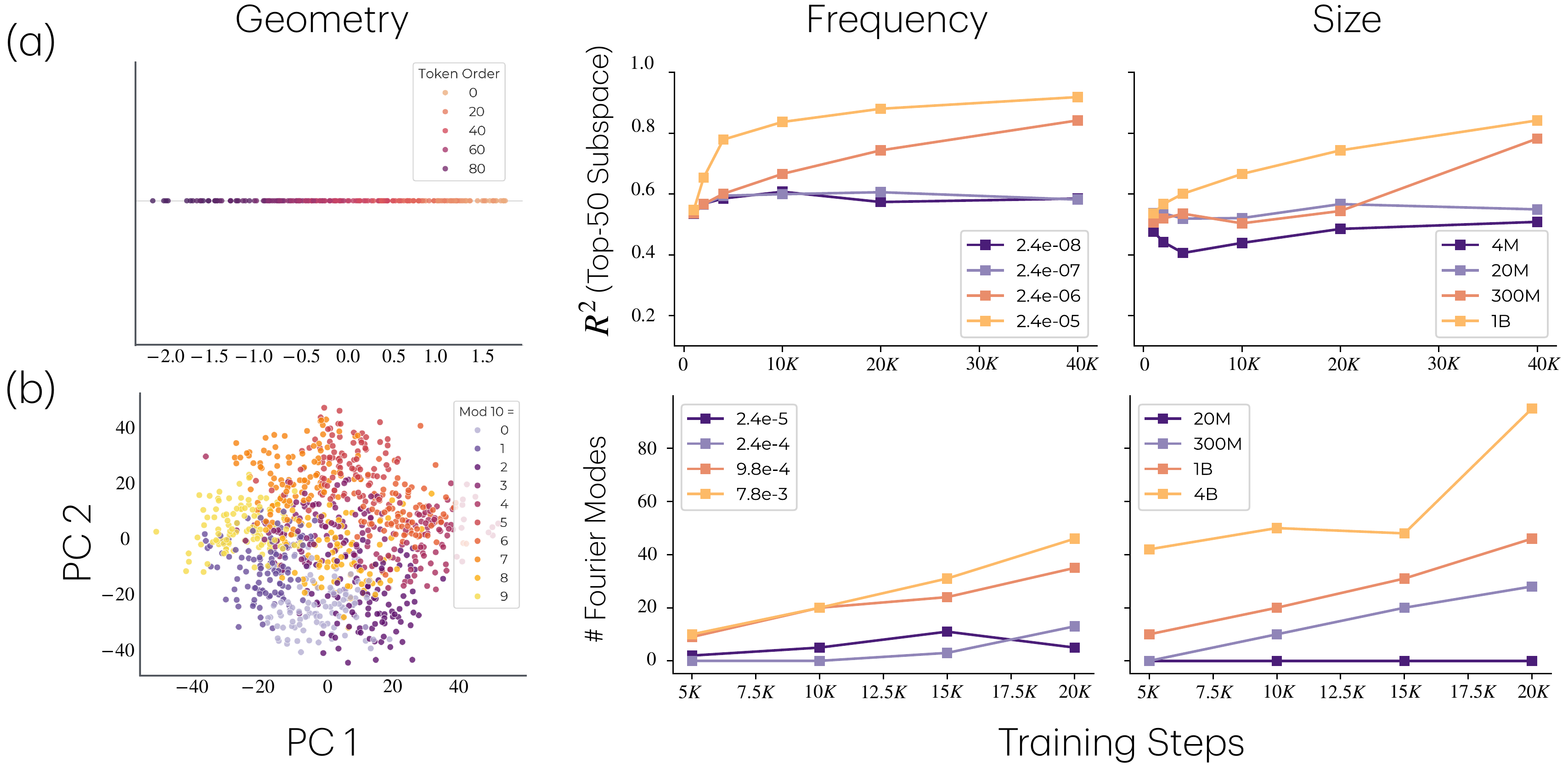}
    \caption{\textbf{Representational Evidence.} Scaling model size (width) and increasing task frequency lead to models learning more task-relevant features.
    Rows correspond to (a) the comparison task \tcmp\ and (b) the modular addition task \tadd.
    The first column shows feature geometry, visualizing the global token order features for \tcmp\ and the Fourier-mode features for \tadd.
    The last two columns quantify how these features scale with task frequency and model size.
    For both tasks, the task features are better represented in larger models trained on higher task frequency.
    \vspace{-5pt}
    }
    \label{fig:task_feature_learning}
\end{figure*}

We first localize the task features in models that have clearly learned the task. 
We then measure to what extent these target task features are present in all models, which parallels the metric $\ell_k(U)$ used in the toy setting. 
For localization, we use distributed alignment search (DAS)~\cite{geiger2024das} which finds subspaces that \emph{causally} encode the features. 
For \tcmp, a global ordering of the tokens can be localized to a 1-D subspace in the residual stream of the first layer. 
For \tadd, Fourier modes can be identified in the residual stream from earlier layers to the last layer.
We then use task-specific metrics to measure to what extent these task features are present in model representations. 
For \tcmp, since the geometry of the task feature is a single direction, we apply linear regression to representations spanning the top $K=50$ principal components. 
For \tadd, we measure the total number of Fourier modes present through all layers. We include the details in App.~\ref{appx:localize_task_features}. 
The last two columns of Fig.~\ref{fig:task_feature_learning} show the extent to which the target task features are present in each model across checkpoints. For the frequency (middle) column, we fix the model size to 1B. For the model size (last) column, we fix the task frequency to be $2.4\times 10^{-6}$ for \tcmp\ and $7.8\times 10^{-3}$ for \tadd. We see that (i) the presence of task features is highly correlated with high accuracy on the test set, and (ii) larger models and models trained on more frequent task data clearly learn these task features faster.

\subsection{Gradient Evidence}\label{sec:olmo:gradients}

\begin{wrapfigure}[14]{r}{0.45\textwidth}
\vspace{-22pt}
\begin{center}
\includegraphics[width=\linewidth]{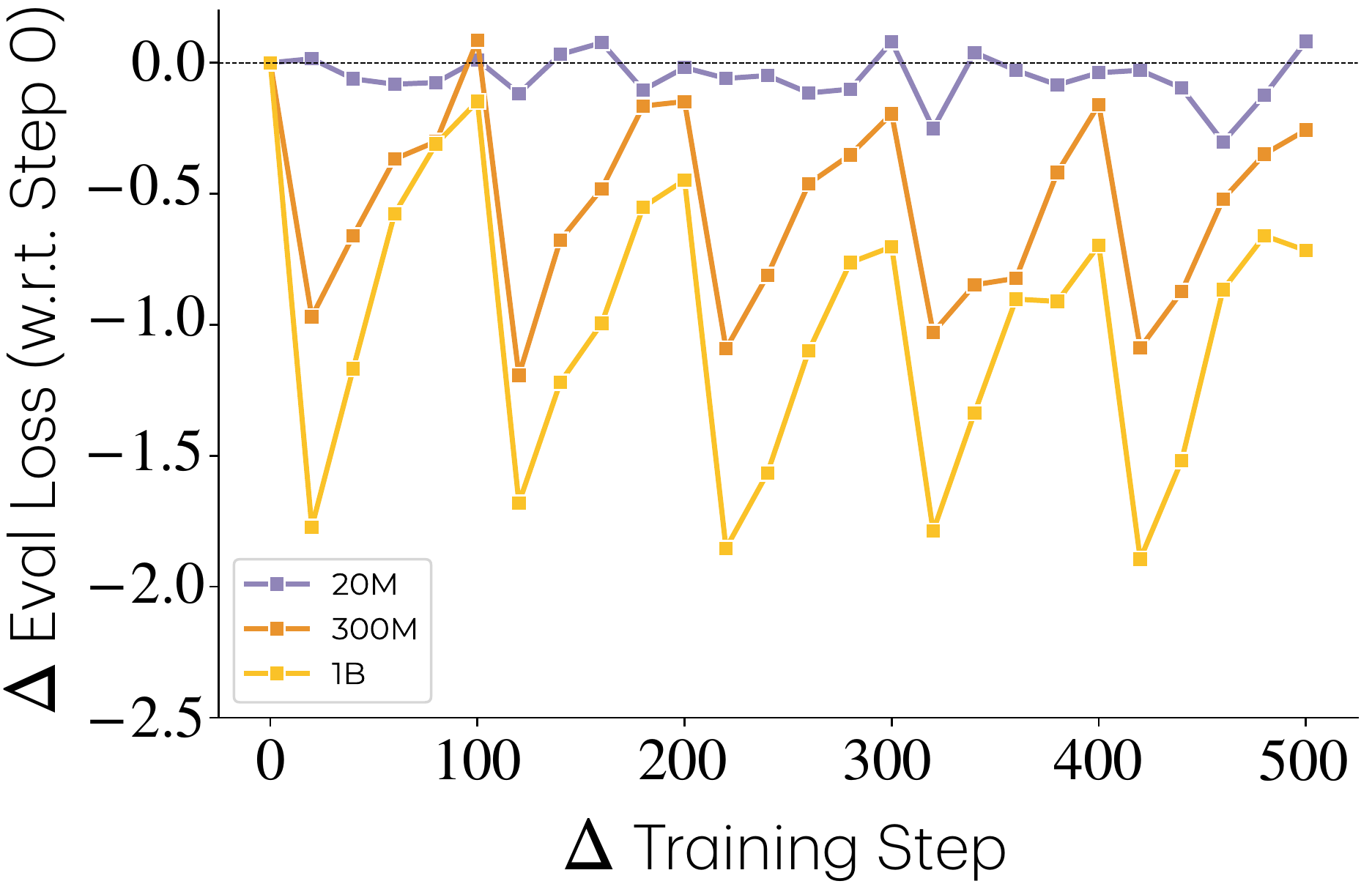}
\end{center}
\vspace{-12pt}
\caption{\label{fig:olmo_retention_vs_model_size}\textbf{Rare-Task Retention.} Larger models can retain the injected task information better, i.e., larger task eval loss drop, when injecting task instances every 100 batches.
} 
\end{wrapfigure}

We now connect the behavioral evidence (Sec.~\ref{sec:olmo_loss}) and the internal representation account (Sec.~\ref{sec:olmo_repr}) by analyzing how task gradients interfere with non-task gradients on a set of neurons that implement the task circuit. We focus on \tcmp\ training runs in Fig.~\ref{fig:olmo_retention_vs_model_size}, where 100 task instances are injected every 100 steps.

\paragraph{Task Neurons.} We first identify which MLP layers implement the task features defined in Sec.~\ref{sec:olmo_repr}. 
For all the models that we compared, the first layer MLP has the largest causal effects on task predictions. 
We further identify the top $K$ neurons in the first layer MLP that have the largest gradient magnitude and use the gradients of these neurons for analysis.
Details can be found in App.~\ref{appx:localize_task_neurons}.

\begin{figure*}[!t]
    \centering
    \includegraphics[width=0.95\linewidth]{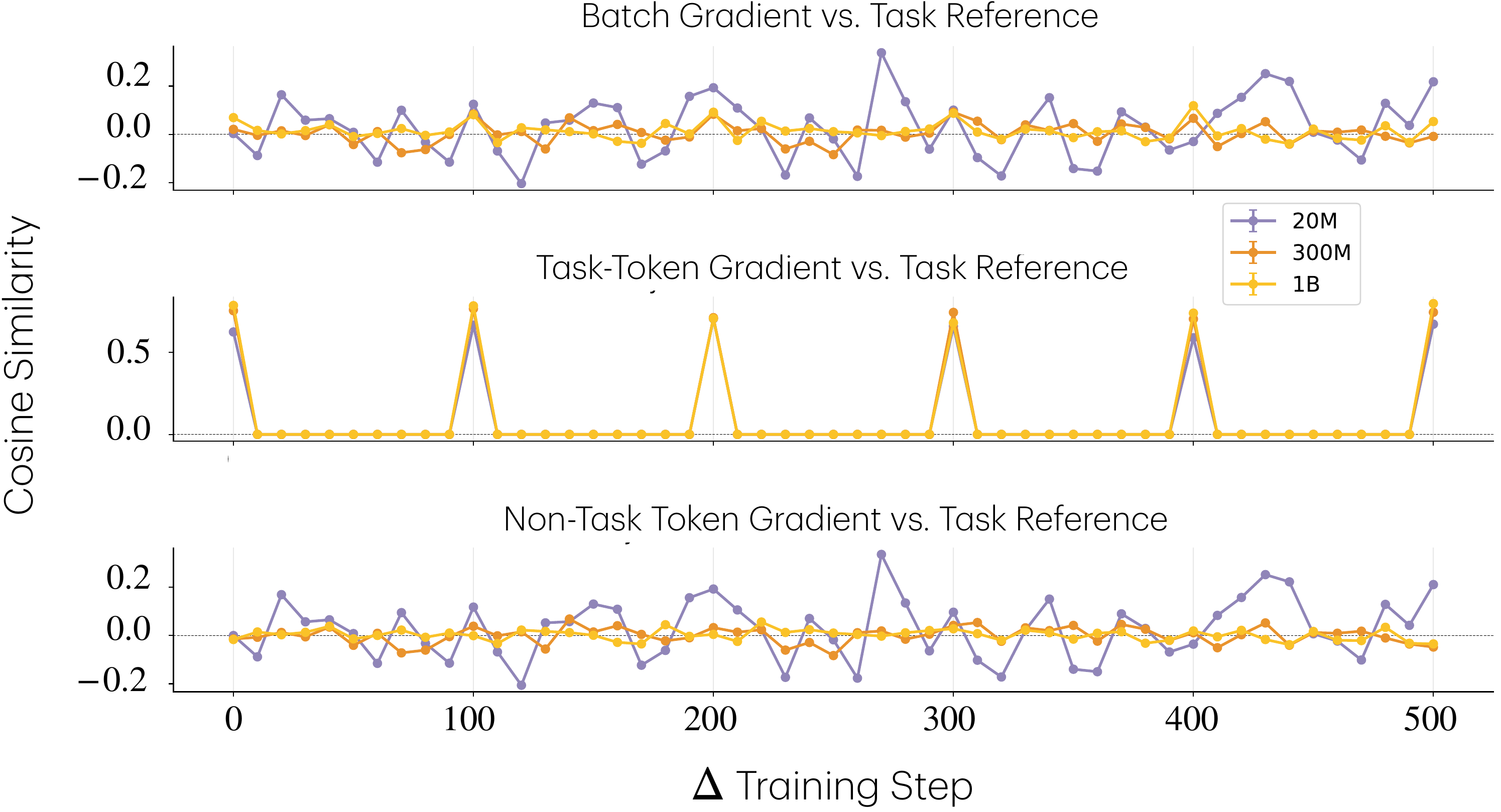}
    \vspace{-5pt}
    \caption{\textbf{Gradient Interference.}
    We inject 100 instances of the \tcmp\ task every 100 batches and analyze how batch gradients align with a task reference direction $g_r$.
    We further decompose the batch gradient into contributions from task tokens and non-task tokens.
    \textbf{Top}: Cosine similarity between full-batch gradient direction and the task direction $g_r$. \textbf{Middle}: Cosine similarity between batch task gradient direction and $g_r$. Higher values imply more task signals. \textbf{Bottom}: Cosine similarity between batch non-task gradient direction and $g_r$. Lower values imply less gradient interference. Overall, the batch gradient directions of larger models carry more task signals with little to no interference.
    \vspace{-20pt}
    }
    \label{fig:olmo_gradient_interference}
\end{figure*}

\paragraph{Task Reference Direction $g_r$.} We estimate the task reference direction using the aggregated gradient of the task loss computed over all 10K task instances, an analogy to $G_r$ in the toy setting. 
This direction may shift across training steps; however, at a given step, it is the optimal task direction.

\paragraph{Larger Models Show Less Gradient Interference Between General Language Modeling and Our Injected Task.} We quantify the relation between the task reference $g_r$ and the batch gradient $g$, which can be further decomposed into gradient from the task tokens $g_t$ (if exists in batch) and non-task tokens $g_{\mathit{nt}}$, i.e., $g = g_t + g_{\mathit{nt}}$. We first measure the cosine similarity between task reference and batch gradient direction, replicating the results in Fig.~\ref{fig:competition}. We additionally analyze whether task or non-task tokens contribute to this similarity; while task token gradient aligning with task reference $g_r$ is expected, non-task token gradient with non-zero cosine similarity suggests that the language modeling direction is interfering with the task gradient direction.

Results are shown in Fig.~\ref{fig:olmo_gradient_interference}. In the top panel, larger models have higher similarity between $g$ and $g_r$ at the injection steps, $0.08\pm0.02$ for the 1B model and $0.04\pm 0.04$ for the 300M model, the similarity typically regresses towards zero between injections. For the 20M model, the similarity scores oscillate wildly across batches, even at the injection step. In fact, the high similarity between non-task gradient $g_{nt}$ and $g_{r}$ reveals that for the 20M model the batch gradient similarity mostly comes from random collisions with task direction, with a similarity score of $0.10\pm 0.09$, while for larger models, $g_{nt}$ is almost orthogonal to $g_{r}$, with $7.58\times10^{-5}\pm 0.02$ for the 1B model, suggesting little to no gradient interference on this set of neurons.

\section{Discussion}

We develop a data-centric account of why larger models can learn tasks that smaller models fail to learn. Specifically, we show that larger models can learn rare tasks from the data mixture, and this phenomenon is explained by learning dynamics, i.e., competition of resources and retention of memories, as well as the task frequency and complexity.
Our perspective highlights that understanding scaling requires thinking beyond model expressivity. We need to understand how learning dynamics are at play with task frequency and complexity. It also points toward more intentional design of data mixtures to better elicit target capabilities. For example, simply scaling up the frequency of a target task might provide a more efficient way to learn the task than scaling up the model size.
Lastly, our findings on how better retention of memories enables learning rare tasks offers a new perspective that views memorization as a mechanism that can support learning abstraction: by retaining task instances longer, models can accumulate signals across batches to learn more generalizable structures of the task. 
This suggests memorization can in fact be beneficial, in line with arguments by Feldman~\cite{feldman2020does}.

\section*{Limitations}
As noted above, our account for why larger models learn more emphasizes the interplay of learning dynamics, task frequency, and task complexity. 
However, as discussed in Sec.~\ref{sec:intro}, there are other plausible accounts for explaining this phenomenon, e.g., ones that focus on model expressivity and sample efficiency. 
Our explanation hence should not be interpreted as a complete account of scaling. 
Instead, these explanations are complementary: expressivity constrains what can be represented, sample efficiency shapes how effectively data is used, and our account highlights how learning dynamics interact with the frequency and complexity of tasks. 
A full understanding likely requires accommodating all these explanations, rather than viewing them as competing hypotheses.
We also note we validated our key theoretical results using the OLMo pre-training pipeline, finding the empirical results on the injected tasks strongly match what the theoretical results predict.
However, we acknowledge that empirical validation in a realistic pre-training setting could still leave some analytic gaps. 
For example, we did not empirically verify behavior of larger-scale language models or over-trained language models. 
We also selected injected tasks that matched the frequency of tasks learned in OLMo pre-training, which does not rule out other scaling behaviors with extreme task frequency. 
Our empirical results should therefore be viewed as supporting evidence. 
We encourage future work to explore different training regimes, more tasks, and different frequency ranges.

\section*{Acknowledgments}
The authors thank Blake Bordelon, Jacob Zavatone-Veth, and Core Francisco Park for several useful references that helped concretize the claims posited in this work, and Yasaman Bahri, Surya Ganguli, Ari Holtzman, Stephanie Chan, Freya Behrens, Tom McGrath, Owen Lewis, Atticus Geiger, Jack Merullo, and Thomas Fel for fruitful conversations during the course of this project.
The authors also thank Thomas Icard for several comments on an earlier version of this draft.
This research is supported in part by a grant from Open Philanthropy (Coefficient Giving) to CP.

\bibliography{refs}

\begin{thebibliography}{140}
\providecommand{\natexlab}[1]{#1}
\providecommand{\url}[1]{\texttt{#1}}
\expandafter\ifx\csname urlstyle\endcsname\relax
  \providecommand{\doi}[1]{doi: #1}\else
  \providecommand{\doi}{doi: \begingroup \urlstyle{rm}\Url}\fi

\bibitem[Singh et~al.(2025)Singh, Fry, Perelman, Tart, Ganesh, El-Kishky, McLaughlin, Low, Ostrow, Ananthram, et~al.]{singh2025openai}
Aaditya Singh, Adam Fry, Adam Perelman, Adam Tart, Adi Ganesh, Ahmed El-Kishky, Aidan McLaughlin, Aiden Low, AJ~Ostrow, Akhila Ananthram, et~al.
\newblock Openai gpt-5 system card.
\newblock \emph{arXiv preprint arXiv:2601.03267}, 2025.

\bibitem[Anthropic(2026{\natexlab{a}})]{mythos}
Anthropic.
\newblock {System Card: Claude Mythos Preview}, 2026{\natexlab{a}}.
\newblock \url{https://www-cdn.anthropic.com/08ab9158070959f88f296514c21b7facce6f52bc.pdf}.

\bibitem[DeepMind(2026)]{gemini3}
Google DeepMind.
\newblock {Gemini 3 Pro - Model Card}, 2026.
\newblock \url{https://storage.googleapis.com/deepmind-media/Model-Cards/Gemini-3-Pro-Model-Card.pdf}.

\bibitem[Liu et~al.(2024{\natexlab{a}})Liu, Feng, Xue, Wang, Wu, Lu, Zhao, Deng, Zhang, Ruan, et~al.]{deepseekv3}
Aixin Liu, Bei Feng, Bing Xue, Bingxuan Wang, Bochao Wu, Chengda Lu, Chenggang Zhao, Chengqi Deng, Chenyu Zhang, Chong Ruan, et~al.
\newblock Deepseek-v3 technical report.
\newblock \emph{arXiv preprint arXiv:2412.19437}, 2024{\natexlab{a}}.

\bibitem[DeepSeek-AI(2026)]{deepseekv4}
DeepSeek-AI.
\newblock {DeepSeek-V4-Pro}, 2026.
\newblock \url{https://huggingface.co/deepseek-ai/DeepSeek-V4-Pro/blob/main/DeepSeek_V4.pdf}.

\bibitem[Team et~al.(2026)Team, Bai, Bai, Bao, Cai, Cao, Charles, Che, Chen, Chen, et~al.]{team2026kimi}
Kimi Team, Tongtong Bai, Yifan Bai, Yiping Bao, SH~Cai, Yuan Cao, Y~Charles, HS~Che, Cheng Chen, Guanduo Chen, et~al.
\newblock Kimi k2. 5: Visual agentic intelligence.
\newblock \emph{arXiv preprint arXiv:2602.02276}, 2026.

\bibitem[Kwa et~al.(2025)Kwa, West, Becker, Deng, Garcia, Hasin, Jawhar, Kinniment, Rush, Von~Arx, et~al.]{kwameasuring}
Thomas Kwa, Ben West, Joel Becker, Amy Deng, Katharyn Garcia, Max Hasin, Sami Jawhar, Megan Kinniment, Nate Rush, Sydney Von~Arx, et~al.
\newblock Measuring {AI} ability to complete long software tasks.
\newblock In \emph{The Thirty-ninth Annual Conference on Neural Information Processing Systems}, 2025.

\bibitem[Glazer et~al.(2024)Glazer, Erdil, Besiroglu, Chicharro, Chen, Gunning, Olsson, Denain, Ho, Santos, et~al.]{glazer2024frontiermath}
Elliot Glazer, Ege Erdil, Tamay Besiroglu, Diego Chicharro, Evan Chen, Alex Gunning, Caroline~Falkman Olsson, Jean-Stanislas Denain, Anson Ho, Emily de~Oliveira Santos, et~al.
\newblock Frontiermath: A benchmark for evaluating advanced mathematical reasoning in ai.
\newblock \emph{arXiv preprint arXiv:2411.04872}, 2024.

\bibitem[Foundation(2026)]{arc3}
ARC~Prize Foundation.
\newblock {ARC-AGI-3}, 2026.
\newblock \url{https://arcprize.org/arc-agi/3}.

\bibitem[Merrill et~al.(2026)Merrill, Shaw, Carlini, Li, Raj, Bercovich, Shi, Shin, Walshe, Buchanan, et~al.]{merrill2026terminal}
Mike~A Merrill, Alexander~G Shaw, Nicholas Carlini, Boxuan Li, Harsh Raj, Ivan Bercovich, Lin Shi, Jeong~Yeon Shin, Thomas Walshe, E~Kelly Buchanan, et~al.
\newblock Terminal-bench: Benchmarking agents on hard, realistic tasks in command line interfaces.
\newblock In \emph{The Fourteenth International Conference on Learning Representations}, 2026.
\newblock URL \url{https://openreview.net/forum?id=a7Qa4CcHak}.

\bibitem[Anthropic(2026{\natexlab{b}})]{rsp}
Anthropic.
\newblock {Responsible Scaling Policy}, 2026{\natexlab{b}}.
\newblock \url{https://www.anthropic.com/responsible-scaling-policy}.

\bibitem[OpenAI(2023)]{oairisk}
OpenAI.
\newblock {Our Approach to Frontier Risk}, 2023.
\newblock \url{https://openai.com/global-affairs/our-approach-to-frontier-risk/}.

\bibitem[Sim{\'e}oni et~al.(2025)Sim{\'e}oni, Vo, Seitzer, Baldassarre, Oquab, Jose, Khalidov, Szafraniec, Yi, Ramamonjisoa, et~al.]{simeoni2025dinov3}
Oriane Sim{\'e}oni, Huy~V Vo, Maximilian Seitzer, Federico Baldassarre, Maxime Oquab, Cijo Jose, Vasil Khalidov, Marc Szafraniec, Seungeun Yi, Micha{\"e}l Ramamonjisoa, et~al.
\newblock Dinov3.
\newblock \emph{arXiv preprint arXiv:2508.10104}, 2025.

\bibitem[Hu et~al.(2024)Hu, Liu, Han, Zhang, He, Zhao, Lin, Ding, Ou, Zeng, Liu, and Sun]{hu2023unlock}
Shengding Hu, Xin Liu, Xu~Han, Xinrong Zhang, Chaoqun He, Weilin Zhao, Yankai Lin, Ning Ding, Zebin Ou, Guoyang Zeng, Zhiyuan Liu, and Maosong Sun.
\newblock Predicting emergent abilities with infinite resolution evaluation.
\newblock In \emph{The Twelfth International Conference on Learning Representations}, 2024.
\newblock URL \url{https://openreview.net/forum?id=lDbjooxLkD}.

\bibitem[Wei et~al.(2022)Wei, Tay, Bommasani, Raffel, Zoph, Borgeaud, Yogatama, Bosma, Zhou, Metzler, Chi, Hashimoto, Vinyals, Liang, Dean, and Fedus]{wei2022emergent}
Jason Wei, Yi~Tay, Rishi Bommasani, Colin Raffel, Barret Zoph, Sebastian Borgeaud, Dani Yogatama, Maarten Bosma, Denny Zhou, Donald Metzler, Ed~H. Chi, Tatsunori Hashimoto, Oriol Vinyals, Percy Liang, Jeff Dean, and William Fedus.
\newblock Emergent abilities of large language models.
\newblock \emph{Transactions on Machine Learning Research}, 2022.
\newblock ISSN 2835-8856.
\newblock URL \url{https://openreview.net/forum?id=yzkSU5zdwD}.

\bibitem[Wei(2022)]{wei2022emergent_list_of_137}
Jason Wei.
\newblock {137 emergent abilities of large language models}, 2022.
\newblock \url{https://www.jasonwei.net/blog/emergence}.

\bibitem[Arora and Goyal(2023)]{arora2023theory}
Sanjeev Arora and Anirudh Goyal.
\newblock {A theory for emergence of complex skills in language models}.
\newblock \emph{arXiv preprint arXiv:2307.15936}, 2023.

\bibitem[Du et~al.(2024)Du, Zeng, Dong, and Tang]{du2024understanding}
Zhengxiao Du, Aohan Zeng, Yuxiao Dong, and Jie Tang.
\newblock Understanding emergent abilities of language models from the loss perspective.
\newblock In \emph{The Thirty-eighth Annual Conference on Neural Information Processing Systems}, 2024.
\newblock URL \url{https://openreview.net/forum?id=35DAviqMFo}.

\bibitem[Wei et~al.(2023)Wei, Wei, Tay, Tran, Webson, Lu, Chen, Liu, Huang, Zhou, et~al.]{wei2023larger}
Jerry Wei, Jason Wei, Yi~Tay, Dustin Tran, Albert Webson, Yifeng Lu, Xinyun Chen, Hanxiao Liu, Da~Huang, Denny Zhou, et~al.
\newblock Larger language models do in-context learning differently.
\newblock \emph{arXiv preprint arXiv:2303.03846}, 2023.

\bibitem[Kaplan et~al.(2020)Kaplan, McCandlish, Henighan, Brown, Chess, Child, Gray, Radford, Wu, and Amodei]{kaplan2020scaling}
Jared Kaplan, Sam McCandlish, Tom Henighan, Tom~B Brown, Benjamin Chess, Rewon Child, Scott Gray, Alec Radford, Jeffrey Wu, and Dario Amodei.
\newblock Scaling laws for neural language models.
\newblock \emph{arXiv preprint arXiv:2001.08361}, 2020.

\bibitem[Hestness et~al.(2017)Hestness, Narang, Ardalani, Diamos, Jun, Kianinejad, Patwary, Yang, and Zhou]{hestness2017deep}
Joel Hestness, Sharan Narang, Newsha Ardalani, Gregory Diamos, Heewoo Jun, Hassan Kianinejad, Md~Mostofa~Ali Patwary, Yang Yang, and Yanqi Zhou.
\newblock Deep learning scaling is predictable, empirically.
\newblock \emph{arXiv preprint arXiv:1712.00409}, 2017.

\bibitem[Rosenfeld et~al.(2020)Rosenfeld, Rosenfeld, Belinkov, and Shavit]{rosenfeld2019constructive}
Jonathan~S. Rosenfeld, Amir Rosenfeld, Yonatan Belinkov, and Nir Shavit.
\newblock A constructive prediction of the generalization error across scales.
\newblock In \emph{International Conference on Learning Representations}, 2020.
\newblock URL \url{https://openreview.net/forum?id=ryenvpEKDr}.

\bibitem[Henighan et~al.(2020)Henighan, Kaplan, Katz, Chen, Hesse, Jackson, Jun, Brown, Dhariwal, Gray, et~al.]{henighan2020scaling}
Tom Henighan, Jared Kaplan, Mor Katz, Mark Chen, Christopher Hesse, Jacob Jackson, Heewoo Jun, Tom~B Brown, Prafulla Dhariwal, Scott Gray, et~al.
\newblock Scaling laws for autoregressive generative modeling.
\newblock \emph{arXiv preprint arXiv:2010.14701}, 2020.

\bibitem[Hernandez et~al.(2021)Hernandez, Kaplan, Henighan, and McCandlish]{hernandez2021scaling}
Danny Hernandez, Jared Kaplan, Tom Henighan, and Sam McCandlish.
\newblock Scaling laws for transfer.
\newblock \emph{arXiv preprint arXiv:2102.01293}, 2021.

\bibitem[Rae et~al.(2021)Rae, Borgeaud, Cai, Millican, Hoffmann, Song, Aslanides, Henderson, Ring, Young, et~al.]{rae2021scaling}
Jack~W Rae, Sebastian Borgeaud, Trevor Cai, Katie Millican, Jordan Hoffmann, Francis Song, John Aslanides, Sarah Henderson, Roman Ring, Susannah Young, et~al.
\newblock Scaling language models: Methods, analysis \& insights from training gopher.
\newblock \emph{arXiv preprint arXiv:2112.11446}, 2021.

\bibitem[Alabdulmohsin et~al.(2022)Alabdulmohsin, Neyshabur, and Zhai]{alabdulmohsin2022revisiting}
Ibrahim~M Alabdulmohsin, Behnam Neyshabur, and Xiaohua Zhai.
\newblock Revisiting neural scaling laws in language and vision.
\newblock \emph{Advances in Neural Information Processing Systems}, 35:\penalty0 22300--22312, 2022.

\bibitem[Grattafiori et~al.(2024)Grattafiori, Dubey, Jauhri, Pandey, Kadian, Al-Dahle, Letman, Mathur, Schelten, Vaughan, et~al.]{grattafiori2024llama}
Aaron Grattafiori, Abhimanyu Dubey, Abhinav Jauhri, Abhinav Pandey, Abhishek Kadian, Ahmad Al-Dahle, Aiesha Letman, Akhil Mathur, Alan Schelten, Alex Vaughan, et~al.
\newblock The llama 3 herd of models.
\newblock \emph{arXiv preprint arXiv:2407.21783}, 2024.

\bibitem[Hoffmann et~al.(2022)Hoffmann, Borgeaud, Mensch, Buchatskaya, Cai, Rutherford, Casas, Hendricks, Welbl, Clark, et~al.]{hoffmann2022training}
Jordan Hoffmann, Sebastian Borgeaud, Arthur Mensch, Elena Buchatskaya, Trevor Cai, Eliza Rutherford, Diego de~Las Casas, Lisa~Anne Hendricks, Johannes Welbl, Aidan Clark, et~al.
\newblock Training compute-optimal large language models.
\newblock In Alice~H. Oh, Alekh Agarwal, Danielle Belgrave, and Kyunghyun Cho, editors, \emph{Advances in Neural Information Processing Systems}, 2022.
\newblock URL \url{https://openreview.net/forum?id=iBBcRUlOAPR}.

\bibitem[Pearce and Song(2024)]{pearce2024reconciling}
Tim Pearce and Jinyeop Song.
\newblock Reconciling kaplan and chinchilla scaling laws.
\newblock \emph{Transactions on Machine Learning Research}, 2024.
\newblock ISSN 2835-8856.
\newblock URL \url{https://openreview.net/forum?id=NLoaLyuUUF}.

\bibitem[Bordelon et~al.(2024)Bordelon, Atanasov, and Pehlevan]{bordelon2024dynamical}
Blake Bordelon, Alexander Atanasov, and Cengiz Pehlevan.
\newblock A dynamical model of neural scaling laws.
\newblock In \emph{Forty-first International Conference on Machine Learning}, 2024.
\newblock URL \url{https://openreview.net/forum?id=nbOY1OmtRc}.

\bibitem[Bordelon et~al.(2025)Bordelon, Atanasov, and Pehlevan]{bordelon2025feature}
Blake Bordelon, Alexander Atanasov, and Cengiz Pehlevan.
\newblock How feature learning can improve neural scaling laws.
\newblock In \emph{The Thirteenth International Conference on Learning Representations}, 2025.
\newblock URL \url{https://openreview.net/forum?id=dEypApI1MZ}.

\bibitem[Bahri et~al.(2024)Bahri, Dyer, Kaplan, Lee, and Sharma]{bahri2024explaining}
Yasaman Bahri, Ethan Dyer, Jared Kaplan, Jaehoon Lee, and Utkarsh Sharma.
\newblock Explaining neural scaling laws.
\newblock \emph{Proceedings of the National Academy of Sciences}, 121\penalty0 (27):\penalty0 e2311878121, 2024.

\bibitem[Lin et~al.(2024)Lin, Wu, Kakade, Bartlett, and Lee]{lin2024scaling}
Licong Lin, Jingfeng Wu, Sham~M. Kakade, Peter~L. Bartlett, and Jason Lee.
\newblock Scaling laws in linear regression: Compute, parameters, and data.
\newblock In A.~Globerson, L.~Mackey, D.~Belgrave, A.~Fan, U.~Paquet, J.~Tomczak, and C.~Zhang, editors, \emph{Advances in Neural Information Processing Systems}, volume~37, pages 60556--60606. Curran Associates, Inc., 2024.
\newblock \doi{10.52202/079017-1937}.
\newblock URL \url{https://proceedings.neurips.cc/paper_files/paper/2024/file/6fcb1afcc1e9c2c82c8ddddf03bcf0f6-Paper-Conference.pdf}.

\bibitem[Michaud et~al.(2023)Michaud, Liu, Girit, and Tegmark]{michaud2024quantization}
Eric Michaud, Ziming Liu, Uzay Girit, and Max Tegmark.
\newblock The quantization model of neural scaling.
\newblock In A.~Oh, T.~Naumann, A.~Globerson, K.~Saenko, M.~Hardt, and S.~Levine, editors, \emph{Advances in Neural Information Processing Systems}, volume~36, pages 28699--28722. Curran Associates, Inc., 2023.
\newblock URL \url{https://proceedings.neurips.cc/paper_files/paper/2023/file/5b6346a05a537d4cdb2f50323452a9fe-Paper-Conference.pdf}.

\bibitem[Maloney et~al.(2022)Maloney, Roberts, and Sully]{maloney2022solvable}
Alexander Maloney, Daniel~A Roberts, and James Sully.
\newblock A solvable model of neural scaling laws.
\newblock \emph{arXiv preprint arXiv:2210.16859}, 2022.

\bibitem[Lubana et~al.(2025)Lubana, Kawaguchi, Dick, and Tanaka]{lubana2024percolation}
Ekdeep~Singh Lubana, Kyogo Kawaguchi, Robert~P. Dick, and Hidenori Tanaka.
\newblock A percolation model of emergence: Analyzing transformers trained on a formal language.
\newblock In \emph{The Thirteenth International Conference on Learning Representations}, 2025.
\newblock URL \url{https://openreview.net/forum?id=0pLCDJVVRD}.

\bibitem[Cagnetta et~al.(2025{\natexlab{a}})Cagnetta, Kang, and Wyart]{cagnetta2025learning}
Francesco Cagnetta, Hyunmo Kang, and Matthieu Wyart.
\newblock Learning curves theory for hierarchically compositional data with power-law distributed features.
\newblock In \emph{Forty-second International Conference on Machine Learning}, 2025{\natexlab{a}}.
\newblock URL \url{https://openreview.net/forum?id=Lw0kC75dY0}.

\bibitem[Cagnetta et~al.(2025{\natexlab{b}})Cagnetta, Favero, Sclocchi, and Wyart]{cagnetta2025scaling}
Francesco Cagnetta, Alessandro Favero, Antonio Sclocchi, and Matthieu Wyart.
\newblock Scaling laws and representation learning in simple hierarchical languages: Transformers vs. convolutional architectures.
\newblock \emph{arXiv preprint arXiv:2505.07070}, 2025{\natexlab{b}}.

\bibitem[Cagnetta et~al.(2026)Cagnetta, Ravent{\'o}s, Ganguli, and Wyart]{cagnetta2026deriving}
Francesco Cagnetta, Allan Ravent{\'o}s, Surya Ganguli, and Matthieu Wyart.
\newblock Deriving neural scaling laws from the statistics of natural language.
\newblock \emph{arXiv preprint arXiv:2602.07488}, 2026.

\bibitem[Edelman et~al.(2023)Edelman, Goel, Kakade, Malach, and Zhang]{edelman2023pareto}
Benjamin Edelman, Surbhi Goel, Sham Kakade, Eran Malach, and Cyril Zhang.
\newblock Pareto frontiers in deep feature learning: Data, compute, width, and luck.
\newblock \emph{Advances in Neural Information Processing Systems}, 36:\penalty0 48021--48034, 2023.

\bibitem[Lambert et~al.(2025)Lambert, Morrison, Pyatkin, Huang, Ivison, Brahman, Miranda, Liu, Dziri, Lyu, et~al.]{lambert2024tulu}
Nathan Lambert, Jacob Morrison, Valentina Pyatkin, Shengyi Huang, Hamish Ivison, Faeze Brahman, Lester James~Validad Miranda, Alisa Liu, Nouha Dziri, Xinxi Lyu, et~al.
\newblock Tulu 3: Pushing frontiers in open language model post-training.
\newblock In \emph{Second Conference on Language Modeling}, 2025.
\newblock URL \url{https://openreview.net/forum?id=i1uGbfHHpH}.

\bibitem[H{\"u}botter et~al.(2026)H{\"u}botter, L{\"u}beck, Behric, Baumann, Bagatella, Marta, Hakimi, Shenfeld, Buening, Guestrin, et~al.]{hubotter2026reinforcement}
Jonas H{\"u}botter, Frederike L{\"u}beck, Lejs Behric, Anton Baumann, Marco Bagatella, Daniel Marta, Ido Hakimi, Idan Shenfeld, Thomas~Kleine Buening, Carlos Guestrin, et~al.
\newblock Reinforcement learning via self-distillation.
\newblock \emph{arXiv preprint arXiv:2601.20802}, 2026.

\bibitem[Bloomberg(2026)]{deepdistill}
Bloomberg.
\newblock {OpenAI Claims DeepSeek Distilled US Models to Gain an Edge}, 2026.
\newblock \url{https://www.bloomberg.com/news/articles/2026-02-12/openai-accuses-deepseek-of-distilling-us-models-to-gain-an-edge?}

\bibitem[Shao et~al.(2024)Shao, Wang, Zhu, Xu, Song, Bi, Zhang, Zhang, Li, Wu, et~al.]{shao2024deepseekmath}
Zhihong Shao, Peiyi Wang, Qihao Zhu, Runxin Xu, Junxiao Song, Xiao Bi, Haowei Zhang, Mingchuan Zhang, YK~Li, Yang Wu, et~al.
\newblock Deepseekmath: Pushing the limits of mathematical reasoning in open language models.
\newblock \emph{arXiv preprint arXiv:2402.03300}, 2024.

\bibitem[Xin et~al.(2025)Xin, Ren, Song, Shao, Zhao, Wang, Liu, Zhang, Lu, Du, , et~al.]{xin2024deepseek}
Huajian Xin, Z.Z. Ren, Junxiao Song, Zhihong Shao, Wanjia Zhao, Haocheng Wang, Bo~Liu, Liyue Zhang, Xuan Lu, Qiushi Du, , et~al.
\newblock Deepseek-prover-v1.5: Harnessing proof assistant feedback for reinforcement learning and monte-carlo tree search.
\newblock In \emph{The Thirteenth International Conference on Learning Representations}, 2025.
\newblock URL \url{https://openreview.net/forum?id=I4YAIwrsXa}.

\bibitem[Agarwal et~al.(2024)Agarwal, Vieillard, Zhou, Stanczyk, Garea, Geist, and Bachem]{agarwal2024policy}
Rishabh Agarwal, Nino Vieillard, Yongchao Zhou, Piotr Stanczyk, Sabela~Ramos Garea, Matthieu Geist, and Olivier Bachem.
\newblock On-policy distillation of language models: Learning from self-generated mistakes.
\newblock In \emph{The Twelfth International Conference on Learning Representations}, 2024.
\newblock URL \url{https://openreview.net/forum?id=3zKtaqxLhW}.

\bibitem[Zhao et~al.(2026)Zhao, Xie, Liu, Huang, Pang, Chen, and Grover]{zhao2026self}
Siyan Zhao, Zhihui Xie, Mengchen Liu, Jing Huang, Guan Pang, Feiyu Chen, and Aditya Grover.
\newblock Self-distilled reasoner: On-policy self-distillation for large language models.
\newblock \emph{arXiv preprint arXiv:2601.18734}, 2026.

\bibitem[Tang et~al.(2026)Tang, Wang, Madaan, and Munos]{tang2025beyond}
Yunhao Tang, Sid Wang, Lovish Madaan, and Remi Munos.
\newblock Beyond verifiable rewards: Scaling reinforcement learning in language models to unverifiable data.
\newblock In \emph{The Thirty-ninth Annual Conference on Neural Information Processing Systems}, 2026.
\newblock URL \url{https://openreview.net/forum?id=pc6M9h3T9m}.

\bibitem[Team et~al.(2025)Team, Du, Gao, Xing, Jiang, Chen, Li, Xiao, Du, Liao, et~al.]{team2025kimi}
Kimi Team, Angang Du, Bofei Gao, Bowei Xing, Changjiu Jiang, Cheng Chen, Cheng Li, Chenjun Xiao, Chenzhuang Du, Chonghua Liao, et~al.
\newblock Kimi k1. 5: Scaling reinforcement learning with llms.
\newblock \emph{arXiv preprint arXiv:2501.12599}, 2025.

\bibitem[Blakeney et~al.(2022)Blakeney, Forde, Frankle, Zong, and Leavitt]{blakeney2022reduce}
Cody Blakeney, Jessica~Zosa Forde, Jonathan Frankle, Ziliang Zong, and Matthew~L Leavitt.
\newblock Reduce, reuse, recycle: Improving training efficiency with distillation.
\newblock \emph{arXiv preprint arXiv:2211.00683}, 2022.

\bibitem[Qiu et~al.(2025)Qiu, Xiao, Wilson, Pennington, and Agarwala]{qiu2025scaling}
Shikai Qiu, Lechao Xiao, Andrew~Gordon Wilson, Jeffrey Pennington, and Atish Agarwala.
\newblock Scaling collapse reveals universal dynamics in compute-optimally trained neural networks.
\newblock In \emph{Forty-second International Conference on Machine Learning}, 2025.
\newblock URL \url{https://openreview.net/forum?id=Fvq9ogLnLN}.

\bibitem[Paquette et~al.(2024)Paquette, Paquette, Xiao, and Pennington]{paquette2024}
Elliot Paquette, Courtney Paquette, Lechao Xiao, and Jeffrey Pennington.
\newblock 4+3 phases of compute-optimal neural scaling laws.
\newblock \emph{Advances in Neural Information Processing Systems}, 37:\penalty0 16459--16537, 2024.

\bibitem[{Team OLMo} et~al.(2024){Team OLMo}, Walsh, Soldaini, Groeneveld, Lo, Arora, Bhagia, Gu, Huang, Jordan, et~al.]{olmo20242olmo2furious}
{Team OLMo}, Pete Walsh, Luca Soldaini, Dirk Groeneveld, Kyle Lo, Shane Arora, Akshita Bhagia, Yuling Gu, Shengyi Huang, Matt Jordan, et~al.
\newblock {2 OLMo 2 Furious}, 2024.
\newblock URL \url{https://arxiv.org/abs/2501.00656}.

\bibitem[Zhang et~al.(2026)Zhang, Saxe, and Latham]{zhang2025saddle}
Yedi Zhang, Andrew~M Saxe, and Peter~E. Latham.
\newblock Saddle-to-saddle dynamics explains a simplicity bias across neural network architectures.
\newblock In \emph{The Fourteenth International Conference on Learning Representations}, 2026.
\newblock URL \url{https://openreview.net/forum?id=Vit5M0G5Gb}.

\bibitem[Abbe et~al.(2023)Abbe, Adsera, and Misiakiewicz]{abbe2023sgd}
Emmanuel Abbe, Enric~Boix Adsera, and Theodor Misiakiewicz.
\newblock Sgd learning on neural networks: leap complexity and saddle-to-saddle dynamics.
\newblock In \emph{The Thirty Sixth Annual Conference on Learning Theory}, pages 2552--2623. PMLR, 2023.

\bibitem[Jacot et~al.(2021)Jacot, Ged, {\c{S}}im{\c{s}}ek, Hongler, and Gabriel]{jacot2021saddle}
Arthur Jacot, Fran{\c{c}}ois Ged, Berfin {\c{S}}im{\c{s}}ek, Cl{\'e}ment Hongler, and Franck Gabriel.
\newblock Saddle-to-saddle dynamics in deep linear networks: Small initialization training, symmetry, and sparsity.
\newblock \emph{arXiv preprint arXiv:2106.15933}, 2021.

\bibitem[Kunin et~al.(2026)Kunin, Marchetti, Chen, Karkada, Simon, DeWeese, Ganguli, and Miolane]{kunin2025alternating}
Daniel Kunin, Giovanni~Luca Marchetti, Feng Chen, Dhruva Karkada, James~B Simon, Michael~R DeWeese, Surya Ganguli, and Nina Miolane.
\newblock Alternating gradient flows: A theory of feature learning in two-layer neural networks.
\newblock In \emph{The Thirty-ninth Annual Conference on Neural Information Processing Systems}, 2026.
\newblock URL \url{https://openreview.net/forum?id=t7LKc0MMW6}.

\bibitem[Jagielski et~al.(2023)Jagielski, Thakkar, Tramer, Ippolito, Lee, Carlini, Wallace, Song, Thakurta, Papernot, and Zhang]{jagielski2023measuring}
Matthew Jagielski, Om~Thakkar, Florian Tramer, Daphne Ippolito, Katherine Lee, Nicholas Carlini, Eric Wallace, Shuang Song, Abhradeep~Guha Thakurta, Nicolas Papernot, and Chiyuan Zhang.
\newblock Measuring forgetting of memorized training examples.
\newblock In \emph{The Eleventh International Conference on Learning Representations}, 2023.
\newblock URL \url{https://openreview.net/forum?id=7bJizxLKrR}.

\bibitem[Carlini et~al.(2019)Carlini, Liu, Erlingsson, Kos, and Song]{carlini2019secret}
Nicholas Carlini, Chang Liu, {\'U}lfar Erlingsson, Jernej Kos, and Dawn Song.
\newblock The secret sharer: Evaluating and testing unintended memorization in neural networks.
\newblock In \emph{28th USENIX Security Symposium (USENIX Security 19)}, pages 267--284, Santa Clara, CA, August 2019. USENIX Association.
\newblock ISBN 978-1-939133-06-9.
\newblock URL \url{https://www.usenix.org/conference/usenixsecurity19/presentation/carlini}.

\bibitem[Huang et~al.(2024)Huang, Yang, and Potts]{huang2024demystifying}
Jing Huang, Diyi Yang, and Christopher Potts.
\newblock Demystifying verbatim memorization in large language models.
\newblock In \emph{Proceedings of the 2024 Conference on Empirical Methods in Natural Language Processing}, pages 10711--10732, 2024.

\bibitem[Wei et~al.(2026)Wei, Godbole, Khan, Wang, Zhu, Flemings, Kashyap, Gummadi, Neiswanger, and Jia]{wei2026hubble}
Johnny Wei, Ameya Godbole, Mohammad~Aflah Khan, Ryan~Yixiang Wang, Xiaoyuan Zhu, James Flemings, Nitya Kashyap, Krishna~P. Gummadi, Willie Neiswanger, and Robin Jia.
\newblock Hubble: a model suite to advance the study of {LLM} memorization.
\newblock In \emph{The Fourteenth International Conference on Learning Representations}, 2026.
\newblock URL \url{https://openreview.net/forum?id=ZfdnZhOP0k}.

\bibitem[Hwang and Park(2026)]{hwang2026intrinsictasksymmetrydrives}
Hyeonbin Hwang and Yeachan Park.
\newblock Intrinsic task symmetry drives generalization in algorithmic tasks, 2026.
\newblock URL \url{https://arxiv.org/abs/2603.01968}.

\bibitem[Soldaini et~al.(2024)Soldaini, Kinney, Bhagia, Schwenk, Atkinson, Authur, Bogin, Chandu, Dumas, Elazar, et~al.]{dolma}
Luca Soldaini, Rodney Kinney, Akshita Bhagia, Dustin Schwenk, David Atkinson, Russell Authur, Ben Bogin, Khyathi Chandu, Jennifer Dumas, Yanai Elazar, et~al.
\newblock {Dolma: An Open Corpus of Three Trillion Tokens for Language Model Pretraining Research}.
\newblock \emph{arXiv preprint}, 2024.
\newblock URL \url{https://huggingface.co/datasets/allenai/dolma}.

\bibitem[Groeneveld et~al.(2024)Groeneveld, Beltagy, Walsh, Bhagia, Kinney, Tafjord, Jha, Ivison, Magnusson, Wang, et~al.]{groeneveld2023olmo}
Dirk Groeneveld, Iz~Beltagy, Pete Walsh, Akshita Bhagia, Rodney Kinney, Oyvind Tafjord, Ananya~Harsh Jha, Hamish Ivison, Ian Magnusson, Yizhong Wang, et~al.
\newblock {OLM}o: Accelerating the science of language models.
\newblock In \emph{Proceedings of the 62nd Annual Meeting of the Association for Computational Linguistics (Volume 1: Long Papers)}, pages 15789--15809, Bangkok, Thailand, August 2024. Association for Computational Linguistics.
\newblock \doi{10.18653/v1/2024.acl-long.841}.
\newblock URL \url{https://aclanthology.org/2024.acl-long.841/}.

\bibitem[Power et~al.(2022)Power, Burda, Edwards, Babuschkin, and Misra]{power2022grokking}
Alethea Power, Yuri Burda, Harri Edwards, Igor Babuschkin, and Vedant Misra.
\newblock Grokking: Generalization beyond overfitting on small algorithmic datasets.
\newblock \emph{arXiv preprint arXiv:2201.02177}, 2022.

\bibitem[Nanda et~al.(2022)Nanda, Chan, Lieberum, Smith, and Steinhardt]{nanda_progress_2022}
Neel Nanda, Lawrence Chan, Tom Lieberum, Jess Smith, and Jacob Steinhardt.
\newblock {Progress measures for grokking via mechanistic interpretability}.
\newblock In \emph{The Eleventh International Conference on Learning Representations}, sep 2022.
\newblock URL \url{https://openreview.net/forum?id=9XFSbDPmdW}.

\bibitem[Zhou et~al.(2024)Zhou, Fu, Sharan, and Jia]{zhou2024pretrained}
Tianyi Zhou, Deqing Fu, Vatsal Sharan, and Robin Jia.
\newblock Pre-trained large language models use fourier features to compute addition.
\newblock In \emph{The Thirty-eighth Annual Conference on Neural Information Processing Systems}, 2024.
\newblock URL \url{https://openreview.net/forum?id=i4MutM2TZb}.

\bibitem[Feucht et~al.(2026)Feucht, Haklay, Bhalla, Wurgaft, Rager, Sarfati, Merullo, McGrath, Lewis, Lubana, Fel, and Geiger]{feucht2026arithmeticwildllamauses}
Sheridan Feucht, Tal Haklay, Usha Bhalla, Daniel Wurgaft, Can Rager, Raphaël Sarfati, Jack Merullo, Thomas McGrath, Owen Lewis, Ekdeep~Singh Lubana, Thomas Fel, and Atticus Geiger.
\newblock Arithmetic in the wild: Llama uses base-10 addition to reason about cyclic concepts, 2026.
\newblock URL \url{https://arxiv.org/abs/2605.01148}.

\bibitem[Geiger et~al.(2024)Geiger, Wu, Potts, Icard, and Goodman]{geiger2024das}
Atticus Geiger, Zhengxuan Wu, Christopher Potts, Thomas Icard, and Noah Goodman.
\newblock Finding alignments between interpretable causal variables and distributed neural representations.
\newblock In Francesco Locatello and Vanessa Didelez, editors, \emph{Proceedings of the Third Conference on Causal Learning and Reasoning}, volume 236 of \emph{Proceedings of Machine Learning Research}, pages 160--187. PMLR, 01--03 Apr 2024.
\newblock URL \url{https://proceedings.mlr.press/v236/geiger24a.html}.

\bibitem[Feldman(2020)]{feldman2020does}
Vitaly Feldman.
\newblock Does learning require memorization? a short tale about a long tail.
\newblock In \emph{Proceedings of the 52nd annual ACM SIGACT symposium on theory of computing}, pages 954--959, 2020.

\bibitem[Raffel et~al.(2020)Raffel, Shazeer, Roberts, Lee, Narang, Matena, Zhou, Li, and Liu]{raffel2020exploring}
Colin Raffel, Noam Shazeer, Adam Roberts, Katherine Lee, Sharan Narang, Michael Matena, Yanqi Zhou, Wei Li, and Peter~J Liu.
\newblock Exploring the limits of transfer learning with a unified text-to-text transformer.
\newblock \emph{Journal of machine learning research}, 21\penalty0 (140):\penalty0 1--67, 2020.

\bibitem[Xie et~al.(2023)Xie, Santurkar, Ma, and Liang]{xie2023data}
Sang~Michael Xie, Shibani Santurkar, Tengyu Ma, and Percy~S Liang.
\newblock Data selection for language models via importance resampling.
\newblock \emph{Advances in Neural Information Processing Systems}, 36:\penalty0 34201--34227, 2023.

\bibitem[Xie et~al.(2024)Xie, Pham, Dong, Du, Liu, Lu, Liang, Le, Ma, and Yu]{xie2024doremi}
Sang~Michael Xie, Hieu Pham, Xuanyi Dong, Nan Du, Hanxiao Liu, Yifeng Lu, Percy~S Liang, Quoc~V Le, Tengyu Ma, and Adams~Wei Yu.
\newblock Doremi: Optimizing data mixtures speeds up language model pretraining.
\newblock \emph{Advances in Neural Information Processing Systems}, 36, 2024.

\bibitem[Xie(2024)]{xie2024foundation}
Sang~Michael Xie.
\newblock \emph{Foundation Models from a Data-Distributional View}.
\newblock Stanford University, 2024.

\bibitem[Ramesh et~al.(2022)Ramesh, Mao, Griniasty, Yang, Teoh, Transtrum, Sethna, and Chaudhari]{ramesh2022picture}
Rahul Ramesh, Jialin Mao, Itay Griniasty, Rubing Yang, Han~Kheng Teoh, Mark Transtrum, James~P Sethna, and Pratik Chaudhari.
\newblock A picture of the space of typical learnable tasks.
\newblock \emph{arXiv preprint arXiv:2210.17011}, 2022.

\bibitem[Ramesh(2025)]{ramesh2025principles}
Rahul Ramesh.
\newblock \emph{The Principles of Learning on Multiple Tasks}.
\newblock PhD thesis, University of Pennsylvania, 2025.

\bibitem[Penedo et~al.(2023)Penedo, Malartic, Hesslow, Cojocaru, Alobeidli, Cappelli, Pannier, Almazrouei, and Launay]{penedo2023refinedweb}
Guilherme Penedo, Quentin Malartic, Daniel Hesslow, Ruxandra Cojocaru, Hamza Alobeidli, Alessandro Cappelli, Baptiste Pannier, Ebtesam Almazrouei, and Julien Launay.
\newblock The refinedweb dataset for falcon {LLM}: Outperforming curated corpora with web data only.
\newblock In \emph{Thirty-seventh Conference on Neural Information Processing Systems Datasets and Benchmarks Track}, 2023.
\newblock URL \url{https://openreview.net/forum?id=kM5eGcdCzq}.

\bibitem[Penedo et~al.(2024)Penedo, Kydl{\'\i}{\v{c}}ek, Lozhkov, Mitchell, Raffel, Von~Werra, Wolf, et~al.]{penedo2024fineweb}
Guilherme Penedo, Hynek Kydl{\'\i}{\v{c}}ek, Anton Lozhkov, Margaret Mitchell, Colin Raffel, Leandro Von~Werra, Thomas Wolf, et~al.
\newblock The fineweb datasets: Decanting the web for the finest text data at scale.
\newblock \emph{Advances in Neural Information Processing Systems}, 37:\penalty0 30811--30849, 2024.

\bibitem[Maini et~al.(2025)Maini, Dorna, Doshi, Carranza, Pan, Urbanek, Burstein, Fang, Deng, Abbas, et~al.]{maini2025beyondweb}
Pratyush Maini, Vineeth Dorna, Parth Doshi, Aldo Carranza, Fan Pan, Jack Urbanek, Paul Burstein, Alex Fang, Alvin Deng, Amro Abbas, et~al.
\newblock Beyondweb: Lessons from scaling synthetic data for trillion-scale pretraining.
\newblock \emph{arXiv preprint arXiv:2508.10975}, 2025.

\bibitem[Sam et~al.(2026)Sam, Goyal, Maini, Robey, and Kolter]{sam2026should}
Dylan Sam, Sachin Goyal, Pratyush Maini, Alexander Robey, and J~Zico Kolter.
\newblock When should we introduce safety interventions during pretraining?
\newblock \emph{arXiv preprint arXiv:2601.07087}, 2026.

\bibitem[Goyal et~al.(2024)Goyal, Maini, Lipton, Raghunathan, and Kolter]{goyal2024scaling}
Sachin Goyal, Pratyush Maini, Zachary~C Lipton, Aditi Raghunathan, and J~Zico Kolter.
\newblock Scaling laws for data filtering--data curation cannot be compute agnostic.
\newblock In \emph{Proceedings of the IEEE/CVF Conference on Computer Vision and Pattern Recognition}, pages 22702--22711, 2024.

\bibitem[Caruana(1997)]{caruana1997multitask}
Rich Caruana.
\newblock Multitask learning.
\newblock \emph{Machine learning}, 28\penalty0 (1):\penalty0 41--75, 1997.

\bibitem[Aljundi(2019)]{aljundi2019continual}
Rahaf Aljundi.
\newblock Continual learning in neural networks.
\newblock \emph{arXiv preprint arXiv:1910.02718}, 2019.

\bibitem[Liu et~al.(2021)Liu, Liu, Jin, Stone, and Liu]{liu2021conflict}
Bo~Liu, Xingchao Liu, Xiaojie Jin, Peter Stone, and Qiang Liu.
\newblock Conflict-averse gradient descent for multi-task learning.
\newblock \emph{Advances in neural information processing systems}, 34:\penalty0 18878--18890, 2021.

\bibitem[Aljundi et~al.(2019)Aljundi, Lin, Goujaud, and Bengio]{aljundi2019gradient}
Rahaf Aljundi, Min Lin, Baptiste Goujaud, and Yoshua Bengio.
\newblock Gradient based sample selection for online continual learning.
\newblock \emph{Advances in neural information processing systems}, 32, 2019.

\bibitem[Wu et~al.(2026)Wu, Samanta, Jain, Fujimoto, Kwon, Kretzu, Yu, Hassani, Vidolov, and Efroni]{wu2026imbalanced}
Runzhe Wu, Ankur Samanta, Ayush Jain, Scott Fujimoto, Jeongyeol Kwon, Ben Kretzu, Youliang Yu, Kaveh Hassani, Boris Vidolov, and Yonathan Efroni.
\newblock Imbalanced gradients in rl post-training of multi-task llms.
\newblock In \emph{Findings of the Association for Computational Linguistics: EACL 2026}, pages 3137--3150, 2026.

\bibitem[Pezeshki et~al.(2021)Pezeshki, Kaba, Bengio, Courville, Precup, and Lajoie]{pezeshki2021gradient}
Mohammad Pezeshki, Oumar Kaba, Yoshua Bengio, Aaron~C Courville, Doina Precup, and Guillaume Lajoie.
\newblock Gradient starvation: A learning proclivity in neural networks.
\newblock \emph{Advances in Neural Information Processing Systems}, 34:\penalty0 1256--1272, 2021.

\bibitem[Evron et~al.(2022)Evron, Moroshko, Ward, Srebro, and Soudry]{evron2022catastrophic}
Itay Evron, Edward Moroshko, Rachel Ward, Nathan Srebro, and Daniel Soudry.
\newblock How catastrophic can catastrophic forgetting be in linear regression?
\newblock In \emph{Conference on Learning Theory}, pages 4028--4079. PMLR, 2022.

\bibitem[Marek et~al.(2026)Marek, Cho, Qiu, Chunara, Izmailov, and Wilson]{marek2026forgettinglanguagemodelscapacity}
Martin Marek, Dongkyu Cho, Shikai Qiu, Rumi Chunara, Pavel Izmailov, and Andrew~Gordon Wilson.
\newblock Forgetting in language models: Capacity, optimization, and self-generated replay, 2026.
\newblock URL \url{https://arxiv.org/abs/2605.26097}.

\bibitem[Yu et~al.(2020)Yu, Kumar, Gupta, Levine, Hausman, and Finn]{yu2020gradient}
Tianhe Yu, Saurabh Kumar, Abhishek Gupta, Sergey Levine, Karol Hausman, and Chelsea Finn.
\newblock Gradient surgery for multi-task learning.
\newblock \emph{Advances in neural information processing systems}, 33:\penalty0 5824--5836, 2020.

\bibitem[Sener and Koltun(2018)]{sener2018multi}
Ozan Sener and Vladlen Koltun.
\newblock Multi-task learning as multi-objective optimization.
\newblock \emph{Advances in neural information processing systems}, 31, 2018.

\bibitem[Chen et~al.(2018)Chen, Badrinarayanan, Lee, and Rabinovich]{chen2018gradnorm}
Zhao Chen, Vijay Badrinarayanan, Chen-Yu Lee, and Andrew Rabinovich.
\newblock Gradnorm: Gradient normalization for adaptive loss balancing in deep multitask networks.
\newblock In \emph{International conference on machine learning}, pages 794--803. PMLR, 2018.

\bibitem[Suteu and Guo(2019)]{suteu2019regularizing}
Mihai Suteu and Yike Guo.
\newblock Regularizing deep multi-task networks using orthogonal gradients.
\newblock \emph{arXiv preprint arXiv:1912.06844}, 2019.

\bibitem[Farajtabar et~al.(2020)Farajtabar, Azizan, Mott, and Li]{farajtabar2020orthogonal}
Mehrdad Farajtabar, Navid Azizan, Alex Mott, and Ang Li.
\newblock Orthogonal gradient descent for continual learning.
\newblock In \emph{International conference on artificial intelligence and statistics}, pages 3762--3773. PMLR, 2020.

\bibitem[Chen et~al.(2026)Chen, Li, Chen, and Lin]{chen2026reward}
Peter~L Chen, Xiaopeng Li, Xi~Chen, and Tianyi Lin.
\newblock Reward-free alignment for conflicting objectives.
\newblock \emph{arXiv preprint arXiv:2602.02495}, 2026.

\bibitem[Ramasesh et~al.(2022)Ramasesh, Lewkowycz, and Dyer]{ramasesh2021effect}
Vinay~Venkatesh Ramasesh, Aitor Lewkowycz, and Ethan Dyer.
\newblock Effect of scale on catastrophic forgetting in neural networks.
\newblock In \emph{International Conference on Learning Representations}, 2022.
\newblock URL \url{https://openreview.net/forum?id=GhVS8_yPeEa}.

\bibitem[Doshi et~al.(2024)Doshi, He, Das, and Gromov]{doshi2024grokking}
Darshil Doshi, Tianyu He, Aritra Das, and Andrey Gromov.
\newblock Grokking modular polynomials.
\newblock \emph{arXiv preprint arXiv:2406.03495}, 2024.

\bibitem[Gopalani et~al.(2024)Gopalani, Lubana, and Hu]{gopalani2024abrupt}
Pulkit Gopalani, Ekdeep~S Lubana, and Wei Hu.
\newblock Abrupt learning in transformers: A case study on matrix completion.
\newblock \emph{Advances in Neural Information Processing Systems}, 37:\penalty0 55053--55085, 2024.

\bibitem[Murty et~al.(2023)Murty, Sharma, Andreas, and Manning]{murty2023grokking}
Shikhar Murty, Pratyusha Sharma, Jacob Andreas, and Christopher Manning.
\newblock Grokking of hierarchical structure in vanilla transformers.
\newblock In \emph{Proceedings of the 61st Annual Meeting of the Association for Computational Linguistics (Volume 2: Short Papers)}, pages 439--448. Association for Computational Linguistics, July 2023.
\newblock URL \url{https://aclanthology.org/2023.acl-short.38/}.

\bibitem[Kumar et~al.(2024)Kumar, Bordelon, Gershman, and Pehlevan]{kumar2023grokking}
Tanishq Kumar, Blake Bordelon, Samuel~J. Gershman, and Cengiz Pehlevan.
\newblock Grokking as the transition from lazy to rich training dynamics.
\newblock In \emph{The Twelfth International Conference on Learning Representations}, 2024.
\newblock URL \url{https://openreview.net/forum?id=vt5mnLVIVo}.

\bibitem[Stander et~al.(2024)Stander, Yu, Fan, and Biderman]{stander2023grokking}
Dashiell Stander, Qinan Yu, Honglu Fan, and Stella Biderman.
\newblock Grokking group multiplication with cosets.
\newblock In \emph{Forty-first International Conference on Machine Learning}, 2024.
\newblock URL \url{https://openreview.net/forum?id=hcQfTsVnBo}.

\bibitem[Mohamadi et~al.(2024)Mohamadi, Li, Wu, and Sutherland]{mohamadi2024you}
Mohamad~Amin Mohamadi, Zhiyuan Li, Lei Wu, and Danica~J. Sutherland.
\newblock Why do you grok? a theoretical analysis on grokking modular addition.
\newblock In \emph{Forty-first International Conference on Machine Learning}, 2024.
\newblock URL \url{https://openreview.net/forum?id=ad5I6No9G1}.

\bibitem[Varma et~al.(2023)Varma, Shah, Kenton, Kram{\'a}r, and Kumar]{varma2023explaining}
Vikrant Varma, Rohin Shah, Zachary Kenton, J{\'a}nos Kram{\'a}r, and Ramana Kumar.
\newblock Explaining grokking through circuit efficiency.
\newblock \emph{arXiv preprint arXiv:2309.02390}, 2023.

\bibitem[Morwani et~al.(2024)Morwani, Edelman, Oncescu, Zhao, and Kakade]{morwani2023feature}
Depen Morwani, Benjamin~L. Edelman, Costin-Andrei Oncescu, Rosie Zhao, and Sham~M. Kakade.
\newblock Feature emergence via margin maximization: case studies in algebraic tasks.
\newblock In \emph{The Twelfth International Conference on Learning Representations}, 2024.
\newblock URL \url{https://openreview.net/forum?id=i9wDX850jR}.

\bibitem[Chen et~al.(2024)Chen, Shwartz-Ziv, Cho, Leavitt, and Saphra]{chen2024sudden}
Angelica Chen, Ravid Shwartz-Ziv, Kyunghyun Cho, Matthew~L Leavitt, and Naomi Saphra.
\newblock Sudden drops in the loss: Syntax acquisition, phase transitions, and simplicity bias in {MLM}s.
\newblock In \emph{The Twelfth International Conference on Learning Representations}, 2024.
\newblock URL \url{https://openreview.net/forum?id=MO5PiKHELW}.

\bibitem[Cheng et~al.(2022)Cheng, Duchi, and Kuditipudi]{cheng2022memorize}
Chen Cheng, John Duchi, and Rohith Kuditipudi.
\newblock Memorize to generalize: on the necessity of interpolation in high dimensional linear regression.
\newblock In \emph{Conference on Learning Theory}, pages 5528--5560. PMLR, 2022.

\bibitem[Brown et~al.(2021)Brown, Bun, Feldman, Smith, and Talwar]{brown2021memorization}
Gavin Brown, Mark Bun, Vitaly Feldman, Adam Smith, and Kunal Talwar.
\newblock When is memorization of irrelevant training data necessary for high-accuracy learning?
\newblock In \emph{Proceedings of the 53rd annual ACM SIGACT symposium on theory of computing}, pages 123--132, 2021.

\bibitem[Mei and Montanari(2022)]{mei2022generalization}
Song Mei and Andrea Montanari.
\newblock The generalization error of random features regression: Precise asymptotics and the double descent curve.
\newblock \emph{Communications on Pure and Applied Mathematics}, 75\penalty0 (4):\penalty0 667--766, 2022.

\bibitem[Loog et~al.(2020)Loog, Viering, Mey, Krijthe, and Tax]{loog2020brief}
Marco Loog, Tom Viering, Alexander Mey, Jesse~H Krijthe, and David~MJ Tax.
\newblock A brief prehistory of double descent.
\newblock \emph{Proceedings of the National Academy of Sciences}, 117\penalty0 (20):\penalty0 10625--10626, 2020.

\bibitem[Nakkiran et~al.(2020)Nakkiran, Kaplun, Bansal, Yang, Barak, and Sutskever]{nakkiran2021deep}
Preetum Nakkiran, Gal Kaplun, Yamini Bansal, Tristan Yang, Boaz Barak, and Ilya Sutskever.
\newblock Deep double descent: Where bigger models and more data hurt.
\newblock In \emph{International Conference on Learning Representations}, 2020.
\newblock URL \url{https://openreview.net/forum?id=B1g5sA4twr}.

\bibitem[Nakkiran(2019)]{nakkiran2019more}
Preetum Nakkiran.
\newblock More data can hurt for linear regression: Sample-wise double descent.
\newblock \emph{arXiv preprint arXiv:1912.07242}, 2019.

\bibitem[Nakkiran et~al.(2021)Nakkiran, Venkat, Kakade, and Ma]{nakkiran2020optimal}
Preetum Nakkiran, Prayaag Venkat, Sham~M. Kakade, and Tengyu Ma.
\newblock Optimal regularization can mitigate double descent.
\newblock In \emph{International Conference on Learning Representations}, 2021.
\newblock URL \url{https://openreview.net/forum?id=7R7fAoUygoa}.

\bibitem[Wurgaft et~al.(2026)Wurgaft, Lubana, Park, Tanaka, Reddy, and Goodman]{wurgaft2025context}
Daniel Wurgaft, Ekdeep~Singh Lubana, Core~Francisco Park, Hidenori Tanaka, Gautam Reddy, and Noah Goodman.
\newblock In-context learning strategies emerge rationally.
\newblock In \emph{The Thirty-ninth Annual Conference on Neural Information Processing Systems}, 2026.
\newblock URL \url{https://openreview.net/forum?id=bBUUOQI0N6}.

\bibitem[Singh et~al.(2024)Singh, Moskovitz, Hill, Chan, and Saxe]{singh2024needs}
Aaditya~K Singh, Ted Moskovitz, Felix Hill, Stephanie~C.Y. Chan, and Andrew~M Saxe.
\newblock What needs to go right for an induction head? a mechanistic study of in-context learning circuits and their formation.
\newblock In \emph{Forty-first International Conference on Machine Learning}, 2024.
\newblock URL \url{https://openreview.net/forum?id=O8rrXl71D5}.

\bibitem[Singh et~al.(2023)Singh, Chan, Moskovitz, Grant, Saxe, and Hill]{singh2023transient}
Aaditya Singh, Stephanie Chan, Ted Moskovitz, Erin Grant, Andrew Saxe, and Felix Hill.
\newblock The transient nature of emergent in-context learning in transformers.
\newblock \emph{Advances in neural information processing systems}, 36:\penalty0 27801--27819, 2023.

\bibitem[Park et~al.(2025)Park, Lubana, and Tanaka]{park2024competition}
Core~Francisco Park, Ekdeep~Singh Lubana, and Hidenori Tanaka.
\newblock Competition dynamics shape algorithmic phases of in-context learning.
\newblock In \emph{The Thirteenth International Conference on Learning Representations}, 2025.
\newblock URL \url{https://openreview.net/forum?id=XgH1wfHSX8}.

\bibitem[Kandpal et~al.(2023)Kandpal, Deng, Roberts, Wallace, and Raffel]{kandpal2023large}
Nikhil Kandpal, Haikang Deng, Adam Roberts, Eric Wallace, and Colin Raffel.
\newblock Large language models struggle to learn long-tail knowledge.
\newblock In \emph{International conference on machine learning}, pages 15696--15707. PMLR, 2023.

\bibitem[Lesci et~al.(2024)Lesci, Meister, Hofmann, Vlachos, and Pimentel]{lesci-etal-2024-causal}
Pietro Lesci, Clara Meister, Thomas Hofmann, Andreas Vlachos, and Tiago Pimentel.
\newblock Causal estimation of memorisation profiles.
\newblock In \emph{Proceedings of the 62nd Annual Meeting of the Association for Computational Linguistics (Volume 1: Long Papers)}, pages 15616--15635, Bangkok, Thailand, August 2024. Association for Computational Linguistics.
\newblock \doi{10.18653/v1/2024.acl-long.834}.
\newblock URL \url{https://aclanthology.org/2024.acl-long.834/}.

\bibitem[Carlini et~al.(2023)Carlini, Ippolito, Jagielski, Lee, Tramer, and Zhang]{carlini2023quantifying}
Nicholas Carlini, Daphne Ippolito, Matthew Jagielski, Katherine Lee, Florian Tramer, and Chiyuan Zhang.
\newblock Quantifying memorization across neural language models.
\newblock In \emph{The Eleventh International Conference on Learning Representations}, 2023.
\newblock URL \url{https://openreview.net/forum?id=TatRHT_1cK}.

\bibitem[Kuditipudi et~al.(2026)Kuditipudi, Huang, Zhu, Yang, Potts, and Liang]{kuditipudi2025blackbox}
Rohith Kuditipudi, Jing Huang, Sally Zhu, Diyi Yang, Christopher Potts, and Percy Liang.
\newblock Blackbox model provenance via palimpsestic membership inference.
\newblock In \emph{The Thirty-ninth Annual Conference on Neural Information Processing Systems}, 2026.
\newblock URL \url{https://openreview.net/forum?id=VRhVS59yhP}.

\bibitem[Krasheninnikov et~al.(2026)Krasheninnikov, Turner, and Krueger]{krasheninnikov2025fresh}
Dmitrii Krasheninnikov, Richard~E. Turner, and David Krueger.
\newblock Fresh in memory: Training-order recency is linearly encoded in language model activations.
\newblock In \emph{The Fourteenth International Conference on Learning Representations}, 2026.
\newblock URL \url{https://openreview.net/forum?id=Tn6famjSxN}.

\bibitem[Duan et~al.(2025)Duan, Khona, Iyer, Schaeffer, and Fiete]{duan2025uncovering}
Sunny Duan, Mikail Khona, Abhiram Iyer, Rylan Schaeffer, and Ila~R Fiete.
\newblock Uncovering latent memories in large language models.
\newblock In \emph{The Thirteenth International Conference on Learning Representations}, 2025.

\bibitem[Chang et~al.(2024)Chang, Park, Ye, Yang, Seo, Chang, and Seo]{chang2024largelanguagemodelsacquire}
Hoyeon Chang, Jinho Park, Seonghyeon Ye, Sohee Yang, Youngkyung Seo, Du-Seong Chang, and Minjoon Seo.
\newblock How do large language models acquire factual knowledge during pretraining?
\newblock In \emph{The Thirty-eighth Annual Conference on Neural Information Processing Systems}, 2024.
\newblock URL \url{https://openreview.net/forum?id=TYdzj1EvBP}.

\bibitem[Tirumala et~al.(2022)Tirumala, Markosyan, Zettlemoyer, and Aghajanyan]{tirumala2022memorization}
Kushal Tirumala, Aram Markosyan, Luke Zettlemoyer, and Armen Aghajanyan.
\newblock Memorization without overfitting: Analyzing the training dynamics of large language models.
\newblock \emph{Advances in Neural Information Processing Systems}, 35:\penalty0 38274--38290, 2022.

\bibitem[Hernandez et~al.(2022)Hernandez, Brown, Conerly, DasSarma, Drain, El-Showk, Elhage, Hatfield-Dodds, Henighan, Hume, et~al.]{hernandez2022scaling}
Danny Hernandez, Tom Brown, Tom Conerly, Nova DasSarma, Dawn Drain, Sheer El-Showk, Nelson Elhage, Zac Hatfield-Dodds, Tom Henighan, Tristan Hume, et~al.
\newblock Scaling laws and interpretability of learning from repeated data.
\newblock \emph{arXiv preprint arXiv:2205.10487}, 2022.

\bibitem[Piantadosi(2014)]{piantadosi2014zipf}
Steven~T Piantadosi.
\newblock Zipf’s word frequency law in natural language: A critical review and future directions.
\newblock \emph{Psychonomic bulletin \& review}, 21\penalty0 (5):\penalty0 1112--1130, 2014.

\bibitem[Hyv{\"a}rinen et~al.(2009)Hyv{\"a}rinen, Hurri, and Hoyer]{hyvarinen2009natural}
Aapo Hyv{\"a}rinen, Jarmo Hurri, and Patrick~O Hoyer.
\newblock \emph{Natural image statistics: A probabilistic approach to early computational vision.}, volume~39.
\newblock Springer Science \& Business Media, 2009.

\bibitem[Atanasov et~al.(2024)Atanasov, Zavatone-Veth, and Pehlevan]{atanasov2024scaling}
Alexander Atanasov, Jacob~A Zavatone-Veth, and Cengiz Pehlevan.
\newblock Scaling and renormalization in high-dimensional regression.
\newblock \emph{arXiv preprint arXiv:2405.00592}, 2024.

\bibitem[Ren et~al.(2026)Ren, Nichani, Wu, and Lee]{ren2025emergence}
Yunwei Ren, Eshaan Nichani, Denny Wu, and Jason~D. Lee.
\newblock Emergence and scaling laws in {SGD} learning of shallow neural networks.
\newblock In \emph{The Thirty-ninth Annual Conference on Neural Information Processing Systems}, 2026.
\newblock URL \url{https://openreview.net/forum?id=kA2H90nm26}.

\bibitem[Everett et~al.(2024)Everett, Xiao, Wortsman, Alemi, Novak, Liu, Gur, Sohl-Dickstein, Kaelbling, Lee, et~al.]{everett2024scaling}
Katie Everett, Lechao Xiao, Mitchell Wortsman, Alexander~A Alemi, Roman Novak, Peter~J Liu, Izzeddin Gur, Jascha Sohl-Dickstein, Leslie~Pack Kaelbling, Jaehoon Lee, et~al.
\newblock Scaling exponents across parameterizations and optimizers.
\newblock In \emph{Forty-first International Conference on Machine Learning}, 2024.
\newblock URL \url{https://openreview.net/forum?id=0ksNeD1SJT}.

\bibitem[Michaud et~al.(2025)Michaud, Gorton, and McGrath]{michaud2025understanding}
Eric~J Michaud, Liv Gorton, and Tom McGrath.
\newblock Understanding sparse autoencoder scaling in the presence of feature manifolds.
\newblock \emph{arXiv preprint arXiv:2509.02565}, 2025.

\bibitem[Nam et~al.(2024)Nam, Fonseca, Lee, Mingard, and Louis]{nam2024exactly}
Yoonsoo Nam, Nayara Fonseca, Seok~H Lee, Chris Mingard, and Ard~A Louis.
\newblock An exactly solvable model for emergence and scaling laws in the multitask sparse parity problem.
\newblock \emph{Advances in Neural Information Processing Systems}, 37:\penalty0 39632--39693, 2024.

\bibitem[Frankle and Carbin(2019)]{frankle2018lottery}
Jonathan Frankle and Michael Carbin.
\newblock The lottery ticket hypothesis: Finding sparse, trainable neural networks.
\newblock In \emph{International Conference on Learning Representations}, 2019.
\newblock URL \url{https://openreview.net/forum?id=rJl-b3RcF7}.

\bibitem[Malach et~al.(2020)Malach, Yehudai, Shalev-Schwartz, and Shamir]{malach2020proving}
Eran Malach, Gilad Yehudai, Shai Shalev-Schwartz, and Ohad Shamir.
\newblock Proving the lottery ticket hypothesis: Pruning is all you need.
\newblock In \emph{International conference on machine learning}, pages 6682--6691. PMLR, 2020.

\bibitem[Pensia et~al.(2020)Pensia, Rajput, Nagle, Vishwakarma, and Papailiopoulos]{pensia2020optimal}
Ankit Pensia, Shashank Rajput, Alliot Nagle, Harit Vishwakarma, and Dimitris Papailiopoulos.
\newblock Optimal lottery tickets via subset sum: Logarithmic over-parameterization is sufficient.
\newblock \emph{Advances in neural information processing systems}, 33:\penalty0 2599--2610, 2020.

\bibitem[Magnusson et~al.(2025)Magnusson, Tai, Bogin, Heineman, Hwang, Soldaini, Bhagia, Liu, Groeneveld, Tafjord, Smith, Koh, and Dodge]{magnussondatadecide2025}
Ian Magnusson, Nguyen Tai, Ben Bogin, David Heineman, Jena~D. Hwang, Luca Soldaini, Akshita Bhagia, Jiacheng Liu, Dirk Groeneveld, Oyvind Tafjord, Noah~A. Smith, Pang~Wei Koh, and Jesse Dodge.
\newblock Datadecide: How to predict best pretraining data with small experiments.
\newblock In \emph{Forty-second International Conference on Machine Learning}, 2025.
\newblock URL \url{https://openreview.net/forum?id=p9YlQPF8fE}.

\bibitem[Liu et~al.(2024{\natexlab{b}})Liu, Min, Zettlemoyer, Choi, and Hajishirzi]{liu2024infinigram}
Jiacheng Liu, Sewon Min, Luke Zettlemoyer, Yejin Choi, and Hannaneh Hajishirzi.
\newblock Infini-gram: Scaling unbounded n-gram language models to a trillion tokens.
\newblock In \emph{First Conference on Language Modeling}, 2024{\natexlab{b}}.
\newblock URL \url{https://openreview.net/forum?id=u2vAyMeLMm}.

\bibitem[Wu et~al.(2024)Wu, Geiger, Arora, Huang, Wang, Goodman, Manning, and Potts]{wu-etal-2024-pyvene}
Zhengxuan Wu, Atticus Geiger, Aryaman Arora, Jing Huang, Zheng Wang, Noah Goodman, Christopher Manning, and Christopher Potts.
\newblock pyvene: A library for understanding and improving {P}y{T}orch models via interventions.
\newblock In \emph{Proceedings of the 2024 Conference of the North American Chapter of the Association for Computational Linguistics: Human Language Technologies (Volume 3: System Demonstrations)}, pages 158--165. Association for Computational Linguistics, June 2024.
\newblock URL \url{https://aclanthology.org/2024.naacl-demo.16}.

\bibitem[Fan(1949)]{fan1949theorem}
Ky~Fan.
\newblock On a theorem of weyl concerning eigenvalues of linear transformations i.
\newblock \emph{Proceedings of the National Academy of Sciences}, 35\penalty0 (11):\penalty0 652--655, 1949.

\bibitem[Vyas et~al.(2023)Vyas, Atanasov, Bordelon, Morwani, Sainathan, and Pehlevan]{vyas2023featurelearning}
Nikhil Vyas, Alexander Atanasov, Blake Bordelon, Depen Morwani, Sabarish Sainathan, and Cengiz Pehlevan.
\newblock Feature-learning networks are consistent across widths at realistic scales.
\newblock In \emph{Thirty-seventh Conference on Neural Information Processing Systems}, 2023.
\newblock URL \url{https://openreview.net/forum?id=LTdfYIvbHc}.

\end{thebibliography}

\clearpage
\appendix

\section{Related Work}
\label{app:related}

\paragraph{Multi-Task Learning.} Data distributions neural networks are trained on are often deemed as a mixture of tasks~\cite{raffel2020exploring, xie2023data, xie2024doremi, xie2024foundation, ramesh2022picture, ramesh2025principles, penedo2023refinedweb, penedo2024fineweb, dolma, maini2025beyondweb, sam2026should, goyal2024scaling}.
This motivated works analyzing both the learning dynamics of training toy models on multi-task distributions and defining methods aimed at reducing interference between updates caused by learning a task in the presence of other ones.
For example, the notion of ``catastrophic interference'' has been often characterized in the multi-task learning and continual learning literature~\cite{caruana1997multitask, aljundi2019continual}, where task gradients conflict or are imbalanced in scale, leading to learning of only a subset of tasks instead of the entire mixture.
Such phenomenology can be intuitively~\cite{liu2021conflict, aljundi2019gradient, wu2026imbalanced} and theoretically explained: e.g., Pezeshki et al.~\cite{pezeshki2021gradient} posit the idea of gradient starvation, whereby a model trained on a mixture of tasks that have different prior frequencies is unable to learn the infrequent task due to its gradient getting ``starved'' out, i.e., becoming zero; meanwhile, Evron et al.~\cite{evron2022catastrophic} characterize how tasks' observation frequency induces the forgetting of another learned task in a sequential linear regression setting.
Concurrent to our work, \citet{marek2026forgettinglanguagemodelscapacity} show that forgetting of prior tasks occur when a model has little remaining capacity.
These analyses have also motivated methods to avoid interference and enable learning of multiple tasks: e.g., methods that perform ``surgery'' on model gradients~\cite{yu2020gradient, sener2018multi, chen2018gradnorm} to make two conflicting tasks' gradients to have zero interaction by removing one's projection towards another~\cite{suteu2019regularizing, farajtabar2020orthogonal}; these methods have seen use at scale as well~\cite{chen2026reward}.

It is worth noting that our results are in a similar vein as literature above, but augment prior work by characterizing the effects of scale and showcasing that even extremely rarely observed tasks can eventually be learned if one's model is large enough---empirically, related results corroborating our claim in a vision scenario was also made in the continual learning literature by Ramasesh et al.~\cite{ramasesh2021effect}.
That said, we emphasize that neither do our results imply nor do  we claim (in fact, we say otherwise) that scale alone is the mechanism to enable the learning of a rare task in the presence of other frequent ones.
Indeed, methods discussed above from multi-task / continual learning literature show multiple tasks can be simultaneously learned by a model.

\paragraph{Memorization and Scaling.} The core mechanism posited in our work for how larger models learn rare tasks involves a model retaining some signature of observed data from a small batch of samples.
In extreme scenarios, e.g., when only a few samples are contained in the batch, such a signature cannot possibly correspond to a general, abstract task representation.
Instead, the signature can be thought of as a model (at least partially) ``memorizing'' an observation---once enough observations occur and the memories aggregate, in our simple toy settings, we find the model consolidates the memories into an abstract representation that generalizes well.
In this sense, we emphasize our core proposition suggests memorization is not an undesirable property, but instead a prerequisite to eventual generalization for rare tasks.
This mechanism is highly reminiscent of the observations~\cite{power2022grokking, doshi2024grokking, gopalani2024abrupt} and posited learning dynamics for grokking, where a model transitions from memorizing observations to generalizing to novel inputs~\cite{nanda_progress_2022, murty2023grokking, kumar2023grokking, stander2023grokking, mohamadi2024you, varma2023explaining, morwani2023feature}.
Critically, our language model pretraining results, where we use modular addition, i.e., the prototypical grokking task, and find our posited dynamics hold is suggestive that grokking-like dynamics may in fact occur in practice, especially for rarely observed tasks (closest result to this end is perhaps the syntax acquisition dynamics demonstrated by Chen et al.~\cite{chen2024sudden}).
This argument is in keeping with theoretical works on classification that have argued that memorization is necessary for generalization---for example, to handle label noise \cite{cheng2022memorize}, or to handle rare examples \cite{feldman2020does}.  \citet{brown2021memorization} provide a particularly interesting demonstration that learning rare structures effectively requires memorizing even irrelevant information about the data.
On the other hand, it is worth considering if an opposite mechanism may occur for frequently observed tasks: e.g., if a model sees too many observations of the same task, does it perhaps undergo phenomenology such as overfitting, which is generally associated with generalization to memorization dynamics; if so, does scaling help avoid this dynamic via mechanisms such as double descent~\cite{mei2022generalization, loog2020brief, nakkiran2021deep, nakkiran2019more, nakkiran2020optimal}?
Recent work on transient nature of in-context learning capabilities in toy scenarios~\cite{wurgaft2025context, singh2024needs, singh2023transient, park2024competition} is suggestive such a dynamic may occur, and we thus argue it is worth investigating what the counterpart of our work for learning dynamics of frequent tasks looks like.

Building on the above, we also note memorization and the effects of scaling have been often studied in literature; these results are in line with our claims on reduced interference over model parameters via scaling, enabling models to eventually learn rare tasks.
For example, studying memorization in the sense of verbatim match (e.g., $k$-token string match), works show larger models learn knowledge present in the tail-end of the distribution better~\cite{kandpal2023large}, larger models~\cite{lesci-etal-2024-causal,carlini2023quantifying} and later checkpoints tend to memorize more~\cite{huang2024demystifying}, not just individual data points but also training data order~\cite{kuditipudi2025blackbox,krasheninnikov2025fresh}, and these memories are retained for longer across injection events~\cite{duan2025uncovering,chang2024largelanguagemodelsacquire}. \citet{tirumala2022memorization} show that larger models memorize more, but also can memorize more of the data before they begin to overfit.

\paragraph{Generalization and Scaling.} Improved performance as a function of scaling has defined the spirit of machine learning since scaling laws first started being used for identifying training configurations~\cite{hestness2017deep, rosenfeld2019constructive, kaplan2020scaling, rae2021scaling, hoffmann2022training, henighan2020scaling, hernandez2022scaling, hernandez2021scaling}.
The precise mechanism as to how scaling helps produce better models is unclear, but a few propositions have been made.
For example, in works assessing how power-law scaling as a function of data and parameters emerges, prior work has exploited the argument that natural data statistics are heavy-tailed and follow power-law trends (e.g., Zipf-priors in language~\cite{piantadosi2014zipf} and vision~\cite{hyvarinen2009natural}); correspondingly, scaling enables access to lower-order modes of the data distribution~\cite{bahri2024explaining, maloney2022solvable, lin2024scaling, atanasov2024scaling, ren2025emergence, cagnetta2025learning, cagnetta2025scaling}, and arguments to this end have been verified in recent work by Cagnetta et al.~\cite{cagnetta2026deriving}.
Our toy setup was in fact inspired by these papers, especially Ren et al.~\cite{ren2025emergence} and Maloney et al.~\cite{maloney2022solvable}, but is a substantial simplification since, unlike these prior works, our goal was not to characterize the eventual steady state optima a model arrives at, but instead the dynamics that lead to it.
Works closer to this dynamical motivation are by Bordelon et al.~\cite{bordelon2024dynamical, bordelon2025feature}, Paquette et al.~\cite{paquette2024}, Atanasov et al.~\cite{atanasov2024scaling}, and Everett et al.~\cite{everett2024scaling}, who analyze learning dynamics of toy settings that exhibit power-law scaling curves.
However, since we primarily aimed to posit a concrete mechanism via which larger models may be able to learn tasks smaller models do not, we note the concrete results emphasized and takeaways across these works versus ours are fairly different.
In particular, these papers primarily focus on the interaction between learning dynamics and data statistics to identify different regimes of scaling, i.e., what functional form, e.g., power-law or otherwise, results in the best effective characterization of learning dynamics.
Finally, works by Michaud et al.~\cite{michaud2024quantization, michaud2025understanding} and Nam et al.~\cite{nam2024exactly} are fairly related to our paper: specifically, these works characterize an explicitly multi-task construction to posit a model for how power-law scaling can emerge in neural networks.
While still related to the data statistics argument mentioned above, these works also have an explicit notion of prior frequency and (implicitly) show scaling helps learn tasks that are rarely observed in the training distribution.
Our work makes this claim explicit, but also characterizes how, i.e., a mechanism, via which scaling aids learning of rare tasks.

\paragraph{Lottery Tickets and Scaling.} Another thread of research that partially connects the work listed above on scaling and learning of specific tasks is on the lottery ticket hypothesis~\cite{frankle2018lottery}: a lottery ticket is defined as a subnetwork identified from a larger, initial network that, even at random initialization, shows the ability to perform the task one is training their model for.
Theoretical work on lottery ticket hypothesis has characterized bounds on how much larger a model has to be in order to possess a subnetwork that can, up to some error, approximate the model eventually learned via training~\cite{malach2020proving, pensia2020optimal, edelman2023pareto}.
Especially related here is the work of Edelman et al.~\cite{edelman2023pareto}, who show that via scaling model width (the scaling axis we consider as well), the odds that a subset of representations with non-trivial alignment with true task features exist substantially increases.
Correspondingly, scaling improves sample efficiency of learning tasks that require more features (i.e., are more complex); critically, if one slightly generously interprets the authors' results, they are suggestive that a larger model will be able to learn rare tasks by virtue of already possessing features a smaller model will be unable to learn (due to sparsely observed training signal for such tasks).
While this work partially informed the intuition guiding this paper, we note the eventual results for our setting and verification on large-scale scenarios are more concrete.

\clearpage
\section{Experimental Details}
\label{appx:exp_details}

\subsection{Synthetic Experiment}
\label{app:experimental-details}

In the following, we describe experiment details and metrics relevant to results for the synthetic setup.

\paragraph{Data-generating process.} All synthetic runs use the orthogonal-block instantiation of the mixture-of-regressions setup proposed in Sec.~\ref{sec:synthetic}.
We fix the ambient dimension at $D = 1024$, the number of tasks at $K \in \{16, 32\}$ (almost all figures use $K = 32$), and a per-task block dimension $d_T$ such that $K \cdot d_T \le D$ and the task blocks are mutually orthogonal. 
Concretely, task $k$ occupies coordinates $[k\,d_T,\,(k{+}1)\,d_T)$ of $\mathbb{R}^D$, and its within-block spectrum is the power-law $\sigma_{k,j} = j^{-\alpha_k}$ for $j = 1, \dots, d_T$. 
Unless stated otherwise we use a shared exponent $\alpha_k \equiv \alpha$ across tasks ($\alpha = 1$ in the orthogonal-block experiments, making the within-block decay slow enough that capacity reliably spreads beyond the leading mode of each task). 
The task prior is the power-law $\pi_k \propto k^{-\beta}$, normalized to sum to one over $k = 1, \dots, K$; $\beta = 2$ in most experiments. 
Inputs are sampled fresh each step as $x \sim \mathcal{N}(0, \sigma_{\mathrm{in}}^2 I_D)$ with $\sigma_{\mathrm{in}} = 1$ in all orthogonal-block runs. 
The per-task targets are $y_k = \Lambda_k^{1/2} B_k^\top x$, restricted to the task's block; because each task block has rank $d_T$, the output dimension of the regressor is $d_T$ (and reduces to $1$ when $d_T = 1$, e.g., in the rank-1 specialization of App.~\ref{app:E-staircase}). 

\paragraph{Model.} The student is the linear-bottleneck regressor of Sec.~\ref{sec:synthetic}: a shared encoder $W \in \mathbb{R}^{N \times D}$ that maps the input to an $N$-dimensional hidden, followed by per-task linear decoders $D_k \in \mathbb{R}^{d_T \times N}$ selected by the ground-truth task index supplied in the batch. 
We do not explicitly constrain $W$ to have orthonormal rows: the relevant object for Theorem~\ref{thm:capacity} is the projector $P_W = W^\top (W W^\top)^{-1} W$, which is invariant to the right-multiplicative gauge of $W$ and which gradient flow drives toward the top-$N$ eigenspace of $M = \sum_k \pi_k C_k$ regardless of the parametrization. 
The encoder is initialized such that $W^\top W = I_N$ at step zero, and the per-task decoders are initialized with Kaiming-uniform fan-in / linear gain. 
The decoders are jointly optimized with the encoder rather than analytically closed-formed at each step, since learned decoders will converge to $D_k^* = \Lambda_k^{1/2} B_k^\top U$ at any stationary point, so the joint optimization does not change the encoder fixed point but does match the practical setting in which both ends of the bottleneck are learned simultaneously.

\paragraph{Optimizer.} We use AdamW with default hyperparameters and an inverse-square-root learning-rate schedule. 
Gradients are clipped at maximum norm $1.0$. 
Batches are drawn fresh each step (no fixed dataset, no replay) with batch size $B = 1\,024$ for the phase-diagram and rank-1 sweeps, and $B = 512$ for the matched-frequency retention sweeps; the smaller batch in the retention runs is required so that an injection batch with $m \le B$ rare-task slots can match the long-run frequency $\rho_r = m / (G \cdot B)$ at the $\rho_r \approx 6 \times 10^{-4}$ end of the sweep.

\paragraph{Metrics.} We track three families of metrics, all reported on freshly sampled batches separate from the training stream. 
The first is the \emph{per-task loss}, i.e., the unnormalized population MSE $\ell_k(U) = \mathbb{E}\!\left[\|y_k - D_k U^\top x\|_2^2\right]$, and its normalized counterpart $\ell_k(U) / \ell_{k,\mathrm{baseline}}$ with $\ell_{k,\mathrm{baseline}} = \|a_k\|_2^2 / d_T$ the mean-predictor MSE per task.
The second is the \emph{per-task subspace alignment}, the basis-free quantity $s_k(U) = \mathrm{Tr}(P_U C_k) / \mathrm{Tr}(C_k) = \|P_U a_k\|_2^2 / \|a_k\|_2^2$, computed via the SVD of $W$ so that it is independent of the gauge of the encoder. 
$s_k$ lies between $N/D$ at random initialization and $1$ when the task block is fully captured. 
We also report its random-baseline-corrected normalization $\tilde s_k(U) = (s_k(U) - N/D) / (1 - N/D)$, which equals $0$ at random initialization and $1$ at full capture. 
The third is the \emph{residual common-task signal}: we compute $\delta_F(U) = \sum_{k \in F} \pi_k\,(1 - s_k(U))\,\|a_k\|_2^2$, the residual energy of the frequent block.
The frequent set $F$ is the smallest top-prior set with cumulative mass at least $0.8$; under our power-law prior this yields $|F| = \{6, 3, 2, 2\}$ for $\beta \in \{0.5, 1.0, 1.5, 2.0\}$ respectively. 
Standard evaluation is performed every $1\,000$–$2\,000$ steps on a held-out probe of the same population distribution, with the final checkpoint additionally re-evaluated for end-of-training summary statistics.

\clearpage
\subsection{OLMo Pretraining Pipeline}
\label{appx:olmo_config}

\paragraph{Models.}

\begin{table}[h]
\centering
\caption{Model configurations by size.}
\begin{tabular}{lrrrrr}
\toprule
Model Name & \# Parameters & \# Layers & Hidden Dim & MLP Dim & \# Attn Heads \\
\midrule
4M   & 6,963,200     & 8  & 64   & 512    & 8  \\
20M  & 28,753,920    & 16 & 192  & 1,536  & 8  \\
300M & 371,458,048   & 16 & 1,024 & 8,192  & 16 \\
1B   & 1,279,787,008 & 16 & 2,048 & 16,384 & 16 \\
4B   & 4,707,057,664 & 16 & 4,096 & 32,768 & 32 \\
\bottomrule
\end{tabular}
\label{tab:model-config}
\end{table}

We use the OLMo model architecture~\cite{groeneveld2023olmo}. For 4M to 1B models, we follow the model configuration and naming convention of \citet{magnussondatadecide2025}. We additionally include a 4B model to further evaluate width scaling, as shown in Table~\ref{tab:model-config}.

\paragraph{Training hyperparameters.} We use the same batch size of $1024$, window size of $4096$ for \tcmp\ and $1024$ for \tadd, and a learning rate schedule with an initial learning rate of $3\times10^{-4}$ and cosine with warmup schedule for all models. For the retention window ablation experiment, we use a smaller window size of $512$ to reduce the training cost. For a full list of hyperparameters, refer to the \texttt{OLMo-7B-0724} configuration.\footnote{\url{https://github.com/allenai/OLMo/blob/main/configs/official-0724/OLMo-7B-0724.yaml}}

\paragraph{Training pipeline.} We use the OLMo code base.\footnote{\url{https://github.com/allenai/OLMo}} Our usage is in line with its Apache-2.0 license.\footnote{\url{https://github.com/allenai/OLMo?tab=Apache-2.0-1-ov-file}}

\paragraph{Compute resources.} All models are trained on a cluster of NVIDIA H200 GPUs.

\subsection{Pre-training and Injected Task Data}
\label{appx:olmo_data}

\paragraph{Pre-training data.} We use Dolma v1.7 as the pre-training corpus~\cite{dolma}. Specifically, we use the 210B tokens corresponding to the first 50K batches that \texttt{OLMo-7B-0424} and \texttt{OLMo-7B-0724} are trained on, in the exact same order.

\paragraph{Reference tasks.} To ensure the injected task frequency is comparable to the frequency of tasks learned in pre-training, we sample two reference tasks $R_{\text{cmp}}$ and $R_{\text{add}}$ from pre-training that involve similar high-level functions. $R_{\text{cmp}}$ predicts a number larger than \texttt{x} in the prompt ``it has increased from \texttt{\{x\}} to''. $R_{\text{add}}$ predicts the sum of two numbers smaller than 100 with the prompt ``\texttt{\{x\}} + \texttt{\{y\}} =''. We estimate the lower bound of their frequency in pre-training data using infini-gram~\cite{liu2024infinigram} and observe models' next token prediction loss on the task, which corresponds to the two dashed lines in Fig.~\ref{fig:olmo_loss} panel (a).

\paragraph{Injected tasks.} We elaborate the task label here. Let $val(\cdot): \mathcal{S} \mapsto [0, 99]$ be a bijective mapping that assigns an integer value between 0 and 99 to each token. For \tcmp, \texttt{LABEL} is one of two tokens randomly chosen from the vocab indicating whether $val(\texttt{TOK1})<val(\texttt{TOK2})$. For \tadd, \texttt{LABEL} is the token in $\mathcal{S}$ whose value equals $(val(\texttt{TOK1}) + val(\texttt{TOK2}))$ $mod$ $100$. Below are a few instances from the comparison task: \texttt{ address analyze pony},
 \texttt{ resort zebrafish pony},
 \texttt{ cavities misconduct provisional}, where \texttt{pony} and \texttt{provisional} are the two label tokens that represent \texttt{True} and \texttt{False}.

\newpage
\subsection{Localizing and Measuring Task Features in Sec.~\ref{sec:olmo_repr}}
\label{appx:localize_task_features}

\paragraph{The comparison task \tcmp.} We first use distributed alignment search (DAS) to verify that the model's prediction is causally dependent on a global token order feature, which is encoded in a 1-D subspace in the residual stream of the first few layers.

The task has a simple high-level causal model, namely $X \rightarrow O \leftarrow Y$, where $X, Y$ are the two inputs and $O$ is the binary output. We consider the following intervention on the input variable $X$ (or $Y$): Let a base example be $x_b, y_b, o_b$ and a source example be $x_s, y_s, o_s$, an interchange intervention on $X$ that sets the value of $x_b$ to $x_s$ should lead to a counterfactual label that corresponds to $x_s < y_b$.

In the neural model, we search for a low-dimensional subspace in the residual stream that plays the same causal role as the input variable $X$ by training on $1K$ counterfactual data pairs defined above. We use DAS to search across all layers above the input token position. We are able to find a 1-D subspace in the residual stream of the first layer that has an interchange intervention success rate of 96\%. This proves that the model not only encodes the global token order in a low-dimensional space but actually uses this feature for prediction on the task \tcmp. This allows us to use the global order feature to measure to what extent a model has learned the abstract task structure.

We use the DAS implementation from \texttt{pyvene}~\cite{wu-etal-2024-pyvene}.

\paragraph{The modular addition task \tadd.} As prior work studying modular addition has identified that grokked models use Fourier modes for addition~\cite{nanda_progress_2022}, we conduct Fourier analysis on residual stream to measure the presence of Fourier modes.

For modulus P, define a real discrete Fourier transform basis on $\mathbb{R}^P$ as follows:
\begin{align*}
\phi_k^{\cos}(n) = \frac{\cos(2\pi k n / P)}{||\cos(2\pi k \cdot / P)||}, \quad
\phi_k^{\sin}(n) = \frac{\sin(2\pi k n / P)}{||\sin(2\pi k \cdot / P)||}, \quad
k = 1,\dots,\left\lfloor \frac{P}{2} \right\rfloor.    
\end{align*}

At layer $l$, collect residual stream vectors $h^{l} \in \mathbb{R}^d$ grouped by output \texttt{c = (a+b) $\bmod$ P} and compute the mean representation as:
\begin{align*}
v_c^{(l)} = \mathbb{E}\bigl[h^{(l)} \mid c=(a+b)\bmod P\bigr] \in \mathbb{R}^d,
\quad c \in \{0,\dots,P-1\}.
\end{align*}
This yields a matrix $V^{(l)} \in \mathbb{R}^{P \times d}$.

After row-centering $V^{(l)}$, the fraction of variance captured by frequency $k$ is
\begin{align*}
  P_k^{(l)} =
\frac{\sum_{j=1}^d \left( \langle \phi_k^{\cos}, V^{(l)}_{:,j} \rangle^2
+ \langle \phi_k^{\sin}, V^{(l)}_{:,j} \rangle^2 \right)}
  {\sum_{j=1}^d ||V^{(l)}_{:,j} - \bar V^{(l)}_{:,j}||^2},
  \quad \sum_k P_k^{(l)} = 1
\end{align*}

We consider a null-baseline for detecting Fourier modes in representations. Under uniform variance allocation across frequencies,
\begin{align*}
 P^{\mathrm{null}} = \frac{2}{P-1}    
\end{align*}

Hence, a frequency $k$ at layer $l$ is identified as a Fourier mode if
\begin{align*}
    P_k^{(l)} > \theta P^{\mathrm{null}}
\end{align*}
We choose $\theta=2$ in our experiment, as we do not observe significant difference in grokking behavior for models that represent Fourier modes with stronger signals, e.g., $\theta=3$.

Finally, we define the total number of detected modes as follows, where $L$ is the total number of layers.
\begin{align*}
N_{\mathrm{features}} 
= \sum_{l=0}^{L}
\left| \left\{ k : P_k^{(l)} > 2 P^{\mathrm{null}} \right\} \right|
\end{align*}
This corresponds to the y-axis in Fig.~\ref{fig:task_feature_learning} (b) right panel.

\newpage

\begin{figure}[!t]
    \centering
        \centering
        \includegraphics[
            width=\linewidth,
            trim={0 0 0 0}, %
            clip
        ]{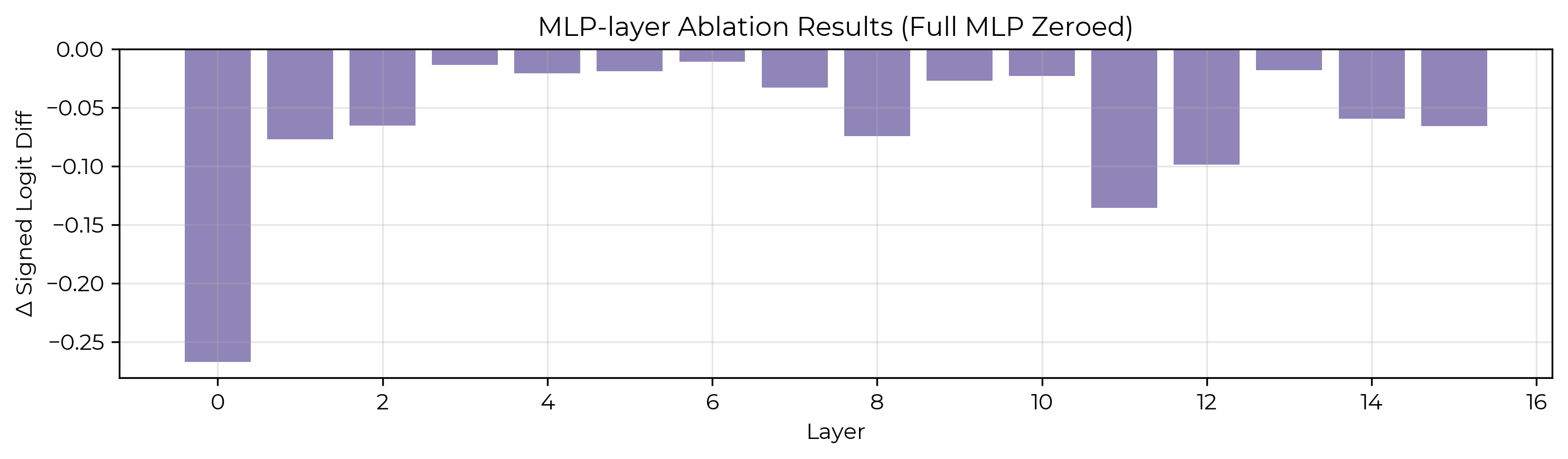}
    \caption{The first MLP layer has the strongest causal effects on the model's logits prediction.}
    \label{fig:olmo_mlp_ablation}
\end{figure}

\subsection{Localizing Task Neurons in Sec.~\ref{sec:olmo:gradients}}
\label{appx:localize_task_neurons} 

We conduct null interventions on MLP layers to identify layers that have the largest causal effects on the model output. The results are shown in Fig.~\ref{fig:olmo_mlp_ablation}. This aligns with our observation from the DAS localization experiment that first layer is the earliest layer where the global token order is causally encoded. The localization result is consistent across the three models used in Sec.~\ref{sec:olmo:gradients}.

\clearpage
\section{Proofs}
\label{app:proofs}

\subsection{Proof of Theorem~\ref{thm:capacity}}
\label{app:proof-capacity}

For fixed $U$, the task-$k$ population loss is
\begin{align}
    \ell_k(U,D_k) &= \E\big[\|\Lambda_k^{1/2}B_k^\top x - D_kU^\top x\|_2^2\big] \\
    &= \|\Lambda_k^{1/2}B_k^\top - D_kU^\top\|_F^2,
\end{align}
where the second identity uses $x\sim \mathcal{N}(0,I)$ and the standard relation $\E\|Ax\|_2^2 = \|A\|_F^2$.
This is a linear least-squares problem in $D_k$, so the minimizer is
\begin{equation}
    D_k^* = \Lambda_k^{1/2}B_k^\top U.
\end{equation}
Substituting back gives
\begin{align}
    \ell_k(U)
    &= \big\|\Lambda_k^{1/2}B_k^\top(I-P_U)\big\|_F^2 \\
    &= \tr\!\Big(\Lambda_k^{1/2}B_k^\top(I-P_U)B_k\Lambda_k^{1/2}\Big) \\
    &= \tr\!\big((I-P_U)C_k\big).
\end{align}
Summing with weights $\pi_k$ yields
\begin{equation}
    L_N(U) = \sum_{k=1}^K \pi_k\ell_k(U) = \tr(M) - \tr(U^\top M U), 
    \qquad
    M := \sum_{k=1}^K \pi_k C_k.
\end{equation}
Because $M$ is symmetric positive semidefinite with finite trace, minimizing $L_N(U)$ is equivalent to maximizing $\tr(U^\top M U)$ over all orthonormal $U$.
By Ky Fan's maximum principle~\citep{fan1949theorem},
\begin{equation}
    \max_{U^\top U = I_N} \tr(U^\top M U) = \sum_{i=1}^N \mu_i,
\end{equation}
where $\mu_1\ge \mu_2\ge \cdots$ are the eigenvalues of $M$.
Therefore any minimizer spans the top-$N$ eigenspace of $M$ and the optimal loss is
\begin{equation}
    L_N^* = \tr(M) - \sum_{i=1}^N \mu_i = \sum_{i>N} \mu_i.
\end{equation}
For our generative process, we have
\begin{equation}
    M = \sum_{k,j} \pi_k\lambda_{k,j}\, b_{k,j}b_{k,j}^\top,
\end{equation}
so the vectors $b_{k,j}$ are eigenvectors of $M$ with eigenvalues $u_{k,j}=\pi_k\lambda_{k,j}$.
Thus the width-$N$ optimum keeps the $N$ largest utilities.
If task $k$ contributes $n_k(N)$ retained coordinates, then its residual loss is
\begin{equation}
    \ell_k^*(N) = \sum_{j>n_k(N)} \lambda_{k,j}.
\end{equation}

\subsection{Proof of Theorem~\ref{thm:interference}}
\label{app:proof-interference}

Write $G_{\Fre}(U) = 2(I-P_U)M_{\Fre}^{1/2}M_{\Fre}^{1/2}U.$
Using $\|AB\|_F\le \|A\|_F\|B\|_{\mathrm{op}}$, where $\|.\|_{\mathrm{op}}$ denotes the operator norm, gives
\begin{align}
    \|G_{\Fre}(U)\|_F
    &\le 2\,\|(I-P_U)M_{\Fre}^{1/2}\|_F\,\|M_{\Fre}^{1/2}U\|_{\mathrm{op}} \\
    &\le 2\sqrt{\tr\!\big((I-P_U)M_{\Fre}\big)}\,\sqrt{\lambda_1(M_{\Fre})} \\
    &= 2\sqrt{\lambda_1(M_{\Fre})\,\delta_{\Fre}(U)}.
\end{align}

\subsection{Proof of Proposition~\ref{prop:invasion}}
\label{app:proof-invasion}

Let $U$ denote the common-task width-$N$ solution and let $u_i$ be one occupied common eigenvector with eigenvalue $\mu_i^{\Fre}$.
Replace that vector by
\begin{equation}
    v_i(\theta) = \cos\theta\,u_i + \sin\theta\,b_r,
\end{equation}
while keeping the remaining $N-1$ directions fixed.
Because $u_i$ is an eigenvector of $M_{\Fre}$ and $b_r$ is orthogonal to the common block, the contribution of this one direction to the objective $\tr(P_UM)$ is
\begin{equation}
    \langle v_i(\theta), Mv_i(\theta)\rangle
    = \mu_i^{\Fre}\cos^2\theta + \pi_r\lambda_r\sin^2\theta.
\end{equation}
Subtracting the value at $\theta=0$ gives
\begin{equation}
    \Delta \tr(P_UM) = (\pi_r\lambda_r - \mu_i^{\Fre})\sin^2\theta.
\end{equation}
Since $L = \tr(M)-\tr(P_UM)$, the loss change is
\begin{equation}
    \Delta L = (\mu_i^{\Fre} - \pi_r\lambda_r)\sin^2\theta.
\end{equation}
Hence perturbations toward $b_r$ decrease the loss if and only if $\pi_r\lambda_r > \mu_i^{\Fre}$.
The rare feature invades first through the weakest occupied common direction, which has curvature $\mu_N^{\Fre}$, proving the stated threshold.

\subsection{Microscopic competition in a one-neuron, two-task model}
\label{app:C-neuron-specialization}

\begin{figure}[h]
  \centering
  \includegraphics[width=\linewidth]{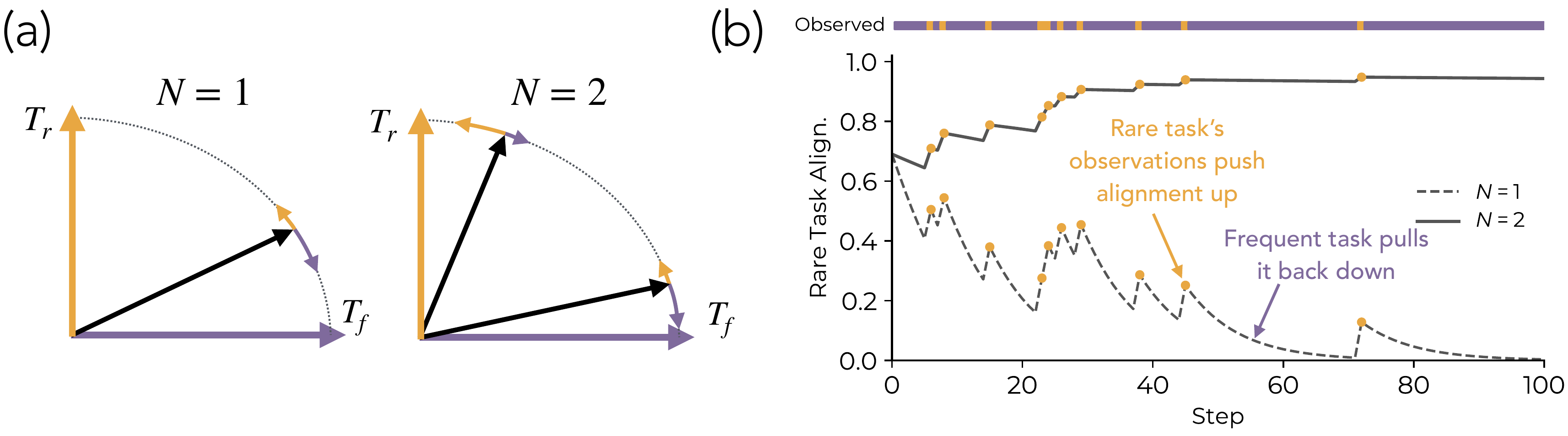}
  \caption{\textbf{Competition Dynamics over Neurons.} Rare task alignment over training for a softmax-gated model of 1 vs.\ 2 neurons. (a) Two orthogonal task directions $T_f$ (frequent, sampled with probability 0.9) and $T_r$ (rare, probability 0.1) compete for neurons. (b) With a single neuron, the frequent task dominates; with two neurons, one neuron specializes to each task, allowing rare task alignment to reach and sustain values near 1.}
  \label{fig:single_neuron}
\end{figure}

\begin{example}[One neuron, two orthogonal tasks]
\label{ex:one-neuron}
Let $a,b\in\R^d$ be orthonormal and consider rank-one tasks with covariances $C_a=aa^\top$ and $C_b=bb^\top$.
A width-1 encoder is a unit vector
\begin{equation}
    u = \cos\theta\,a + \sin\theta\,b.
\end{equation}
The task losses are
\begin{equation}
    \ell_a(u)=\sin^2\theta,
    \qquad
    \ell_b(u)=\cos^2\theta.
\end{equation}
A gradient step on task $a$ obeys $\theta^+=\theta-\eta\sin(2\theta)+O(\eta^2)$, while a step on task $b$ obeys $\theta^+=\theta+\eta\sin(2\theta)+O(\eta^2)$.
If task $a$ appears with probability $p$ and task $b$ with probability $q<p$, then
\begin{equation}
    \E[\Delta\theta\mid \theta] = \eta(q-p)\sin(2\theta)+O(\eta^2),
\end{equation}
which drives the neuron toward the common task.
Near $\theta=0$, if a rare-task update is followed by $G$ common-task updates, then
\begin{equation}
    \theta_G \approx (1-2\eta)^G\theta_0 \approx e^{-2\eta G}\theta_0.
\end{equation}
Thus rare-task alignment decays exponentially across the gap between rare observations.
\end{example}

\begin{proof}
The loss identities follow from $u^\top aa^\top u = \cos^2\theta$ and $u^\top bb^\top u = \sin^2\theta$.
Differentiating yields
\begin{equation}
    \frac{d}{d\theta}\sin^2\theta = \sin(2\theta),
    \qquad
    \frac{d}{d\theta}\cos^2\theta = -\sin(2\theta),
\end{equation}
which gives the stated updates under gradient descent.
Taking the expectation under the task mixture yields the drift formula.
Linearizing $\sin(2\theta)\approx 2\theta$ near zero gives the exponential decay estimate.
\end{proof}

The dynamics posited above are also exemplified in Fig.~\ref{fig:single_neuron}.

\clearpage
\section{Further Experimental Results: Complexity Sweeps}
\label{app:complexity_sweep}

In the main paper, we kept ``complexity'', i.e., the number of directions used for defining the target variable constant across tasks; specifically, tasks in the main paper require $5$ directions to cover $90$\% of the energy in the task spectrum (i.e., solving for $r$ in ${\arg\min}_{r} \frac{\sum_{j=1}^{j={r}} \lambda_{k,j}}{\sum_j \lambda_{k,j}} > 0.90$ gives $r=5$).
In this section, we vary this property by changing the power-law coefficient underlying the task spectrum, i.e., since $\lambda_{k,j} \propto j^{-\alpha_k}$, we vary the range of $\alpha_k$ across tasks.
We define ranges of $[\alpha_{\text{min}}, \alpha_{\text{max}}]$, split the range uniformly into $K$ values, and assign the $k^{\text{th}}$ value to $\alpha_k$.
The most frequent task is assigned the value $\alpha_{\text{max}}$, giving it the fastest decaying spectrum and hence making it the simplest task, while the rarest task is assigned the value $\alpha_{\text{min}}$, giving it the slower decaying spectrum and making it most complex.
In particular, we choose ranges (see Fig.~\ref{fig:complexity_spectra}) such that the task complexity varies between $[4,7]$ and $[2, 12]$ across $K=32$ tasks.

\begin{figure}[h]
  \centering
  \includegraphics[width=\linewidth]{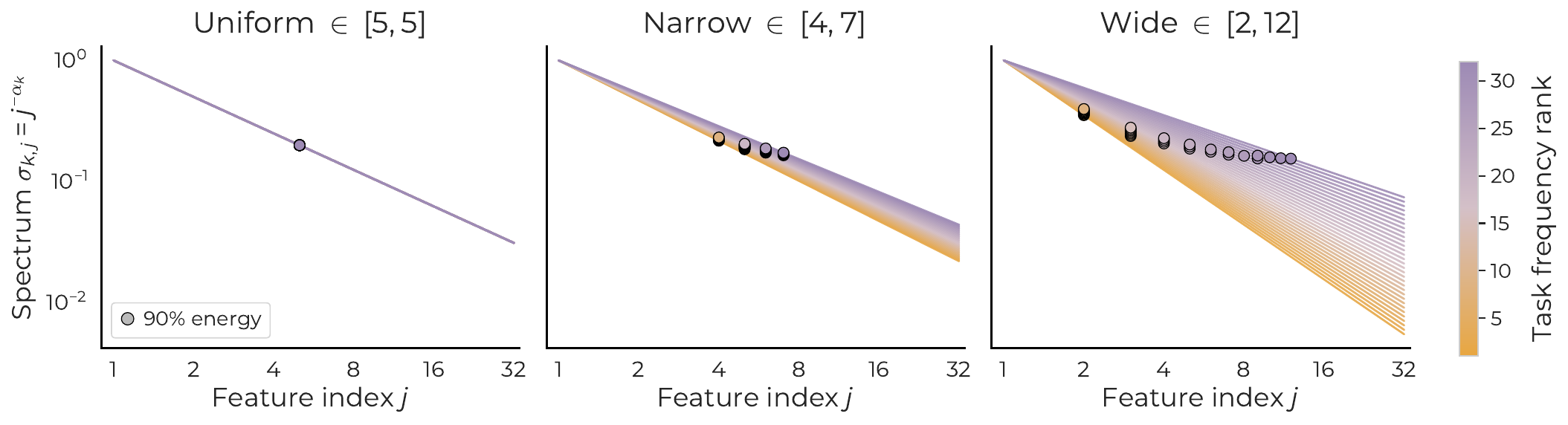}
  \caption{\textbf{Task Spectra.} We use power-law task spectra to vary the complexity of a task in our experiments, i.e., the $j^\text{th}$ feature contributes signal proportional to $j^{-\alpha_k}$ for the $k^{\text{th}}$ task. While the main paper studies the setting with uniform values for $\alpha_k$, hence making frequency the core knob for varying utility, we now vary complexity by splitting a range of $\alpha$ values; this results in task spectra such that the number of directions to cover 90\% of task signal now takes $4$--$7$ directions for the ``narrow'' range scenario, while $2$--$12$ directions for the wider range scenario.}
  \label{fig:complexity_spectra}
\end{figure}

\clearpage
\subsection{Feature Utility Predicts Learning Order}
We first reproduce Fig.~\ref{fig:phases}.
We split the figure into two parts, showing the critical width boundary as a function of task frequency in heatmaps in Fig.~\ref{fig:complexity_phases} and the per-task loss predictability based on feature utilities in Fig.~\ref{fig:complexity_utility}; for reference, we include our baseline results from the main paper, where the task spectra were uniform.

\begin{figure}[h]
  \centering
  \includegraphics[width=0.9\linewidth]{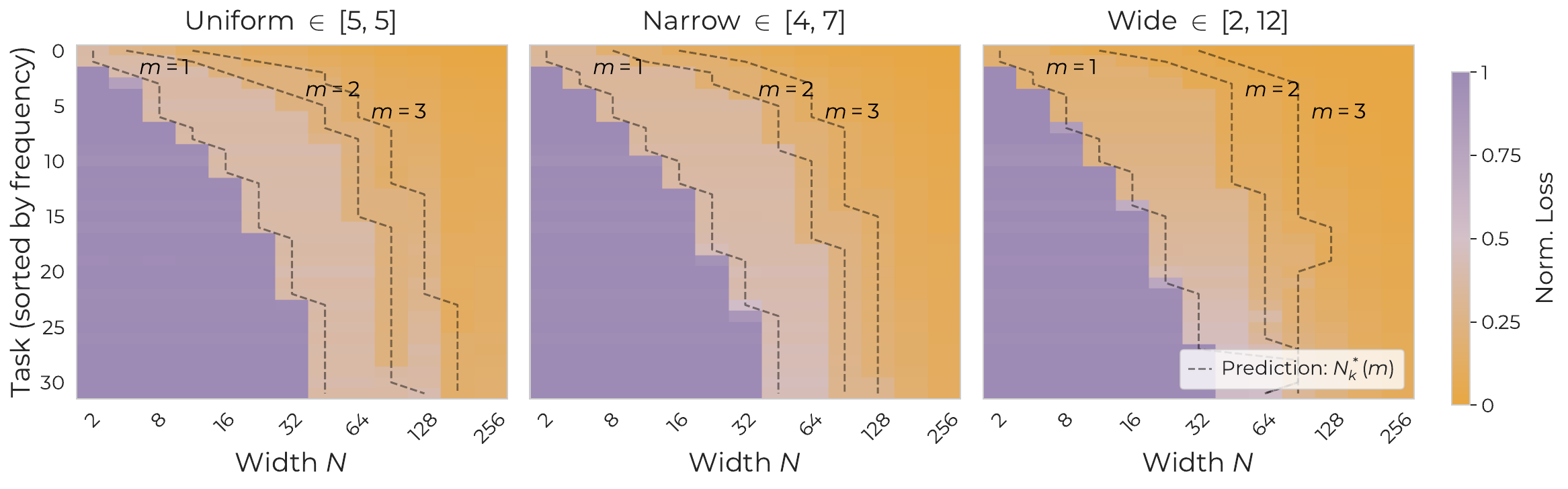}
  \caption{\textbf{Learning Phases Under Varying Complexity.} Reproduction of Fig.~\ref{fig:phases}a under varying task spectra. We see increase in the complexity gap leads to higher emphasis on the top two modes' learning, since under a power-law spectrum decay, the eigenvalue associated with larger modes will be small. More critically, learning order is now not monotonically predicted by frequency alone: this is most easily visible in the results for wide complexity range scenario, where we see the ``most complex'' task's third mode is in fact high enough utility to get learned before more frequent task's higher order modes, resulting in a non-monotonic boundary.}
  \label{fig:complexity_phases}
\end{figure}

\begin{figure}[h]
  \centering
  \includegraphics[width=0.9\linewidth]{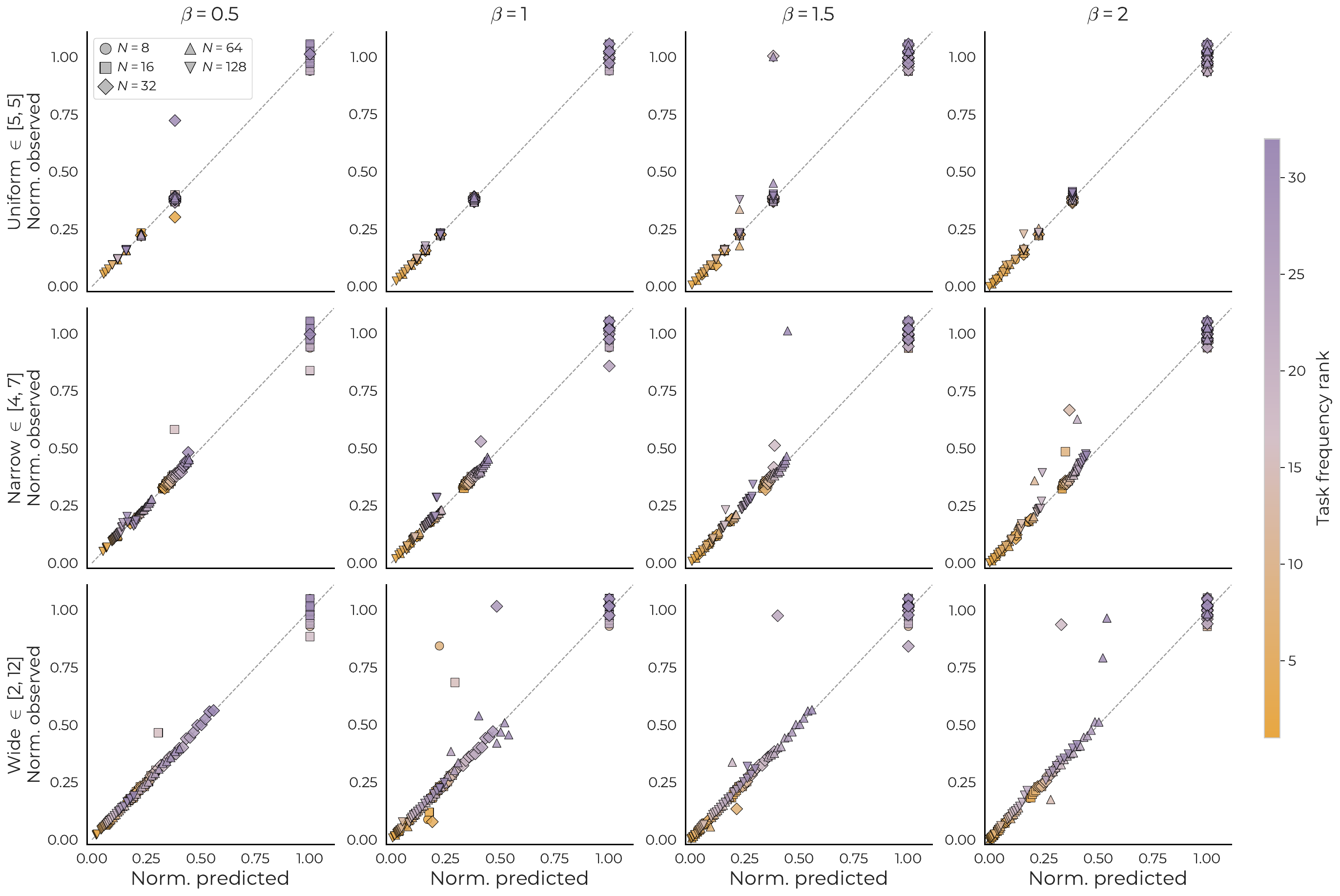}
  \caption{\textbf{Feature Utilities Continue to Predict Learning.} Reproduction of Fig.~\ref{fig:phases}b under varying task spectra. While Fig.~\ref{fig:complexity_phases} shows frequency, by itself, is insufficient to predict learning of a task, the current plot shows the empirically observed loss and the loss derived out of the assumption that $N$ neurons will learn top $N$ utility features continues to align well. This confirms the learning dynamic in the non-uniform complexity scenarios requires accounting for both frequency and complexity: higher-frequency tasks \textit{may be learned after} a lower-frequency task if the complexity of the former is more than the latter.}
  \label{fig:complexity_utility}
\end{figure}

\clearpage
\subsection{Competition Dynamics Disallow Learning of the Rarest and the Most Complex Task}
Our results above showed that learning trends under varying task complexity are modulated by both task frequency and complexity, as predicted by our account in Sec.~\ref{sec:synthetic}.
We now show the competition dynamics picture posited in that section continues to follow in these settings as well.
In particular, we plot the learning of the top-3 most frequent tasks (measured by normalized signal; see Sec.~\ref{sec:synthetic} for details) and the rare-most task as a function of residual, i.e., signal remaining to be explained in the frequent tasks.
As shown in Fig.~\ref{fig:complexity_residual}, the critical width predicted to be necessary for learning of the rarest task, by rendering the residual sufficiently small for most frequent tasks, continues to hold true in this setup as well.

\begin{figure}[h]
  \centering
  \includegraphics[width=\linewidth]{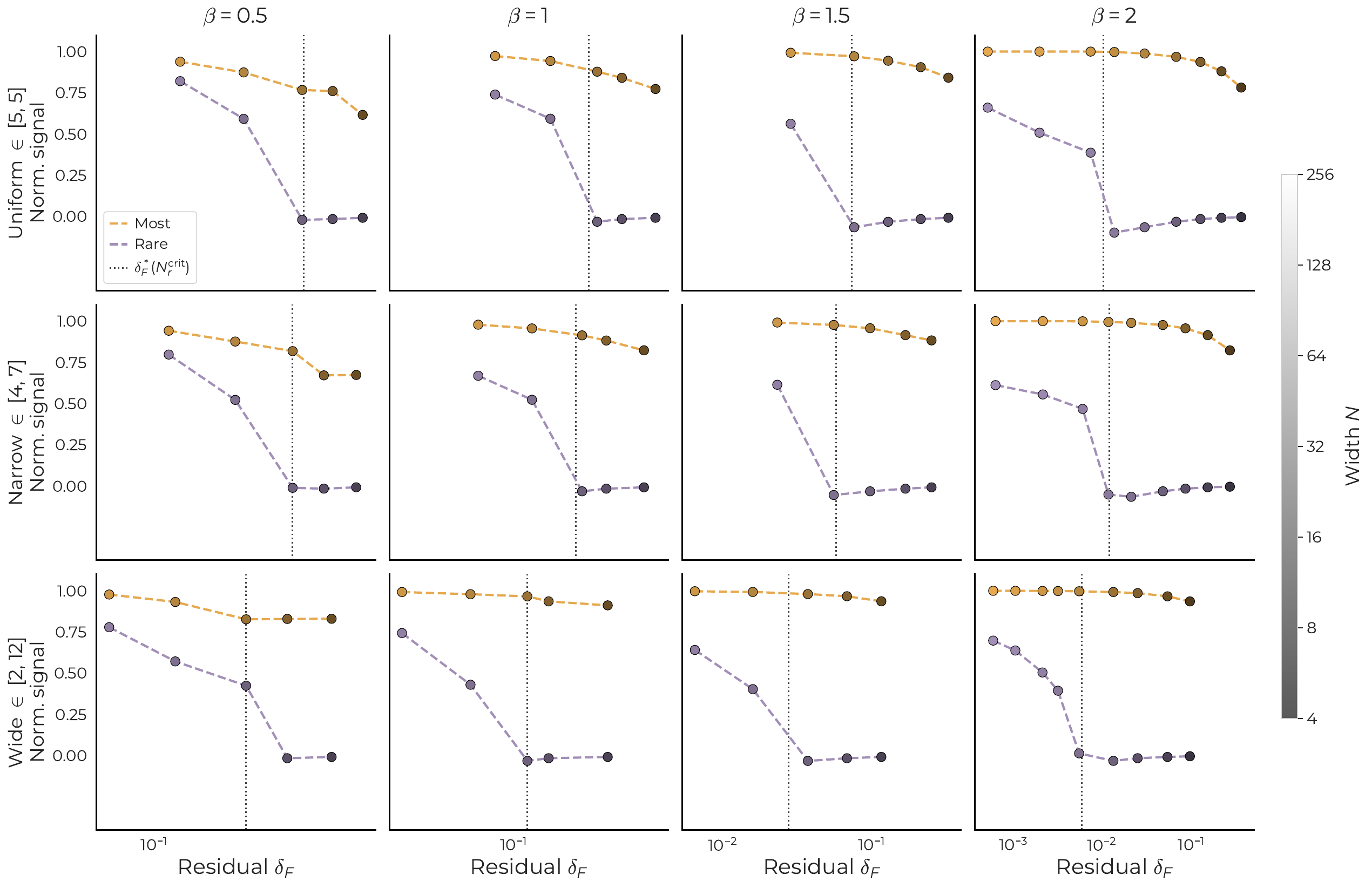}
  \caption{\textbf{Complexity Residual.} We plot the amount of signal encoded in model representation for most frequent and rarest tasks as a function of width $N$ and remaining residual $\delta_\Fre$. In line with our predictions, we see larger models perfectly capture tasks of all frequencies, while smaller models do not. Meanwhile, even for the largest models, when the residual signal remaining to explain for frequent tasks is high, rarer tasks struggle to be learned.}
  \label{fig:complexity_residual}
\end{figure}

\subsection{Reduced Interference Aids Learning of the Rarest and the Most Complex Task}

We now validate our argument for how data-centric bottlenecks, i.e., the low-frequency and high-complexity nature of a task, is circumvented by a larger model: by virtue of having more parameters, a larger model witnesses reduced interference over per-task gradients.
To this end, we redo the batch-injection experiments from Fig.~\ref{fig:competition} and plot the retention dynamics for the lowest aggregate utility task across settings.
Results are shown in Fig.~\ref{fig:complexity_retention}.
We see similar results as before: larger models show better retention of observed signal from a task, allowing them to bootstrap on these past observations and eventually learn the task.
Meanwhile, a medium width model is able to do so only when the task is observed sufficiently frequently, i.e., the gap is low.
Comparing with the case when the width is too low, we see the model never learns the task and the retention dynamics concretely show why: the model is unable to retain signal for the observed task for long enough.

\begin{figure}[h]
  \centering
  \includegraphics[width=\linewidth]{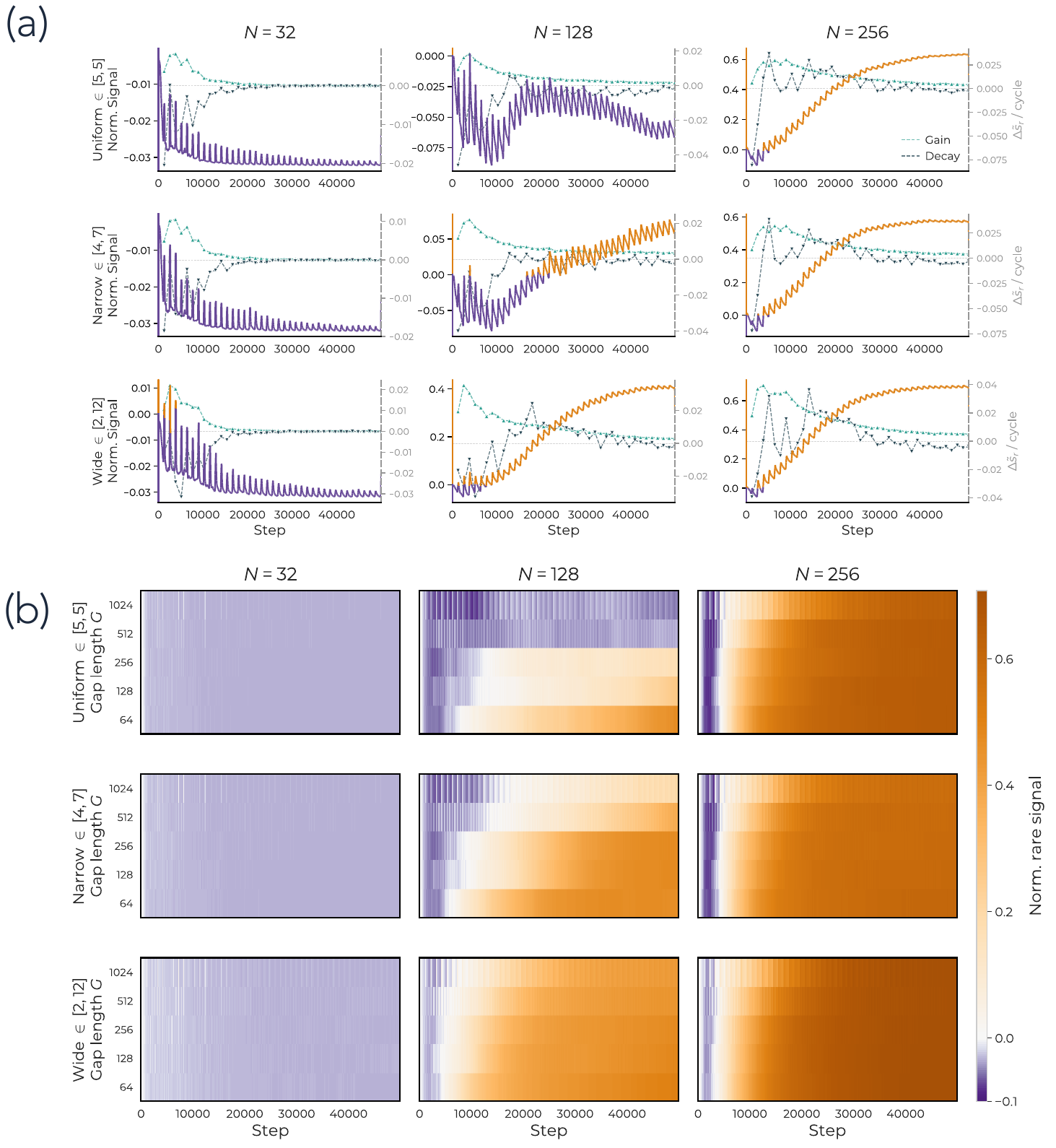}
  \caption{\textbf{Complexity Retention Phases.} We isolate retention by training with a matched-frequency injection protocol: the lowest-total utility task is withheld for $G$ steps and then reintroduced in a batch such that its overall frequency is consistent across settings. (a) Training dynamics for $G=1280$. We see small models briefly encode the rare task (Norm.\ signal $\tilde{s_r}$: left-y axis) after each injection; specifically, $\Delta \tilde{s_r}$ increases at point of injection, as shown by green dotted line (`gain'). However, as frequent-task updates resume, this signal is quickly lost (`decay': gray dotted line). Meanwhile, larger models retain more of the rare-task signal between injections and accumulate it over training. (b) Across injection gaps G and widths N, rare-task signal decays rapidly in narrow models but remains stable in wider models, while frequent-task signal is largely unaffected. These results support the reduced-interference mechanism: scaling provides enough representational capacity that updates from frequent tasks no longer overwrite rare-task features before the next rare observation arrives.}
  \label{fig:complexity_retention}
\end{figure}

\clearpage
\section{Further Experimental Results: Frequency Sweeps}
\label{app:frequency_sweeps}

\subsection{Features and Tasks are Learned in Order of Utility}

\subsubsection{Extending Phase Diagram}
\label{app:E-phase-diagram-betas}

We sweep the power-law exponent $\beta$ defining the task prior by varying it over $\{0.5, 1.0, 1.5, 2.0\}$.
Unlike Fig.~\ref{fig:phases}, we only sweep $5$ values of widths---specifically,
$N \in \{8, 16, 32, 64, 128\}$. 
Correspondingly, the staircase is rendered at lower resolution. Results are shown in Fig.~\ref{fig:E-phase-diagram-betas}.

\begin{figure}[h]
    \centering
    \includegraphics[width=\linewidth]{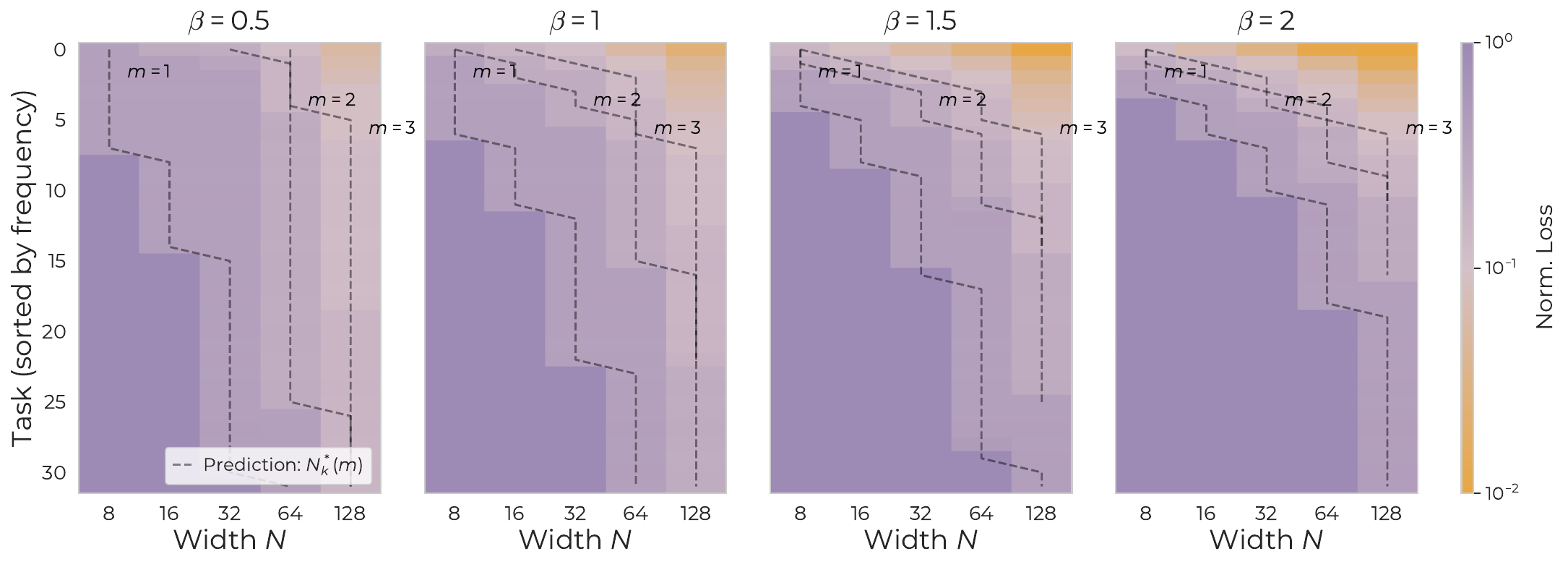}
    \caption{\textbf{Feature Utility Predicts Order of Learning.} 
    We extend results from Fig.~\ref{fig:phases} by analyzing values $\beta \in \{0.5, 1.0, 1.5, 2.0\}$.
    Each panel reports the normalized per-task loss, i.e., $\ell_k(N) / \ell_{k,\mathrm{baseline}}$ as a function of width $N$. 
    Tasks are sorted top-to-bottom by descending prior frequency so the most-frequent task occupies the top row.
    Dashed staircases are the theoretical thresholds $N_k^*(m)$ for $m = 1, 2, 3$ computed from the per-direction utility ordering of Theorem~\ref{thm:capacity}. 
    The empirical learned region (orange) tracks the $m = 1$ staircase across all four prior skews; deeper-orange cells in the steeper-prior panels ($\beta = 1.5, 2$) reflect the model spending its width budget on additional directions of the leading tasks rather than on rarer tasks, in agreement with the account posited in the main paper for how scaling interacts with data properties.
    }
    \label{fig:E-phase-diagram-betas}
\end{figure}

\subsubsection{Simplified Case: Rank-1 Tasks}
\label{app:E-staircase}

\begin{figure}[h]
  \centering
  \includegraphics[width=\linewidth]{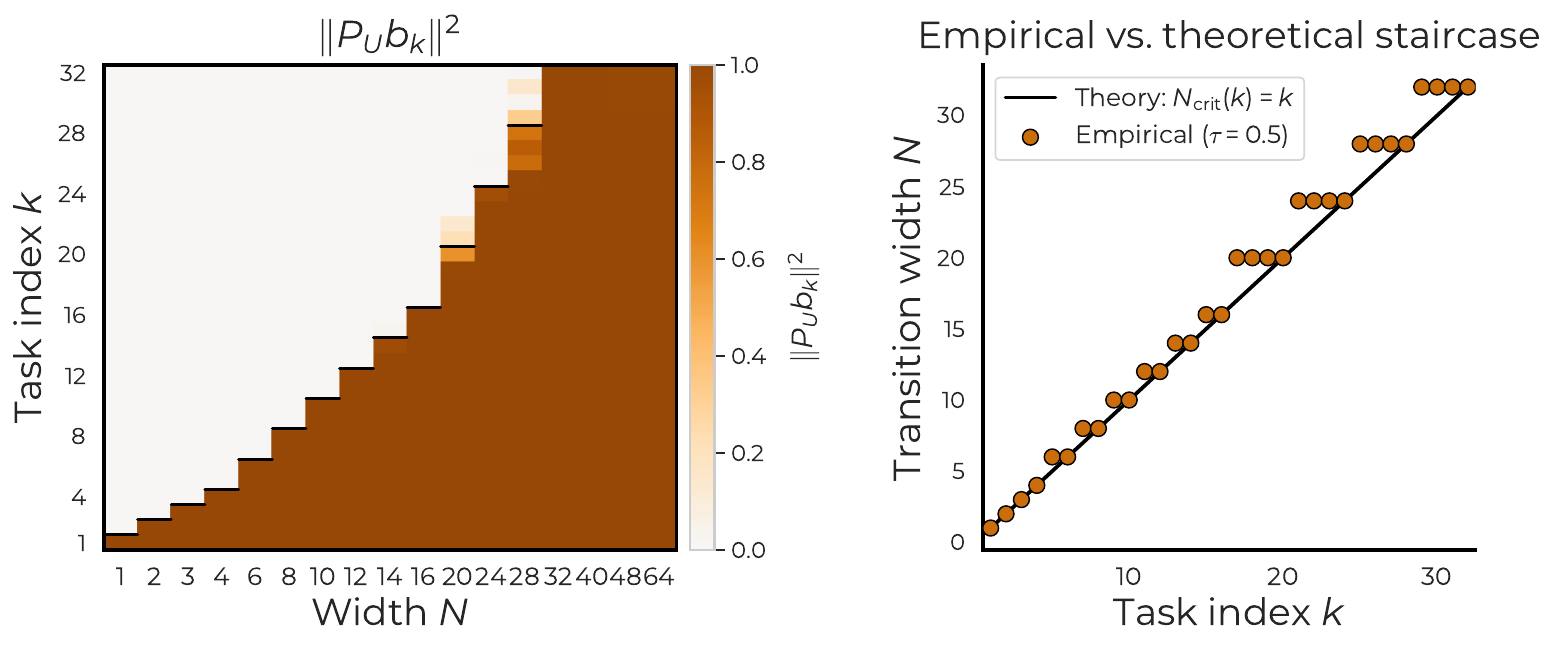}
  \caption{\textbf{Rank-1 Verification of Utility Predicting Learning Order.} \textbf{Left:} Per-task subspace alignment \(\|P_U b_k\|^2\) at the end of training as a function of width \(N\) and task index \(k\). 
  By our account, we expect tasks \(1\dots N\) to be retained, while tasks \(N+1\dots K\) are not  retained. 
  The black step segments in the heatmap mark the predicted retention horizon \(k = N\) per width. 
  Results align well with our expectations.
  \textbf{Right:} Empirical transition width \(N_{\mathrm{emp}}(k)\) (markers) sits on the theoretical staircase \(N_{\mathrm{crit}}(k) = k\) (line) within the resolution of the width sweep. 
  Plateaus at \(k = 5, 7, 9, \dots\) reflect the gap in sampled widths between adjacent grid points and are not deviations from theory.
  }
  \label{fig:E-Ncrit-staircase}
\end{figure}

Theorem~\ref{thm:capacity} predicts that a width-\(N\) minimizer retains the \(N\) task-features with largest utility $u_{kj} = \pi_k\,\lambda_{kj}$.
We obtain a sharp quantitative test of this claim by collapsing the orthogonal-block setup to its rank-1 specialization: setting \(d_T = 1\) and \(\alpha_k = 1\) makes every task rank-1 with \(\lambda_k = 1\), so the utility ordering reduces to the prior ordering, and the predicted critical width for task \(k\) becomes
\begin{equation}
  N_{\mathrm{crit}}(k)
  \;=\; \#\{j \neq k : \pi_j\lambda_j > \pi_k\lambda_k\}
  \;=\; k,
  \label{eq:E-staircase-prediction}
\end{equation}
i.e., a perfectly linear staircase in task index.

\paragraph{Setup.} We train the linear-bottleneck student described in Sec.~\ref{sec:synthetic} on a mixture of \(K = 32\) rank-1 orthogonal tasks, ambient dimension $D = 1024$, and a power-law prior with exponent $\beta = 2$. 
We sweep the encoder width \(N \in \{1, 2, 3, 4, 6, 8, 10, 12, 14, 16, 20, 24, 28, 32, 40, 48, 64\}\) and read out the per-task subspace alignment \(s_k(U) = \|P_U b_k\|^2\) at the end of training (rank-1 specialization of the per-task signal of Sec.~\ref{sec:synthetic}, so \(s_k(U)\) lies between \(N/D\) at random initialization and \(1\) when \(b_k\) is fully captured by the encoder subspace).

\paragraph{Result.} See Figure~\ref{fig:E-Ncrit-staircase}. We overlay the empirical transition width \(N_{\mathrm{emp}}(k) = \min\{N \in \text{grid} : s_k(U) > 0.5\}\) on the theoretical staircase~\eqref{eq:E-staircase-prediction}, finding almost perfect alignment (minimal disparities are an artifact of the sampled width grid).

\subsection{Residual Controls Learning}
\label{app:E-residual-scatter-betas}

\begin{figure}[h]
  \centering
  \includegraphics[width=\linewidth]{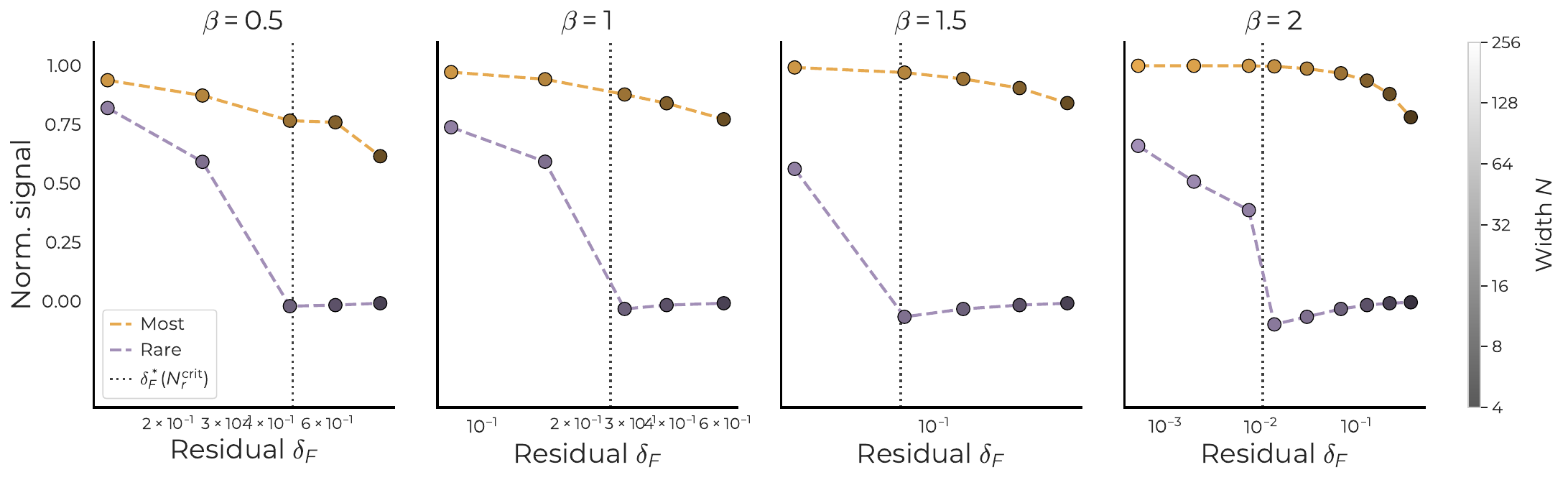}
  \caption{\textbf{Residual Controls Rare-Task Learning}. We vary $\beta \in \{0.5, 1.0, 1.5, 2.0\}$. 
  Each panel reports the normalized rare-task and most-frequent-task signal as a function of the frequent-task residual \(\delta_F\), with width \(N\) encoded by marker brightness (dark = small \(N\), bright = large \(N\); see grayscale colorbar on the right). 
  Dashed vertical line marks the analytic threshold \(\delta^*_F(N_r^{\rm crit})\) computed from Corollary~\ref{cor:finite-rank-main} under that panel's $\beta$.
  The two-phase dynamic predicted by Theorem~\ref{thm:interference} and seen in Fig.~\ref{fig:residual_controls_learning} is preserved across all four prior skews; the shift in the threshold's location with $\beta$ is exactly what theory predicts. 
  }
  \label{fig:E-residual-scatter-betas}
\end{figure}

We next provide further validation for results in Fig.~\ref{fig:residual_controls_learning} by repeating experiments across different values of exponents in the power-law task prior. 
Specifically we vary $\beta$ from the set $\{0.5, 1.0, 1.5, 2.0\}$.
For each run we compute the per-task signal $s_k(U) = \mathrm{Tr}(P_U C_k) / \mathrm{Tr}(C_k)$, i.e., how well the model representation encodes information about the $k^{\text{th}}$ task, and the residual $\delta_F(U) = \sum_{k \in F} \pi_k\,(1 - s_k(U))\,\|a_k\|^2$ from the final checkpoint.
The frequent set $F$ is defined as the smallest set of tasks whose cumulative mass meets $0.8$; under \(\beta \in \{0.5, 1, 1.5, 2\}\) this
yields $|F| = \{6, 3, 2, 2\}$ respectively, reflecting how a
flatter prior spreads the loss budget across more frequent tasks.
Results are reported in Figure~\ref{fig:E-residual-scatter-betas}.
We see a precise kink similar to Fig.~\ref{fig:residual_controls_learning} when rare-task signal drops to zero once $\delta_F$ exceeds the analytic threshold \(\delta^*_F(N_r^{\rm crit})\), and rises steeply to near-unity once \(\delta_F\) falls below it. 
We also see the threshold itself shifts left as $\beta$ grows: a steeper prior makes the rare task's leading utility \(\pi_r \lambda_r\) much smaller, so a smaller residual is required to ``free up'' encoder directions that can then capture the rare task.

\subsection{Per-gap dynamics: Reproducing retention results across different injection gaps and widths}
\label{app:E-lines-multi-gap}

\begin{figure}[h]
  \centering
  \includegraphics[width=\linewidth]{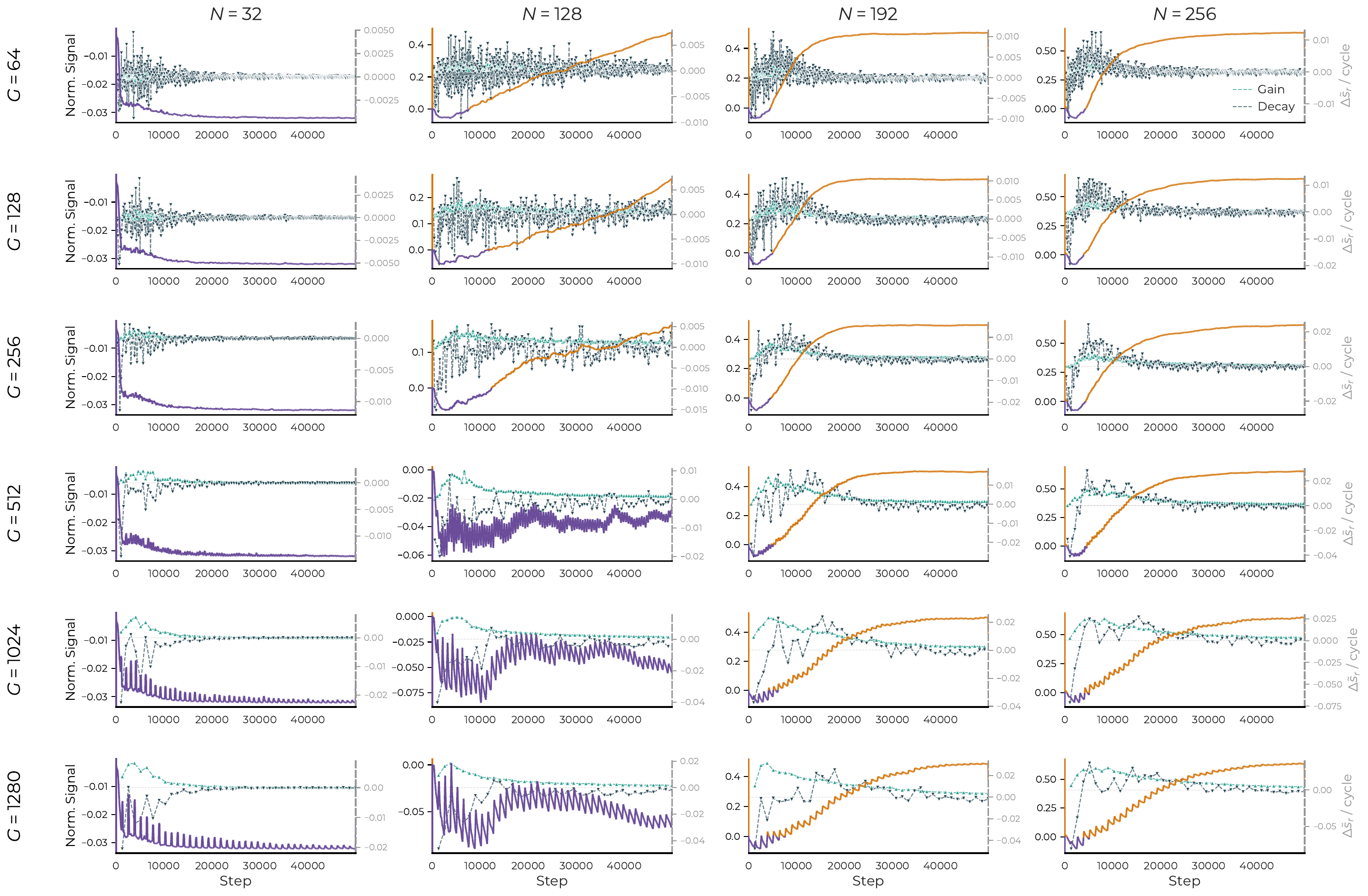}
  \caption{\textbf{Per-gap dynamics: Reproducing retention results across different injection gaps and widths.} 
  We vary the injection gap $G$ in the set $\{64, 128, 256, 512, 1024, 1280\}$ (top to bottom) and reproduce the results shown in Fig.~\ref{fig:competition}a for widths $N \in \{32, 96, 128, 192, 256\}$. 
  In each cell, the left y-axis reports the normalized rare-task signal \(\tilde s_r(U_t)\), while the right y-axis (gray) is the gain / decay curves reporting how much the signal for rare task grows vs.\ decays as a function of time. 
  We see analogous results as the main paper: larger models retain and preserve the learned signal, while smaller models require the gaps to be sufficiently small if learning is to occur at all.}
  \label{fig:E-lines-multi-gap}
\end{figure}

We reproduce results from Fig.~\ref{fig:competition}a by reporting the joint dynamics of the normalized rare-task signal \(\tilde s_r(U_t)\) and its gain / decay dynamics as a function of rare-task injection events, as shown in Fig.~\ref{fig:E-lines-multi-gap}.
Similar to results seen in the main paper, we find larger models retain and preserve the learned signal, while smaller models require the gaps to be sufficiently small if learning is to occur at all.

\subsection{Effects of Scaling Data: Learning Bottleneck Persists at Long Training Horizon}
\label{app:E-long-horizon}

In the main paper, especially Sec.~\ref{sec:categorization}, we distinguish between finite vs.\ asymptotic training.
However, most of our training runs use a budget of $100$K training iterations.
To contextualize that this budget is sufficient for the claims made in the paper, we extend training runs to $1$M steps for $6$ values of model widths $N = \{8, 16, 32, 64, 128, 256\}$, subsampling the range of widths analyzed in the main paper; the setup remains the same otherwise as Fig.~\ref{fig:phases}.
We find that results (see Fig.~\ref{fig:E-long-horizon-phase}) are stable at \emph{much} longer horizons: above-capacity tasks do not slowly close the gap given more training; instead they remain at or below the random-projection baseline indefinitely.

\begin{figure}[ht]
  \centering
  \includegraphics[width=\linewidth]{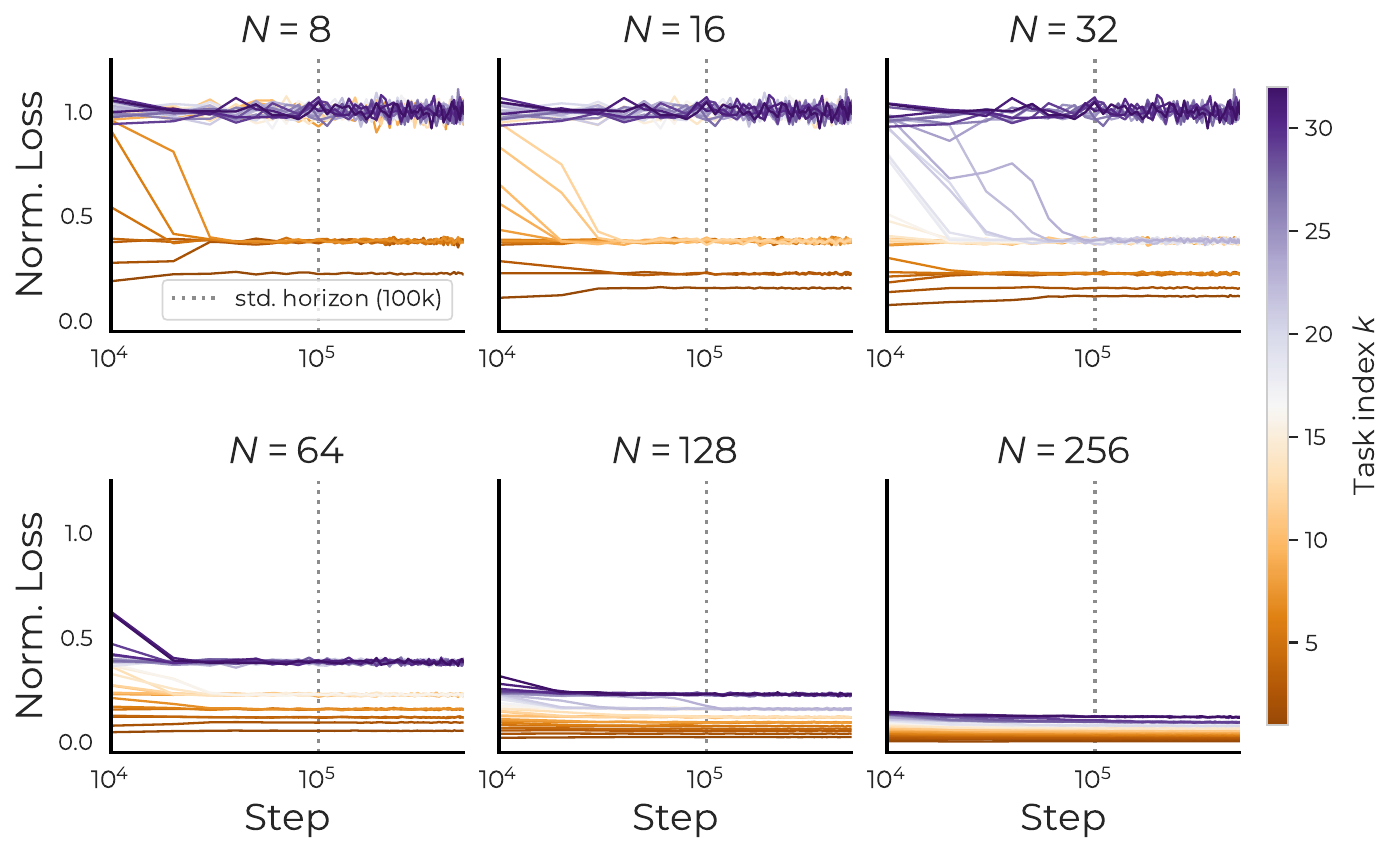}
  \caption{\textbf{Persistence of the multi-rank phase diagram at $1$M steps.} Per-task normalized loss \(\ell_k / \ell_{k,\mathrm{baseline}}\) versus training step (log-x, linear-y) for six widths $N \in \{8, 16, 32, 64, 128, 256\}$; \(\ell_{k,\mathrm{baseline}} = \|a_k\|^2 / D_t\) is the mean-predictor MSE per task. 
  Tasks colored by index from orange ($k = 1$, most frequent) to purple (\(k = 32\), rarest); vertical dotted line marks the training budget used in main paper, i.e., $100$K steps. 
  We clearly see that at every width, tasks that can fit model capacity (top-by-utility) drop near zero by the standard horizon and stay there; above-capacity tasks remain near the mean-predictor baseline and do not bend downward across longer training.
  }
  \label{fig:E-long-horizon-phase}
\end{figure}

\clearpage
\section{Further Experimental Results in OLMo Setting}
\label{app:scaling_olmo}

\subsection{Task Loss vs. General Language Modeling Loss}
\begin{figure}[!h]
    \centering
        \centering
        \includegraphics[
            width=0.5\linewidth,
            trim={0 0 0 0}, %
            clip
        ]{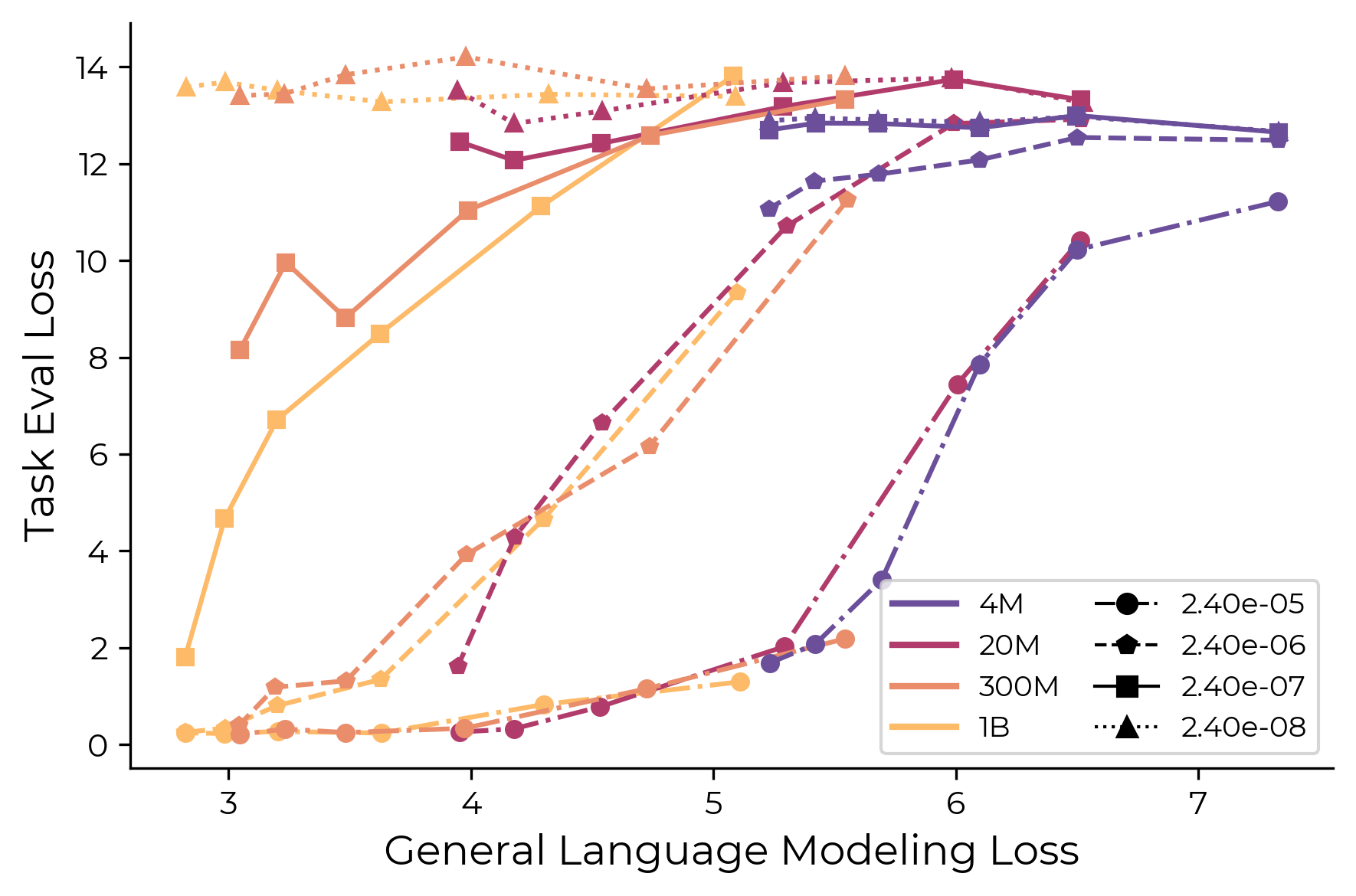}
    \caption{Task loss vs. general language modeling loss.}
    \label{fig:olmo_task_vs_lm_loss}
\end{figure}

\paragraph{Given the same language modeling loss, larger models can achieve lower task-specific loss.} We analyze the relationship between learning ``frequent tasks'' and  learning the injected task. For evaluating ``frequent tasks'', we measure the language modeling loss on the C4 validation set using a context window size of 256. As shown in Fig.~\ref{fig:olmo_task_vs_lm_loss}, when the task frequency is relatively high, i.e., from $2.4\times10^{-6}$ to $2.4\times10^{-5}$, loss curves from all model sizes follow roughly the same trajectory. This suggests that in this frequency range, model size only improves sample efficiency, but these models still have similar training dynamics~\cite{vyas2023featurelearning}. However, when the task frequency gets lower, i.e., $2.4\times10^{-7}$, larger models achieve lower task loss given the same language modeling loss. Moreover, smaller models diverge from larger models one by one, with their injected-task loss  plateauing at different values. This supports our hypothesis that for rare tasks, larger models have different learning dynamics that unlock the ability to learn rare tasks. 

\subsection{Compute-optimal Comparison}
\begin{figure}[!h]
    \centering
        \centering
        \includegraphics[
            width=\linewidth,
            trim={0 0 0 30}, %
            clip
        ]{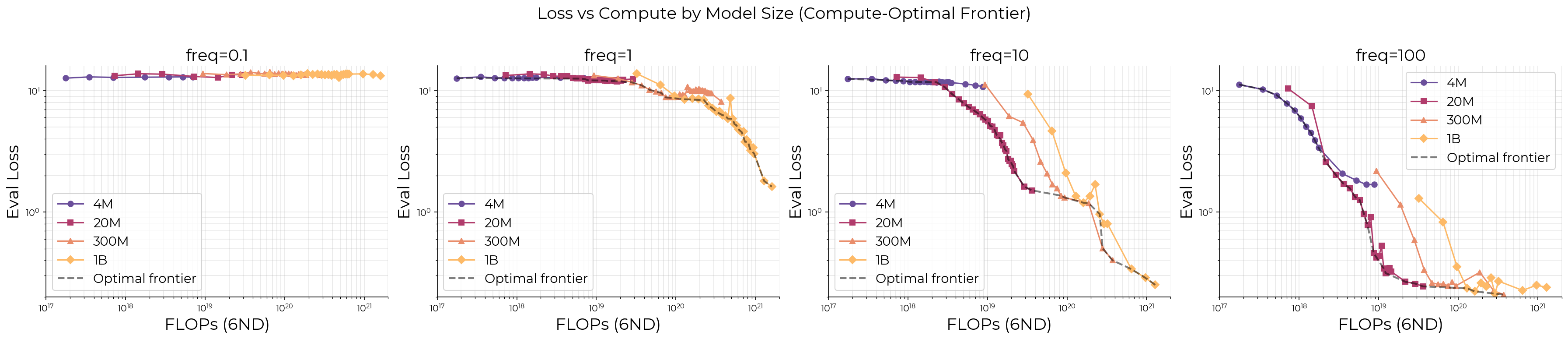}
    \caption{\tcmp\ task eval loss vs. compute by model size. Dashed black line shows the compute-optimal frontier.}
    \label{fig:olmo_compute_optimal}
\end{figure}

In Fig.~\ref{fig:olmo_freq_vs_model_size}, we compare models trained on the same amount of data. In Fig.~\ref{fig:olmo_compute_optimal}, we further show that larger models are more compute-efficient at learning low-frequency tasks. When the frequency is one task instance per batch, i.e., $2.4\times10^{-7}$, given the same compute budget, estimated as $6\times$ the number of model parameters $\times$ the number of training tokens following Chinchilla scaling laws, larger models achieve lower task loss. Moreover, consistent with our observation in Fig.~\ref{fig:olmo_task_vs_lm_loss}, smaller models initially follow the learning dynamics of larger models, but after a certain point their injected-task loss curves plateau and deviate from the larger models' curves.

\end{document}